\documentclass[sigconf]{acmart}
\AtBeginDocument{%
  }

\copyrightyear{2026}
\acmYear{2026}
\setcopyright{cc}
\setcctype{by}
\acmConference[HCOMP 2026]{2026 ACM Conference on Human-AI Complementarity and Alignment}{September 27--30, 2026}{Alexandria, VA, USA}
\acmBooktitle{2026 ACM Conference on Human-AI Complementarity and Alignment (HCOMP 2026), September 27--30, 2026, Alexandria, VA, USA}
\acmDOI{10.1145/3834580.3838745}
\acmISBN{979-8-4007-2894-5/2026/09}

\acmSubmissionID{77}

\usepackage{subcaption}
\usepackage{colortbl}
\usepackage{mathtools}
\usepackage{makecell}
\usepackage{enumitem}
\usepackage{amsthm}
\usepackage{bm}
\usepackage{multirow}
\usepackage{cleveref}
\crefformat{footnote}{#2\footnotemark[#1]#3}

\usepackage{listings}
\DeclareCaptionStyle{ruled}{labelfont=normalfont,labelsep=colon,strut=off} 
\floatstyle{ruled}
\newfloat{listing}{tb}{lst}{}
\floatname{listing}{Listing}

\usepackage{fancybox}

\newenvironment{promptbox}[2]{%
  \begin{Sbox}%
  \begin{minipage}{\dimexpr#1-2\fboxsep-2\fboxrule\relax}%
    \setlength{\parindent}{0pt}%
    \vspace*{-\fboxsep}%
    \hspace*{-\fboxsep}%
    \makebox[0pt][l]{%
      \colorbox{promptboxtitlebg}{%
        \makebox[\linewidth][l]{\bfseries\footnotesize\strut #2}}}%
    \par\smallskip
}{%
  \end{minipage}%
  \end{Sbox}%
  \fbox{\TheSbox}%
}

\definecolor{promptboxtitlebg}{gray}{0.8}

\definecolor{dkgreen}{rgb}{0,0.6,0}
\definecolor{gray}{rgb}{0.5,0.5,0.5}
\definecolor{mauve}{rgb}{0.58,0,0.82}

\lstnewenvironment{pythontable}
  {\lstset{language=Python, frame=none,
  aboveskip=-1.5mm,
  belowskip=-2mm,
  columns=fullflexible,  
  basicstyle=\tiny\ttfamily,
    }%
  }
  {}

\newtheorem{definition}{Definition}
\newtheorem{theorem}{Theorem}

\newtheorem{lemma}{Lemma}
\newtheorem{claim}{Claim}

\makeatletter
\newcommand{\thickhline}{%
    \noalign {\ifnum 0=`}\fi \hrule height 1pt
    \futurelet \reserved@a \@xhline
}
\newcolumntype{"}{@{\hskip\tabcolsep\vrule width 1pt\hskip\tabcolsep}}
\makeatother

\makeatletter
\newcommand{\labelname}[1]{
  \def\@currentlabelname{#1}}%
\makeatother

\usepackage{microtype}
\newcommand{\doc}[1]{{\color{teal}{#1}}}

\usepackage{algorithm}
\usepackage[noend]{algorithmic}

\DeclareMathOperator*{\argmax}{argmax}

\DeclareCaptionFormat{cont}{#1 (cont.)#2\footnotesize#3\par}

\begin{document}

\title{Guiding Worker Self-Selection in Crowdsourcing Contests: An LLM-Augmented Algorithmic Approach}
\titlenote{Part of our work originally appeared in an extended abstract from AAMAS 2020 \citep{chan2020price}.}

\author{Nguyen Thach}
\correspondingauthor
\email{nate.thach@huskers.unl.edu}
\author{Hau Chan}
\email{hchan3@unl.edu}
\affiliation{%
  \institution{University of Nebraska-Lincoln}
  \city{Lincoln}
  \state{Nebraska}
  \country{USA}
}

\author{David Parkes}
\email{parkes@eecs.harvard.edu}
\author{Karim Lakhani}
\email{klakhani@g.harvard.edu}
\affiliation{%
  \institution{Harvard University}
  \city{Cambridge}
  \state{Massachusetts}
  \country{USA}
}





\renewcommand{\shortauthors}{Thach et al.}

\begin{abstract}
Crowdsourcing platforms coordinate large pools of online workers who compete in contests to produce solutions for clients, from logo design to machine-learning challenges. A defining feature of these platforms is that workers \emph{self-select}: Each chooses which contests to enter and how much effort to invest, and these choices are strategic rather than aligned with what the platform needs. As a result, the platform may end up with too few participants, or too little effort, on the contests where high-quality solutions matter most---while individual workers may end up dissatisfied, regretting choices that left them worse off than an alternative they could have taken. To reconcile these competing interests, platforms increasingly offer workers recommendations on where to compete. We study how to generate such recommendations as the game of \emph{self-selection in Tullock contests} (SSTC), a two-stage model in which workers first choose contests and then compete within them. We introduce GRAF, a greedy algorithmic framework that produces self-selection outcomes in polynomial time by ordering workers according to a score vector, with provable guarantees---zero worker regret and platform optimality---for special cases of SSTC. Because a good ordering is difficult to design by hand once workers are heterogeneous, we propose LLMScore, an LLM-driven evolutionary framework that automatically \emph{designs} the scoring algorithm GRAF uses. LLMScore addresses two obstacles to applying existing LLM-based methods here: jointly optimizing solution quality and worker satisfaction, and the intractability of evaluating worker regret. It is trained only on small instances of a single setting yet transfers to larger and structurally different settings, and---because its output is human-readable code---a platform operator can inspect and adjust the recommendation logic it learns. Across 1,000 synthetic instances spanning four settings, GRAF augmented by LLMScore consistently yields high-quality or even near-optimal outcomes with low worker regret, benefiting platform and workers alike.
\end{abstract}

\begin{CCSXML}
<ccs2012>
   <concept>
       <concept_id>10002951.10003260.10003282.10003296</concept_id>
       <concept_desc>Information systems~Crowdsourcing</concept_desc>
       <concept_significance>500</concept_significance>
       </concept>
   <concept>
       <concept_id>10003752.10010070.10010099</concept_id>
       <concept_desc>Theory of computation~Algorithmic game theory and mechanism design</concept_desc>
       <concept_significance>500</concept_significance>
       </concept>
   <concept>
       <concept_id>10010147.10010178.10010205.10010206</concept_id>
       <concept_desc>Computing methodologies~Heuristic function construction</concept_desc>
       <concept_significance>300</concept_significance>
       </concept>
   <concept>
       <concept_id>10003120.10003130</concept_id>
       <concept_desc>Human-centered computing~Collaborative and social computing</concept_desc>
       <concept_significance>100</concept_significance>
       </concept>
 </ccs2012>
\end{CCSXML}

\ccsdesc[500]{Information systems~Crowdsourcing}
\ccsdesc[500]{Theory of computation~Algorithmic game theory and mechanism design}
\ccsdesc[300]{Computing methodologies~Heuristic function construction}
\ccsdesc[100]{Human-centered computing~Collaborative and social computing}

\keywords{crowdsourcing, human computation, worker self-selection, task recommendation, contest theory, large language models, LLM-based algorithm design, interpretability}


\maketitle

\section{Introduction}

Crowdsourcing platforms such as \href{https://www.topcoder.com/}{Topcoder} and \href{https://www.freelancer.com/}{Freelancer} have become a central mechanism for completing work and solving problems at scale \citep{king2013using,lakhani2013prize,cricelli2022crowdsourcing}, letting clients obtain cost-effective solutions without hiring additional staff.
Such platforms organize a client's problem as an online \emph{contest}, where online workers, or \emph{contestants}, sign up to compete by submitting solutions. After the contest, a reward goes to the winner and the submitted solutions are returned to the client.
Contest problems range from simple logo and website designs to programming-intensive algorithmic and machine learning challenges.

What makes coordinating such a platform hard is that the workers are not assigned to tasks---they \emph{self-select}.
Each worker decides, on their own, which contest to enter and how much effort to invest, and may do so strategically to maximize their own payoff \citep{dipalantino2009crowdsourcing}.
The platform, by contrast, wants every contest to attract enough participants and effort to yield high-quality solutions.
These two interests pull apart: The distribution of effort that emerges from workers acting in their own interest can differ sharply from the distribution the platform would prefer, so the quality of solutions a client receives depends crucially on workers' strategic behavior.
Worse, that behavior may not even settle into a stable arrangement \citep{bernergard2017self}---an \emph{equilibrium} in which no worker wishes they had chosen differently.

Two quantities capture this tension in human terms.
From the platform's side, the \emph{overall effort} across contests is a natural proxy for the quantity and quality of solutions produced, since even non-winning submissions have value (e.g., multiple classifiers can be combined into a stronger ensemble) \citep{lakhani2013prize}.
From the worker's side, \emph{regret} measures dissatisfaction: how much utility a worker loses by entering the contest they did, relative to the best contest they could have entered given everyone else's choices.
A worker with high regret is one who, after the fact, wishes the platform had steered them elsewhere---and such workers are precisely those a platform risks losing.
Because workers are not under contract, the platform's only lever is advice: recommendations on which contest(s) to join \citep{difallah2013pick,king2013using}.
The goal is recommendations that are simultaneously near-optimal for the platform and low-regret for workers, so that both parties benefit.

We formalize this as {\em self-selection in Tullock contests} (SSTC): a two-stage game in which workers first choose which contests to enter, then compete within each contest, with the probability of winning increasing in one's own effort relative to others'.
The Tullock model is the standard formalism for such effort-based competition.
This leads to the question we address.

\begin{quote}
 \emph{Can we design algorithms for general SSTC that generate near-optimal self-selection outcomes with low dissatisfaction from workers?}
\end{quote}

\paragraph{Existing Efforts and Their Limits.}
Prior work \citep{bernergard2017self,cohen2026algorithmic} studies special cases of SSTC and offers partial equilibrium characterizations of self-selection outcomes.
However, these results rest on restrictive assumptions---workers share identical valuations, identical production capabilities and effort costs, and may select only a single contest---that strip out the heterogeneity of real worker populations and offer no practical, deployable recommendation procedure.

\paragraph{Our Approach.}
We first propose GRAF, a greedy algorithmic framework that orders workers according to a score vector $\sigma$ and then iteratively assigns each to a contest and determines their effort, conditioned on the choices already made.
The difficulty is the ordering: When workers differ in their valuations, production efficiency, and effort costs, choosing a good score vector by hand is far from obvious, which motivates a data-driven approach to designing $\sigma$ automatically.

Large language models (LLMs) have shown strong capability in algorithmic design \citep{liu2024systematic} and programming \citep{jiang2024survey}.
In particular, embedding an LLM in an evolutionary search---maintaining a population of candidate heuristics and repeatedly prompting the model to improve on the best ones---has produced state-of-the-art results on combinatorial problems such as traveling salesman and knapsack problems \citep{romera2024mathematical,liu2024evolution,zheng2025monte}.
Because these methods excel precisely at designing priority scores for greedy procedures (e.g., selecting the next node to add in TSP tour construction), they are a promising way to design the ordering $\sigma$ that GRAF needs.

Two obstacles stand in the way of applying them directly.
(Challenge~I) Existing methods target objectives with a single, well-behaved notion of solution quality (e.g., tour length and total weight), which makes their ``fitness function'' easy to define.
Our setting has two objectives---\emph{overall effort} and worker \emph{regret}---whose scales vary enormously across instances\footnote{In general, regret is unbounded.}, so a fitness function must capture both at once.
(Challenge~II) Existing methods assume cheap-to-evaluate objectives, but computing worker regret is intractable \citep{aissi2009min}, especially when no closed-form equilibrium effort is available for the second stage, making naive evaluation prohibitively expensive.

\paragraph{Contributions.}
Our contributions are as follows.

\begin{enumerate}
    \item We propose GRAF, a general algorithmic framework for SSTC that, under a suitable $\sigma$, provably returns a zero-dissatisfaction, or \emph{zero-regret/equilibrium}, outcome (Theorem~\ref{thm:psne}) and even a platform-optimal one (Theorem~\ref{thm:opt}) in polynomial time for special cases of SSTC.

    \item We propose LLMScore, an LLM-driven evolutionary framework that automatically designs the scoring algorithm $\sigma$ that GRAF uses in general SSTC games. Addressing Challenge~I, we devise a simple-yet-effective fitness-penalizing scheme that nudges algorithms toward platform optimality while filtering out those that leave workers with high regret. Addressing Challenge~II, we train LLMScore only on small instances of a single SSTC setting and reuse the learned algorithms on unseen, larger, structurally different settings, eliminating expensive per-setting training.

    \item We conduct rigorous experiments on 1,000 synthetic SSTC instances across four problem settings of increasing complexity to demonstrate the following advantages of LLMScore:
    \begin{itemize}
        \item \emph{Performance}: GRAF augmented by LLMScore consistently yields high-quality or even near-optimal self-selection outcomes with low worker regret across all considered settings, including those it was never trained for.
        \item \emph{Interpretability}: LLMScore produces transparent, human-readable code that a platform operator can inspect, audit, and adjust (see Figures \ref{fig:learned-alg1} and \ref{fig:learned-alg3} for some examples).
        \item \emph{Scalability}: By virtue of its generalizability (from tackling Challenge~II), the per-instance runtime of GRAF augmented by LLMScore is comparable to GRAF with handcrafted $\sigma$ (which runs in polynomial time), since training is amortized offline.
    \end{itemize}
\end{enumerate}

\paragraph{Outline.}
The remainder of our paper is organized as follows.
Section~\ref{sec:prelim} formalizes the Tullock contest model and the SSTC game. Section~\ref{sec:graf} presents GRAF and establishes its theoretical guarantees. Section~\ref{sec:llmscore} introduces LLMScore. Section~\ref{sec:exp} presents experimental results. We discuss related work in Section~\ref{sec:related} and conclude our work in Section~\ref{sec:conclusion}.
The appendices contain figures for our prompts (Figures~\ref{fig:prompt-task}--\ref{fig:evol_prompt-design}), code template (Listing~\ref{lst:code-template}), designed algorithms (Figures~\ref{fig:learned-alg1} and \ref{fig:learned-alg3}), and additional results (Figures~\ref{fig:box-ssstc_general}--\ref{fig:tradeoff-sstc_general}).
Complete proofs and full implementation details are provided in our supplementary material attached at the end of the document.

\section{Preliminaries}\label{sec:prelim}

In this section, we first revisit the model of a single Tullock contest in Section \ref{subsec:tullock}, then formalize the general SSTC model involving multiple Tullock contests in Section \ref{subsec:sstc} and state the platform's objective in Section \ref{subsec:obj}.

\subsection{Tullock Contest}\label{subsec:tullock}
We consider the most general model of Tullock contests \citep{tullock1980efficient}. 
Let $N=\{1,\ldots,n\}$ denote the set of contestants in a contest.
Each contestant $i\in N$ has a value $v_i>0$ for the reward associated with the contest (e.g., monetary prize, skill points, or reputation) and wins the contest with probability that depends on their exerted effort and the exerted efforts of other contestants.
In particular, each $i$ is characterized by a \emph{production function} $f_i$.
At effort $e_i$, contestant's $i$'s production is $f_i(e_i)=q_ie_i^{r_i}$, where $q_i>0$ describes $i$'s \emph{production efficiency}, and $r_i>0$ captures the \emph{elasticity of effort} \citep{tullock1980efficient}.
Let $\mathbf{e}=(e_1,\ldots,e_n)\in\mathbb{R}_0^{+\,n}$ be the joint-effort profile of the contestants where $e_i\ge0$ is the effort of contestant $i\in N$.
The probability of $i$ winning the contest is then $p_i(\mathbf{e}) = \frac{f_i(e_i)}{\sum_{l \in N} f_l(e_l)}$.
If all $e_i$'s are zero, then $p_i(\mathbf{0}) = \frac{1}{n}$ for all $i \in N$ where $\mathbf{0} = (0, \ldots, 0)$.
Given a joint-effort profile $\mathbf{e} = (e_i, \mathbf{e}_{-i})$, the expected utility of $i \in N$ is $u_i(e_i, \mathbf{e}_{-i}) = v_i p_i(\mathbf{e}) - c_i(e_i)$, where $c_i:\mathbb{R}^+_0\rightarrow\mathbb{R}^+_0$ is the \emph{effort cost function} of $i$.

In the special case where $f_i(e_i)=e_i$ and $c_i(e_i)=e_i$, for every $i\in N$, we have Theorem~\ref{thm:tullock}.

\begin{theorem}[\citep{hillman1989politically,stein2002asymmetric}]\label{thm:tullock}
Given a Tullock contest with values $\{v_i\}_{i \in N}$, where $v_1 \geq v_2 \geq \cdots \geq v_n$, there is a unique pure-strategy Nash equilibrium (PSNE) $\mathbf{e}^* \in \mathbb{R}_0^{+\,n}$ where (1) the equilibrium effort level is ordered based on the values: $e_1^* \geq e_2^* \geq \cdots \geq e_n^*$; (2) the number of active players (with nonzero efforts) is the smallest $p$, in the ordering, such that $v_{p+1} < \frac{p-1}{\Gamma_p}$ (with $\Gamma_p = \sum_{i \leq p} \frac{1}{v_i}$); and (3) the equilibrium effort is characterized by the closed form: $e_i^* = \frac{p-1}{\Gamma_p}\left(1 - \frac{p-1}{v_i \Gamma_p}\right)$ for the first $p$ contestants and $e_i^* = 0$ for $i = p+1, \ldots, n$.
\end{theorem}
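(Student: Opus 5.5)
We prove the statement by treating the contest as a concave game and pinning down every equilibrium through its first-order (KKT) conditions. With $f_i(e_i)=e_i$ and $c_i(e_i)=e_i$, write $S=\sum_l e_l$ and $S_{-i}=S-e_i$. If $S_{-i}>0$, the map $e_i\mapsto v_i e_i/(e_i+S_{-i})-e_i$ is strictly concave on $[0,\infty)$ with derivative $v_i S_{-i}/S^2-1$. The profile $\mathbf{e}=\mathbf{0}$ is never an equilibrium: any player can deviate to a tiny $\epsilon>0$, win with probability $1$, and earn $v_i-\epsilon>v_i/n$. Likewise, no profile with exactly one active player is an equilibrium, because that player could lower their effort and still win surely. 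Hence in any PSNE we have $S>0$ and at least two active players, so $S_{-i}>0$ for every $i$. Best responses are then characterized exactly by the KKT conditions:
\begin{itemize}
\item $v_i(S-e_i)=S^2$ if $e_i>0$;
\item $v_i S\le S^2$, i.e.\ $v_i\le S$, if $e_i=0$.
\end{itemize}

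\emph{Step 1 (form of the equilibrium).} For active players the first condition gives $e_i=S(1-S/v_i)$, which requires $v_i>S$. Inactive players satisfy $v_i\le S$. So the active set $A$ consists exactly of the players with $v_i>S$, up to ties with $v_i=S$, which yield $e_i=0$ anyway. This makes $A$ an initial segment $\{1,\dots,p\}$ of the value ordering, and it gives part (1) at once, since $e_i=S(1-S/v_i)$ is increasing in $v_i$.

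Summing the active conditions over $i\le p$ yields $S=pS-S^2\Gamma_p$, so $S=\frac{p-1}{\Gamma_p}$. Substituting back gives the closed form in part (3).

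\emph{Step 2 (consistency determines $p$).} We need $v_i>S_p:=\frac{p-1}{\Gamma_p}$ for all $i\le p$, and $v_{p+1}\le S_p$. The key monotonicity fact is a direct computation:
\[
v_{p+1}>S_p \iff S_{p+1}>S_p.
\]
Equivalently, $S_p$ increases while the next value exceeds it. From this we will show two things:
\begin{itemize}
\item If $p$ is the smallest index with $v_{p+1}<S_p$ (or $p=n$), then $v_p>S_{p-1}$, so $S_p<v_p\le v_i$ for all $i\le p$. Thus the candidate with this $p$ satisfies all KKT conditions and is a PSNE. Ties $v_{p+1}=S_p$ give the same profile.
\item Any other $p'$ violates consistency. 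If $p'$ is larger, the inequality $v_{p+1}\le S_p$ propagates: $S$ is nonincreasing beyond $p$, and the values keep decreasing, so $v_{p'}\le S_{p'}$, a contradiction. If $p'$ is smaller, then $v_{p'+1}>S_{p'}$ fails the inactivity condition.
\end{itemize}

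\emph{Step 3 (uniqueness).} Step 1 shows that every PSNE has the displayed form for some initial segment $A$, and Step 2 shows that only one $p$ is consistent. The equilibrium is therefore unique.

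The main obstacle is Step 2. It requires a careful induction on the sequence $S_p$ and careful handling of the boundary case $v_{p+1}=S_p$. I would carry it out by writing $S_{p+1}-S_p$ explicitly as a positive multiple of $(v_{p+1}-S_p)$, which reduces everything to sign-chasing in that single expression.
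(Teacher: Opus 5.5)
The paper does not prove this theorem; it is quoted from Hillman--Riley and Stein, so there is no in-paper argument to compare against. Your route is the standard one: strict concavity once $S_{-i}>0$, then the KKT conditions, then a consistency check on the active set. Step~1 is correct, and so is your monotonicity fact. The gap is in Step~2, in two places.

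\textbf{The propagation for $p'>p$.} Knowing that $S_q$ is nonincreasing beyond $p$ and that the values are nonincreasing does not give $v_{p'}\le S_{p'}$. Two decreasing sequences need not keep their order. What you need is that $S_{q+1}$ never drops below $v_{q+1}$.

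\textbf{The first bullet.} Your monotonicity fact turns $v_p>S_{p-1}$ into $S_p>S_{p-1}$, not into $S_p<v_p$.

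Sign-chasing in $S_{q+1}-S_q=\frac{v_{q+1}-S_q}{v_{q+1}\Gamma_{q+1}}$ cannot supply either step. The missing ingredient is that this coefficient lies in $(0,1)$, that is,
\[
S_{q+1}=\frac{\Gamma_q\,S_q+\frac{1}{v_{q+1}}\,v_{q+1}}{\Gamma_q+\frac{1}{v_{q+1}}},
\qquad\text{equivalently}\qquad
v_{q+1}-S_{q+1}=\frac{\Gamma_q}{\Gamma_{q+1}}\,(v_{q+1}-S_q).
\]
So $S_{q+1}$ is a weighted mean of $S_q$ and $v_{q+1}$. With this, $v_{q+1}\le S_q$ gives $v_{q+1}\le S_{q+1}\le S_q$, hence $v_{q+2}\le v_{q+1}\le S_{q+1}$, which is the induction you want. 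Likewise, $v_p\ge S_{p-1}$ gives $S_{p-1}\le S_p\le v_p\le v_i$ for all $i\le p$.

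\textbf{Ties.} Your strict inequalities fail exactly in the tie case, and there the statement itself is at fault. Under the paper's strict rule ($v_{p+1}<S_p$), minimality of $p$ yields only $v_p\ge S_{p-1}$, and $v_{p'+1}\ge S_{p'}$ for $p'<p$. If $v_p=S_{p-1}$, then $S_p=S_{p-1}=v_p$ and the closed form gives $e_p^*=0$, so part~(2) overcounts. For $v=(2,2,1,\tfrac12)$ one has $S_2=S_3=1$; the rule returns $p=3$, yet only two players exert positive effort.

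The profile itself is unaffected, as you noted, so existence, (1), (3) and uniqueness of $\mathbf{e}^*$ survive. But (2) holds as a count only if $p$ is defined as the smallest index with $v_{p+1}\le S_p$; under that definition minimality does give your strict $v_p>S_{p-1}$. For the same reason, a smaller $p'$ need not ``fail the inactivity condition'': under a tie it is consistent and produces the same profile.

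\textbf{A cleaner uniqueness argument.} Every PSNE has $S>0$ with $\sum_i\max\{0,1-S/v_i\}=1$. The left side is continuous and strictly decreasing on $(0,v_1]$, tending to $n\ge2$ as $S\to0^+$ and equal to $0$ at $S=v_1$. Hence $S$ is unique, and with it every $e_i^*=\max\{0,S(1-S/v_i)\}$; this avoids the index bookkeeping entirely. Finally, your argument that $\mathbf{0}$ is not an equilibrium needs $n\ge2$; for $n=1$ the zero profile is the equilibrium.
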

In such a Tullock contest, from the above PSNE characterization, there are always at least two active contestants for $n\ge2$.

\subsection{Self-Selection in Tullock  contests}\label{subsec:sstc}

The game of self-selection in Tullock contests (SSTC) is played in two stages. First, each contestant selects which contests to participate in, and, subsequently, the contestants compete in each  contest, knowing the contestants they compete against \citep{dipalantino2009crowdsourcing, bernergard2017self,cohen2026algorithmic}.

Formally, let $M = \{1, \ldots, m\}$ be the set of $m$ distinct Tullock contests, and $N = \{1, \ldots, n\}$ be the set of $n$ contestants. A (pure) self-selection strategy of a contestant $i$ is $s_i = (a_i, e_i)$ with a binary selection vector $a_i$ of size $m$ and an effort vector $e_i$ of size $m$, where $a_{ij} = 1$ denotes the selection of contest $j$, and $e_{ij}\geq 0$ denotes $i$'s effort in contest $j \in M$.
The (pure) self-selection strategy set of a contestant $i \in N$ is $S_i \in A \times \mathbb{R}_0^{+\,m}$, with $A = \{a_i \in \{0,1\}^m \mid \sum_{j \in M} a_{ij} = k_i\}$, where $1\le k_i<m$ is the number of contests in which $i$ seeks to participate\footnote{In practice, $k_i$ represents $i$'s demand based on their e.g., past participation and profile data. The platform then provides recommendations respecting this demand \citep{difallah2013pick}.}. 
We let $S = S_1 \times \cdots \times S_n$ 
and $v_{ij}$ be the value of $i \in N$ for winning contest $j\in M$. 

Given a pure-strategy profile or self-selection outcome $\mathbf{s} = (s_i, \mathbf{s}_{-i}) \in S$, the utility of contestant $i$ is

\begin{equation}\label{eq:utility}
    u_i(s_i, \mathbf{s}_{-i}) = \sum_{j=1}^m a_{ij}\cdot\Big(v_{ij} p_{ij}(\mathbf{s}) - c_{ij}(e_{ij})\Big),
\end{equation}
where $c_{ij}:\mathbb{R}^+_0\rightarrow\mathbb{R}^+_0$ is the effort cost function of contestant $i$ in contest $j$ and
\begin{equation}\label{eq:win-prob}
    p_{ij}(\mathbf{s}) = \frac{a_{ij}\cdot f_{ij}(e_{ij})}{\sum_{l\in N}a_{lj}\cdot f_{lj}(e_{lj})}
\end{equation}
is $i$'s probability of winning contest $j$ given $\mathbf{s}$.
We define $f_{ij}(e_{ij})=q_{ij}\cdot e_{ij}^{r_{ij}}$ and $c_{ij}(e_{ij})=c_{ij}\cdot e_{ij}^{d_{ij}}$, where $q_{ij}, r_{ij}, c_{ij}, d_{ij}>0$.
Note that within contest $j$, $p_{ij}(\mathbf{s}) = 0$ when $a_{ij} = 0$ for every $i \in N$, and $p_{ij}(s) = \frac{1}{\sum_{l=1}^n a_{lj}}$ when $e_{ij} = 0$ and $a_{ij} = 1$ for each $i \in N$. When there is a single contestant, the contestant wins the prize with probability 1 with zero effort (i.e., $p_{ij}(\mathbf{s}) = 1$ and $e_{ij} = 0$ if $a_{ij} = 1$ and $\sum_{l \in N} a_{lj} = 1$).

Due to the sequential nature of the game, the effort $e_{ij}$ for each contestant $i \in N$ and contest $j \in M$ is conditioned on  $\mathbf{a} = (a_i, \mathbf{a}_{-i}) \in A^n$.
Therefore, we focus on finding $\mathbf{a}$ and  $\tilde{e}_{ij}(a_i,\mathbf{a}_{-i})$'s, which are (approximate\footnote{In the absence of closed-form PSNE characterizations, the computation of $\tilde{e}_{ij}(a_i,\mathbf{a}_{-i})$ must be done via numerical methods.}) equilibrium effort of $i\in N$ in $j\in M$ given $\mathbf{a}$. 
Given the dependence of $\tilde{e}_{ij}$'s on $\mathbf{a}$, we simply refer to $\mathbf{a}$ as a self-selection outcome (omitting $\tilde{e}_{ij}$'s), with this abuse of notation adopted henceforth throughout the paper. 
Given a self-selection outcome $\mathbf{a} \in A^n$, the utility of $i \in N$ (Equation \ref{eq:utility}) can hence be redefined as 
\begin{equation}\label{eq:utility-special}
    u_i(a_i, \mathbf{a}_{-i}) = \sum_{j=1}^m a_{ij}\Big(v_{ij} \tilde{p}_{ij}(a_i, \mathbf{a}_{-i}) - c_{ij}(\tilde{e}_{ij}(a_i, \mathbf{a}_{-i}))\Big),
\end{equation}
where $\tilde{p}_{ij}(a_i, \mathbf{a}_{-i}) = \frac{a_{ij} f_{ij}(\tilde{e}_{ij}(a_i, \mathbf{a}_{-i}))}{\sum_{l \in N} a_{lj} f_{lj}(\tilde{e}_{lj}(a_l, \mathbf{a}_{-l}))}$ for any self-selection outcome $\mathbf{a} \in A^n$.

The dissatisfaction for a given contestant $i\in N$ is captured by the standard notion of \emph{regret} \citep{shoham2008multiagent,vlatakis2020no}.
Formally, contestant $i$'s \emph{(ex post) regret} for playing $a_i$ given $\mathbf{a}_{-i}$ is defined as
\begin{equation}\label{eq:rgt-special}
    \text{rgt}_i(a_i,\mathbf{a}_{-i})=\max\{0,\max_{a'_i\in A}u_i(a'_i,\mathbf{a}_{-i}) - u_i(a_i,\mathbf{a}_{-i})\}.
\end{equation}
Intuitively, it is the maximum utility $i$ loses by playing $a_i$.

\begin{definition}\label{def:psne-special}
A self-selection outcome $\mathbf{a}^* = (a_i^*, \mathbf{a}_{-i}^*) \in A^n$ is a pure-strategy Nash equilibrium (PSNE) if and only if for all $i \in N$, for every $a_i \in A$, $u_i(a_i^*, \mathbf{a}_{-i}^*) \geq u_i(a_i, \mathbf{a}_{-i}^*)$.
\end{definition}
When all contestants have zero regret, PSNE is attained. 

\subsection{Platform Objective}\label{subsec:obj}
In the context of crowdsourcing, the contestants' efforts may translate into solutions of different qualities. While the top solution may be announced as the winner, the other submitted solutions may  not be discarded (i.e., different machine learning classifiers can be combined to build a more powerful classifier via some ensemble approaches; see also~\citep{lakhani2013prize}). As a result, the platform seeks many high-quality solutions, which 
can be captured by the overall effort applied to a contest. 
Since contestants are not under contract, the platform 
provides recommendations on which contest(s) they should join based on their demand \citep{difallah2013pick}.
Given a self-selection outcome $\mathbf{a} \in A^n$, the \emph{overall effort} (OE) objective is
\begin{equation}\label{eq:oe}
    OE(\mathbf{a}) = \sum_{i=1}^n \sum_{j=1}^m a_{ij} \tilde{e}_{ij}(a_i, \mathbf{a}_{-i}).
\end{equation}
The platform wants to find
\begin{equation}\label{eq:obj-special}
\begin{aligned}
    \mathbf{a}^{\text{opt}} \in & \argmax_{\mathbf{a} \in A^n} OE(\mathbf{a}) \\
    \text{such that} \quad & u_i(a_i, \mathbf{a}_{-i})\ge 0 \quad \text{for every } i\in N,
\end{aligned}
\end{equation}
where the utility constraints reflect the fact that contestants are rational agents who seek to participate in contests strategically \citep{conlisk1996bounded,jones1999bounded}.

Instead of achieving the highest performance, the platform's objective is to drive a general increase in activity in the field with many high-quality solutions as discussed earlier (e.g., diverse logo and website designs). 
Another example is when the platform wants to maximize the expected learning effort of a certain programming topic or skill made by contestants\footnote{On Topcoder, they form special learning communities around certain groups for the general public \citep{topcoder2018expanding,topcoder2018weeks}.}. We refer readers to~\citep{moldovanu2001optimal}, who also consider the sum of effort as the objective function for their single-contest model, for more examples and motivations.

\section{GRAF: A Greedy Algorithmic Framework for SSTC}\label{sec:graf}

In this section, we present the \textbf{GR}eedy \textbf{A}lgorithmic \textbf{F}ramework (GRAF) for generating self-selection outcomes in SSTC games, followed by establishing its properties in special settings: Zero-regret contestants (Section \ref{subsec:graf-psne}) and platform optimality (Section \ref{subsec:psne-opt}).

Let $\mathcal{BR}_i(\mathbf{a}_{-i}) = \argmax_{a'_i \in A'_i} u_i(a_i+a'_i, \mathbf{a}_{-i})$ be the best-response set of $i$ given the partial self-selection
outcome $\mathbf{a}_{-i} \in A^{n-1}$, where $A'_i=\{a'_i\in\{0,1\}^m\mid\sum_{j\in M:\,a_{ij}=0}a'_{ij}=1\}$ is the single-contest selection set of $i$ concerning contests that $i$ has yet to join.
(We elaborate on how to compute $\tilde{e}_{ij}(a_i,\mathbf{a}_{-i})$ in the supplementary material.)
As shown in Algorithm~\ref{alg:graf}, given an SSTC game where the contestants are ordered based on some score vector $\sigma\in\mathbb{R}^n$, i.e., $\sigma_1\ge\sigma_2\ge\ldots\ge\sigma_n$, GRAF computes the best-response set of $i$ given the selections of contestants up to $i-1$ and commits $i$
to a contest in $i$'s best-response set.
The algorithm terminates when all contestants meet their participation demand $k_i$.

\begin{algorithm}
    \caption{GRAF}
    \label{alg:graf}
    \begin{algorithmic}[1]
        \REQUIRE An SSTC game with $n$ contestants sorted based on a score vector $\sigma\in\mathbb{R}^n$, i.e., $\sigma_1\ge\ldots\ge\sigma_n$
        \ENSURE A self-selection outcome $\mathbf{a}$
        \STATE Let $\mathbf{a} = \mathbf{0}$ \hfill \doc{\# Set $\mathbf{a}$ to be a zero matrix}
        \WHILE{$k_i>0$ for any $i\in N$}
        \FOR{$i = 1, \ldots, n$}
        \IF{$k_i>0$}
        \STATE Let $\mathcal{BR}_i(\mathbf{a}_{-i}) = \argmax_{a'_i \in A'_i} u_i(a_i+a'_i, \mathbf{a}_{-i})$
        \STATE Select $j \in \mathcal{BR}_i(\mathbf{a}_{-i})$, set $a_{ij} = 1$
        \STATE Decrement $k_i$ by 1
        \ENDIF
        \ENDFOR
        \ENDWHILE
    \end{algorithmic}
\end{algorithm}

\subsection{Zero-Regret Contestants}\label{subsec:graf-psne}

Consider games where (1) the contests have a homogeneous type (e.g., either all logo contests, all webpage contests, machine  learning contests) with a similar reward structure, hence each contestant $i\in N$ has the same value $v_i$ for every contest; (2) each contestant selects a single contest, i.e., $k_i=1,\,\forall i\in N$;\footnote{Most contestants are part-time competitors (i.e., the contestants  have a full-time job) where they compete once or twice a month \citep{topcoder2018competing}. Thus, selecting only a single contest within a fixed period (e.g., biweekly) is a feasible assumption.} and (3) both the production and the effort cost of any contestant $i\in N$ in any contest $j\in M$ is linear in effort, i.e., 
$q_{ij}=r_{ij}=c_{ij}=d_{ij}=1$.
In such case, the effort $\tilde{e}_{ij}(a_i,\mathbf{a}_{-i})$ 
can be determined deterministically via the equilibrium characterization of Theorem~\ref{thm:tullock}.
That is---let $C(j) = \{i \in N \mid a_{ij} = 1\}$---$\tilde{e}_{ij}(a_i,\mathbf{a}_{-i})$ for a contestant $i \in C(j)$ is the equilibrium effort $e^*_{ij}$ (via Theorem~\ref{thm:tullock}) restricting to the contestants in $C(j)$. If $i \notin C(j)$, then $e_{ij}^* = 0$.
We refer to this model as simply \emph{special SSTC} when the context is clear.

We begin by adapting Algorithm \ref{alg:graf} to special SSTC games (see Algorithm \ref{alg:psne}) and show that it generates a PSNE (i.e., a no-regret self-selection outcome).
Notably, we order the contestants based on $v_1 \ge v_2 \ge \ldots \ge v_n$ given their homogeneous values $v_i$.
Moreover, since each contestant $i$ selects exactly one contest, which implies $A'_i=A_i=\{a_i \in \{0,1\}^m \mid \sum_{j \in M} a_{ij} = 1\}$, their best-response set can be simplified to $\mathcal{BR}_i(\mathbf{a}_{-i}) = \argmax_{a_i \in A_i} u_i(a_i,\mathbf{a}_{-i})$.

\begin{algorithm}
    \caption{GRAF in Special SSTC}
    \label{alg:psne}
    \begin{algorithmic}[1]
        \REQUIRE A special SSTC game with $n$ contestants sorted based on $\sigma_i=v_i$, i.e., $v_1 \geq \ldots \geq v_n$
        \ENSURE A PSNE self-selection outcome $\mathbf{a}^*$
        \STATE Let $\mathbf{a} = \mathbf{0}$ \hfill \doc{\# Set $\mathbf{a}$ to be a zero matrix}
        \FOR{$i = 1, \ldots, n$}
        \STATE Let $\mathcal{BR}_i(\mathbf{a}_{-i}) = \argmax_{a_i \in A_i} u_i(a_i, \mathbf{a}_{-i})$
        \STATE\label{step:j} Select $j \in \mathcal{BR}_i(\mathbf{a}_{-i})$, set $a_{ij} = 1$
        \ENDFOR
    \end{algorithmic}
\end{algorithm}

\begin{theorem}\label{thm:psne}
For any game of special SSTC, there is a PSNE. 
Moreover, Algorithm \ref{alg:psne} outputs one in polynomial time. 
\end{theorem}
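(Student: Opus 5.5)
The plan is to prove the theorem by showing that the outcome $\mathbf{a}^*$ returned by Algorithm~\ref{alg:psne} is a PSNE; existence then follows at once. Polynomial time is routine. There are $n$ iterations. Each iteration evaluates $m$ candidate contests, and each evaluation computes the closed-form equilibrium of Theorem~\ref{thm:tullock} on at most $n$ contestants. That costs $O(n\log n)$ for sorting plus a linear scan to find the number $p$ of active players, so the total is $O(n^2 m\log n)$.

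\textbf{Step 1: closed form for equilibrium utility.} From Theorem~\ref{thm:tullock}, an active contestant $i$ in a contest with active set of size $p$ and $\Gamma_p=\sum_{l\le p}1/v_l$ receives $u_i=v_i\bigl(1-\frac{p-1}{v_i\Gamma_p}\bigr)^2$, and an inactive contestant receives $0$. Two consequences will be recorded.
\begin{itemize}
\item[(a)] If a newcomer's value satisfies $v\le\frac{p-1}{\Gamma_p}$ for the incumbent active set, the newcomer is inactive. Theorem~\ref{thm:tullock}(2) then leaves the active set and all efforts unchanged, so incumbents' utilities do not move.
\item[(b)] If instead the newcomer is active, the common "price" $\frac{p-1}{\Gamma_p}$ weakly increases. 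Hence every incumbent's utility weakly decreases, and some incumbents may be pushed out.
\end{itemize}

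\textbf{Step 2: invariant under induction on the greedy order.} The key ordering fact is that each contestant $h$ processed after $i$ has $v_h\le v_i$. So whenever $i$ evaluates a deviation to $j'$, $i$ would be the highest-value member there among those arriving at or after $i$. The induction hypothesis is: after contestants $1,\dots,t$ are placed, each $i\le t$ weakly prefers its current contest to every alternative, computed against the current partial profile.

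When $t+1$ arrives it best-responds by construction, so it has zero regret. It remains to show that its arrival does not create regret for any earlier $i$.
\begin{itemize}
\item If $t+1$ is inactive in the contest it joins, then by (a) nothing changes for the incumbents there. It also does not change $i$'s deviation payoff to that contest: since $v_i\ge v_{t+1}$, the member $t+1$ would still be the lowest-value member and still inactive once $i$ were added.
\item If $t+1$ joins $j'\neq j$ (where $j$ is $i$'s contest) and is active, then $i$'s deviation payoff to $j'$ only weakly drops by (b). Preference for $j$ is therefore preserved.
\item The delicate case is when $t+1$ joins $i$'s own contest $j$ and is active, which lowers $u_i$. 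Here I would exploit that $t+1$ chose $j$ over every $j'$. I would try to prove a single-crossing lemma: if a lower-value contestant $h$ weakly prefers joining $C(j)$ to joining $C(j')$, then a higher-value $i\in C(j)$ still weakly prefers $C(j)\cup\{h\}$ to $C(j')\cup\{i\}$.
\end{itemize}

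\textbf{Step 3: the main obstacle, the single-crossing lemma.} I expect this lemma to be the main obstacle. To attack it, I would parametrize each contest by its price $P=\frac{p-1}{\Gamma_p}$ and its harmonic sum $\Gamma$. The entry utility of a value-$v$ player is then an explicit function of $v$ and $(p,\Gamma)$, namely $v\bigl(1-\frac{p}{v\Gamma+1}\bigr)^2$ when the player becomes top-ranked and active.

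The first thing to try is to show that this difference, taken across two contests, is monotone in $v$. That would give a consistent ranking of contests by entering utility, uniform over players weakly above the incumbents' price, and hence the single-crossing property. A supporting observation is that $h$ being active in $j$ means $P_j$ was low relative to $v_h\le v_i$.

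If a direct algebraic comparison fails, my fallback is to show the game restricted to the greedy order admits a potential, for instance an ordinal potential built from sorted vectors of prices, which strictly increases only along improving moves. I would then argue that the greedy output is a maximizer of that potential.
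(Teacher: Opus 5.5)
Your architecture is sound and matches the paper's. You induct along the value order, note that the newcomer best-responds, and check that no earlier contestant acquires regret. Your treatment of an active newcomer joining $j'\neq j$ (the price $i$ would face in $j'$ only rises) settles the paper's Claim~2, and more simply than the paper does.

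\textbf{The gap.} Your single-crossing lemma is exactly the paper's Claim~1, and you leave it unproved. Worse, the route you sketch for it rests on a false premise: there is no ranking of contests by \emph{entry} utility that is uniform over entrants above the incumbents' price. Take contest $A$ with one incumbent of value $10$ and contest $B$ with two incumbents of value $3$, so the incumbent prices are $0$ and $3/2$. An entrant of value $v$ faces price $10v/(v+10)$ in $A$ and $6v/(2v+3)$ in $B$, and everyone is active at $v\in\{2,3\}$. At $v=2$ the prices are $5/3<12/7$, so $A$ is preferred. At $v=3$ they are $30/13>2$, so $B$ is preferred. The lower-value entrant prefers $A$ while the higher-value one prefers $B$, so no entry-versus-entry comparison can yield your lemma. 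The potential-function fallback is too unspecified to assess; nothing indicates that the greedy output maximizes any such potential.

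\textbf{The missing idea.} Compare prices rather than utilities, and notice that for the incumbent $i$ the relevant comparison is \emph{staying} versus entering. Let $X(S)$ denote the equilibrium price $(p-1)/\Gamma_p$ of a contest with member set $S$. It is the unique root of $\sum_{l\in S}\max(0,1-X/v_l)=1$, so it is nondecreasing both when members are added and when any member's value rises. Every member of $S$ receives $v\max(0,1-X(S)/v)^2$, which for fixed $v$ is nonincreasing in $X(S)$. The argument then runs in four steps.
\begin{enumerate}
\item Since $h$ is active in $j$, it has $u_h>0$, and its best response forces $X(C(j)\cup\{h\})\le X(C(j')\cup\{h\})$.
\item Because $v_i\ge v_h$, we also have $X(C(j')\cup\{h\})\le X(C(j')\cup\{i\})$.
\item If $i$ stays in $j$, it faces the very same price $X(C(j)\cup\{h\})$ that $h$ does.
\item Hence staying is weakly better for $i$ than moving to $j'$.
\end{enumerate}
This is precisely the paper's Claim~1 argument. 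There the newcomer is called $i$ and the incumbent $l$, and the proof replaces $1/v_i$ by the smaller $1/v_l$ in the denominator of the newcomer's best-response inequality, with a side case for members of $j'$ who are pushed out. With this lemma in hand, your induction closes.
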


\begin{proof}[Proof Sketch]
The key to showing that Algorithm \ref{alg:psne} returns 
a PSNE is to argue that (1) the partial self-selection outcome $\mathbf{a}_{-i}$ is a PSNE for 
the game played between the contestants up until $i$ 
and (2) the self-selection outcome $(a_i,\mathbf{a}_{-i})$ is still a PSNE 
after setting a best-response of $i$ to be one.
We show this by induction. 
The base step is straightforward, and the inductive step requires 
the following two claims.

\begin{claim}
No contestants in contest $j$ would deviate to a different contest 
after $i$ joining the contest $j$. 
\end{claim}

\begin{claim}
No contestants outside of contest $j$ would deviate 
to contest $j$ after $i$ joining the contest $j$. 
\end{claim}

Along with the above two claims, we have shown 
that Algorithm \ref{alg:psne} finds a PSNE in polynomial time.
\end{proof}

\subsection{Platform Optimality}\label{subsec:psne-opt}

Although GRAF itself does not guarantee optimality in OE, we show that the PSNE self-selection outcome (determined via Algorithm \ref{alg:psne}) is also optimal for a variant of special SSTC.
In particular, in \emph{uniform SSTC} games where the values of contestants are identical, i.e., $v=v_1=\ldots=v_n$, $\mathbf{a}^*=\mathbf{a}^{\text{opt}}$ when the number of contestants is at least twice the number of contests\footnote{In real-world crowdsourcing platforms, the contestant pool is typically much larger than the number of simultaneously active contests \citep{dipalantino2009crowdsourcing,boudreau2011incentives,lakhani2013prize,king2013using}.}.

\begin{theorem}\label{thm:opt}
    In a uniform SSTC game with $n\ge2m$ contestants, every PSNE $\mathbf{a}^*$ achieves optimal OE.
\end{theorem}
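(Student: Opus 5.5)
The plan is to reduce both the optimum and every PSNE to statements about the \emph{occupancy vector} $(c_1,\ldots,c_m)$, where $c_j=|C(j)|$ and $\sum_j c_j=n$, and then to show that both are exactly the \emph{balanced} vectors, i.e.\ those with $c_j\in\{\lfloor n/m\rfloor,\lceil n/m\rceil\}$ for all $j$.

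\textbf{Step 1 (closed forms).} In a uniform game every contestant has value $v$. Apply Theorem~\ref{thm:tullock} to a contest with $c\ge2$ identical contestants. Then $\Gamma_c=c/v$, all $c$ contestants are active, and each exerts $e^*=\frac{(c-1)v}{c^2}$. So the total effort in the contest is $v\bigl(1-\frac1c\bigr)$, and each contestant's utility is $\frac vc-\frac{(c-1)v}{c^2}=\frac{v}{c^2}$. When $c=1$ the contestant wins with zero effort, so the effort is $0$ and the utility is $v$. Both formulas therefore extend to $c=1$. Define $g(c)=1-\frac1c$ for $c\ge1$ and $g(0)=0$. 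Then
\[
OE(\mathbf a)=v\sum_{j\in M} g(c_j).
\]
All utilities are positive, so the participation constraints in \eqref{eq:obj-special} never bind. Hence $\mathbf a^{\text{opt}}$ is just a maximizer of $\sum_j g(c_j)$ subject to $\sum_j c_j=n$ with $c_j\ge0$ integers.

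\textbf{Step 2 (every optimum is balanced).} This is the main obstacle, because $g$ is not concave at the origin: its marginal gains are $0,\tfrac12,\tfrac16,\tfrac1{12},\ldots$. I would first show that an optimum has every $c_j\ge2$, using $n\ge2m$, by exchange arguments.
\begin{itemize}
\item If some $c_j=1$, then $n\ge 2m$ forces some $c_k\ge3$. Moving one contestant from $k$ to $j$ gains $\tfrac12$ and loses at most $\tfrac16$.
\item If some $c_j=0$ and some $c_k\ge4$, moving two contestants from $k$ to $j$ gains $\tfrac12$ and loses at most $\tfrac1{12}+\tfrac1{20}$.
\item Otherwise all $c$'s lie in $\{0,1,2,3\}$. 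Counting with $n\ge2m$ shows there are at least two contests with $c=3$ whenever some $c_j=0$. Moving one contestant from each of two such contests into $j$ gains $\tfrac12$ and loses $\tfrac13$.
\end{itemize}
Once all $c_j\ge2$, $g$ is strictly concave on the relevant range. So if $c_k\ge c_j+2$, moving one contestant from $k$ to $j$ strictly increases the sum. Hence every optimum is balanced. All balanced vectors are permutations of each other, so they share a single optimal OE value.

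\textbf{Step 3 (every PSNE is balanced).} Suppose $\mathbf a^*$ is a PSNE and $c_k\ge c_j+2$ for some $j,k$. A contestant in $k$ currently gets $v/c_k^2$. By deviating to $j$ it would get $v/(c_j+1)^2$, which is strictly larger; the formula remains valid when $c_j=0$, giving $v$. This contradicts Definition~\ref{def:psne-special}. So every PSNE is balanced, and by Step~2 it achieves the optimal OE. Existence of a PSNE is already given by Theorem~\ref{thm:psne}.
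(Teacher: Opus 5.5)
Your proposal is correct and takes the same route as the paper. Both arguments show that every PSNE is balanced, via the deviation comparison $v/c_k^2<v/(c_j+1)^2$, and that every optimum is balanced, via the diminishing returns of $v(1-1/c)$; your three exchange arguments rigorously supply the step the paper only asserts, namely that for $n\ge 2m$ an optimum places at least two contestants in every contest. One arithmetic slip: in your second bullet, removing two contestants from a contest with $c_k=4$ costs $\tfrac1{12}+\tfrac16=\tfrac14$ (in general $\tfrac{2}{c_k(c_k-2)}\le\tfrac14$), not at most $\tfrac1{12}+\tfrac1{20}$, but this is still strictly below the gain $\tfrac12$, so the conclusion is unaffected.
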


\section{LLMScore: An LLM-Based Framework to Augment GRAF}\label{sec:llmscore}

In general, each contestant $i$ may have different values $v_{ij}$ as well as various parameters (i.e., $q_{ij},r_{ij},c_{ij},d_{ij}$) for the contests, rendering the choice of $\sigma$ (which determines the order of contestants to consider in GRAF) nontrivial.
To this end, we propose LLMScore, an LLM-based framework that automates the design of $\sigma$ to augment GRAF.

\paragraph{Overview.}
We start with an overview of our proposed LLMScore framework (illustrated in Figure \ref{fig:llmscore}), which is inspired by the popular EoH method \citep{liu2024evolution} in automated algorithmic design. 
LLMScore maintains a population of $\eta$ heuristic scoring algorithms, denoted as $P=\{h_1,\ldots,h_\eta\}$, where $h_i$ takes an SSTC instance $\mathcal{I}$ as input and outputs a score vector $\sigma\in\mathbb{R}^n$.
It adopts an evolutionary search procedure, iteratively searching for algorithms that yield better trade-offs between OE and regret.
Each algorithm $h_i\in P$ is evaluated on a set of problem instances and assigned a fitness value $g(h_i)$ that indicates its probability of being selected as reference for generating new algorithms.

\begin{figure}[h]
    \centering
    \Description{Schematic of the LLMScore framework: a population of candidate scoring algorithms, each a short program mapping an SSTC instance to a score vector, is evaluated for a fitness that trades off overall effort against regret; the fittest algorithms are fed back to a large language model, which is prompted to generate improved algorithms, repeating in an evolutionary loop.}
    \includegraphics[width=0.4\textwidth]{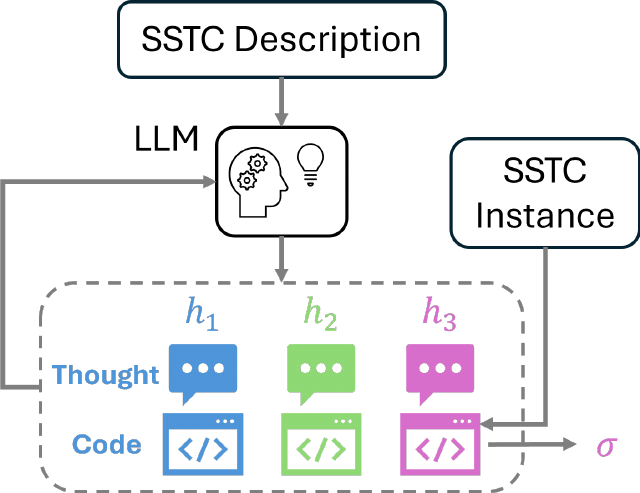}
    \caption{Illustration of the LLMScore framework. (See Figure \ref{fig:prompt-task} for our employed ``SSTC Description'' prompt.)}
    \label{fig:llmscore}
\end{figure}

\paragraph{Algorithm Representation.}
We first describe the components to represent a scoring algorithm $h_i$.
\begin{enumerate}
    \item The algorithm description comprises a few sentences in natural language. It is created by LLMs and encapsulates a high-level thought. An example is provided in Figure \ref{fig:learned-alg1} (second text box).
    \item The code block is an implementation of the algorithm (Figure~\ref{fig:learned-alg1}, bottom). It should follow a pre-defined format (see Listing~\ref{lst:code-template}) so that it can be identified and seamlessly integrated into the evolutionary framework. In the experiments, we choose to implement it as a Python function, though any programming language should work.
    \item Each algorithm is assigned a fitness value to represent its priority in the population, which is used for selection and population management. 
    We elaborate our definition of fitness based on OE and regret shortly.
\end{enumerate}

\paragraph{Evolutionary Search.}
To begin the evolution process, we inform the LLM of the SSTC problem of interest and instruct it to design a scoring algorithm by first presenting the description of the generated algorithm followed by the corresponding code block (see Figure \ref{fig:init_prompt-design}, top).
We repeat $\eta$ times to generate $\eta$ initial algorithms.
From these initial algorithms, offspring algorithms are generated by prompting the LLM with specific instructions, called \emph{prompt strategies}, that either explore or modify existing algorithms.
Exploration strategies focus more on exploring uncharted area in the space of heuristics by conducting crossover-like operators on parent heuristics (Figure \ref{fig:init_prompt-design}, left).
Modification strategies refine a parent heuristic by modifying, adjusting parameters, and removing redundant parts (Figure \ref{fig:init_prompt-design}, right).
For selecting parent algorithm(s) from a population $P$ (consisting of $\eta$ algorithms) when generating an offspring algorithm, we first rank them according to their fitness, then randomly select one or multiple algorithms $h_i$ with probability $\text{Pr}(h_i)\propto1/(\rho_i+\eta)$, where $\rho_i$ is its rank in $P$.
Once $\eta$ offspring algorithms are generated, we select the $\eta$ best algorithms from the current population (comprising parent and offspring algorithms) to form a population for the next iteration.
The evolution process terminates when the evaluation resource (e.g., number of populations) is depleted.

\paragraph{Prompt Refinement.}\label{par:prompt-refine}
Since prompt strategies dictate the diversity in the population and play a vital role in producing high-quality heuristics \citep{liu2024systematic}, thoughtful prompt engineering is necessary, which often requires certain degree of domain knowledge (especially for modification ones).
To alleviate this, we incorporate a prompt refinement module wherein prompt strategies are iteratively refined as well.
In particular, when the fitness values of the top-$\lambda$ algorithms in the population remain unchanged for $t$ consecutive units of evaluation resource, which indicates stagnation during the search, we refine the modification prompt strategies as follows: Given the current set of modification strategies, we ask the LLM to design a new one that is different from them as much as possible (Figure \ref{fig:evol_prompt-design}).
By this means, a new prompt strategy is added after calling this module.
To maintain the size of the set of prompt strategies, during the evolution process, we assign each prompt strategy a score defined as the average fitness value of the top-$\delta$ individuals produced by it, and keep those with highest scores for generating offspring algorithms.
An example of the LLM-generated prompt strategy is given in Figure \ref{fig:learned-alg1} (first text box), which helps produce the algorithm yielding highest OE in one of our runs of LLMScore.

\paragraph{Fitness Evaluation.}\label{par:fitness}

The evaluation of each newly generated algorithm $h$ involves running it on a set of problem instances from which its fitness value is computed.
Since we aim to optimize both OE and regret, the associated fitness function should include both terms.
We achieve this by penalizing the fitness function whenever $h$ yields subpar regret.
Formally, given a heuristic scoring algorithm $h(\mathcal{I})=\sigma$, 
its fitness value $g(h)$ is defined as the median overall effort from the self-selection outcome $\mathbf{a}$ returned by GRAF (Algorithm \ref{alg:graf}): 
\begin{equation}
    g(h)=\text{med}\Big(OE(\mathbf{a})\Big)=\text{med}\Big(OE\big(\mathcal{A}(\mathcal{I};h(\mathcal{I}))\big)\Big),
\end{equation}
where $\mathcal{A}$ denotes Algorithm \ref{alg:graf} parameterized by $\sigma$ and $\text{med}$ is defined over all training instances.
We use the median instead of the mean due to the frequent occurrence of extreme outliers across instances.
Let us define the \emph{max regret} across all contestants as $\max_{i\in N}\text{rgt}_i(a_i,\mathbf{a}_{-i})$.
If the max regret associated with $\mathbf{a}=\mathcal{A}(\mathcal{I};h(\mathcal{I}))$ is higher than the max regret associated with $\mathbf{s}'$ returned by $\mathcal{A}(\mathcal{I};\sigma')$ for some baseline $\sigma'$ in $\tau\times100\%$ or more of the training instances, where $\tau\in[0,1]$, we penalize $h$'s fitness as follows:
\begin{equation}
    \hat{g}(h) = -1/g(h).
\end{equation}
By this means, algorithms yielding sufficiently low max regret are considered more ``fit'' than those otherwise in the population.
If multiple algorithms share the same fitness value, we sort them by median max regret.

\section{Experiments}\label{sec:exp}

\subsection{Experimental Setups}

Throughout the experiments, we chose GPT-4o mini as our pretrained LLM.
Our complete implementation in Python 3.10---including the SSTC instance generator, GRAF, and the LLMScore search---is available at \url{https://github.com/AnonyMouse3005/LLMScore.git}.

\paragraph{Datasets.}\label{par:data}
We generate synthetic SSTC instances where the contestants' values $v_{ij}$ are sampled uniformly from $[5,20]$ and the parameters $q_{ij}$, $r_{ij}$, $c_{ij}$, $d_{ij}$ are sampled from the Gaussian-like distribution $\mathcal{G}=2\times Beta(\alpha=20,\beta=20)\in[0,2]$.
We consider $n\in\{10,15,30,60\}$ and $m\in\{3,5\}$. There are 1,000 instances for each combination of $(m,n)$.
For each instance, we generate $k_i$ with $\text{Pr}(k=1)=0.7$, $\text{Pr}(2)=0.3$ for $m=3$ and $\text{Pr}(k=1)=0.6$, $\text{Pr}(2)=0.3$, $\text{Pr}(3)=0.1$ for $m=5$.

\paragraph{Implementation Details.}
We consider general SSTC games with or without the equilibrium effort characterization (from Theorem \ref{thm:tullock}).
For the former case, we simply set $q_{ij}=r_{ij}=c_{ij}=d_{ij}=1$.
In either case, contestants may select exactly one contest ($k_i=1,\,\forall i\in N$) or multiple contests ($k_i\ge1$).
For clarity, we label each problem setting (ordered by increasing complexity) as follows. 
\begin{itemize}
    \item \textbf{SSTC 1}: Single-selection SSTC with characterization.
    \item \textbf{SSTC 2}: Multi-selection SSTC with characterization.
    \item \textbf{SSTC 3}: Single-selection SSTC without characterization.
    \item \textbf{SSTC 4}: Multi-selection SSTC without characterization.
\end{itemize}

Notably, we train LLMScore only for SSTC 3 using a sizable training set of 50 small instances (with $n=10$ contestants and $m=3$ contests), then use the learned algorithms for $\sigma$ to test on instances from all other settings.
We choose SSTC~3 for its generality (with respect to SSTC~1 and SSTC 2) without being too complex (as in SSTC 4) for standard LLMs.
Once training is done, we compute the median OE and the median max regret (across 1,000 aforementioned instances) from (i) the algorithm with highest median OE and (ii) the one with lowest median max regret across the training instances.
We then repeat for three runs of LLMScore and report the average performance.
(Further implementation details in the supplementary material.)

\paragraph{Baselines.}\label{par:baselines}

Naturally, we consider GRAF with multiple handcrafted $\sigma$ as baselines:
\begin{itemize}
    \item $\sigma_i^{\text{ID}}=1/i$, i.e., by contestants' IDs, which is also employed as $\sigma'$ during \nameref{par:fitness}.
    \item $\sigma_i^{\texttt{v\_max}}=\max_{j\in M}v_{ij}$.
    \item $\sigma_i^{\texttt{v\_sum}}=\sum_{j\in M}v_{ij}$.
\end{itemize}
For SSTC 3 and SSTC 4, which feature heterogeneous values for $q_{ij},r_{ij},c_{ij},d_{ij}$, we consider more choices for $\sigma$:
\begin{itemize}
    \item $\sigma_i^{\texttt{e\_max}}=\max_{j\in M}e^{\max}_{ij}$, where $e^{\max}_{ij}=(\frac{v_{ij}}{c_{ij}})^{1/d_{ij}}$.
    \item $\sigma_i^{\texttt{e\_sum}}=\sum_{j\in M}e^{\max}_{ij}$.
    \item $\sigma_i^{\texttt{f\_max}}=\max_{j\in M}f_{ij}(e^{\max}_{ij})$.
    \item $\sigma_i^{\texttt{f\_sum}}=\sum_{j\in M}f_{ij}(e^{\max}_{ij})$.
\end{itemize}

In addition to GRAF, we also include \emph{iterated best response} (IBR) \citep{fudenberg1998theory,ho1998iterated}, detailed in the supplementary material.
As shown shortly in our experimental results, IBR is a surprisingly strong baseline that yields high OE while maintaining zero max regret in all instances of SSTC 1 and nearly all instances of SSTC 2. 

\subsection{Results}

\begin{table*}[h]
    \centering
    \small
    \caption{\textbf{SSTC with characterization.} The values in each cell respectively record median OE and median max regret. The best and second-best results are bolded and underlined, respectively.}
    \begin{subtable}[t]{0.97\textwidth}
        \centering
        \caption{SSTC 1. The median OE of the optimal outcomes in n10m3 (via brute-forcing the space of $\mathbf{a}$) is 32.934.}
        \begin{tabular}{l|cc|cc|cc|cc|cc|cc}
            \thickhline
            \multicolumn{1}{c|}{Method} & \multicolumn{2}{c|}{n10m3} & \multicolumn{2}{c|}{n15m3} & \multicolumn{2}{c|}{n30m3} & \multicolumn{2}{c|}{n60m3} & \multicolumn{2}{c|}{n30m5} & \multicolumn{2}{c}{n60m5} \\\hline
            IBR & \textbf{32.636} & \textbf{0.000} & \textbf{38.377} & \textbf{0.000} & \textbf{45.254} & \textbf{0.000} & \textbf{49.893} & \textbf{0.000} & \textbf{72.156} & \textbf{0.000} & \textbf{81.741} & \textbf{0.000} \\\hline
            GRAF[$\sigma^{\text{ID}}$] & 30.943 & 0.997 & 37.328 & 0.723 & 44.674 & 0.374 & 49.672 & 0.178 & 70.181 & 0.893 & 80.826 & 0.410 \\
            GRAF[$\sigma^{\texttt{v\_max}}$] & 30.943 & 0.997 & 37.328 & 0.723 & 44.674 & 0.374 & 49.672 & 0.178 & 70.181 & 0.893 & 80.826 & 0.410 \\
            GRAF[$\sigma^{\texttt{v\_sum}}$] & 30.943 & 0.997 & 37.328 & 0.723 & 44.674 & 0.374 & 49.672 & 0.178 & 70.181 & 0.893 & 80.826 & 0.410 \\\hline
            GRAF[$\sigma^{\text{LLM}}$] (best OE) & \underline{31.113} & 0.889 & \underline{37.528} & \underline{0.399} & \underline{45.095} & \underline{0.071} & \underline{49.874} & 0.009 & \underline{70.339} & 0.695 & \underline{81.268} & \underline{0.196} \\
            GRAF[$\sigma^{\text{LLM}}$] (best regret) & 31.084 & \underline{0.858} & \underline{37.528} & \underline{0.399} & \underline{45.095} & \underline{0.071} & 49.872 & \underline{0.009} & 70.307 & \underline{0.694} & \underline{81.268} & \underline{0.196} \\
            \thickhline
        \end{tabular}
        \label{subtab:special-k1}
    \end{subtable}\\
    \medskip
    \begin{subtable}[t]{0.97\textwidth}
        \centering
        \caption{SSTC 2. The median OE of the optimal outcomes in n10m3 (via brute-forcing the space of $\mathbf{a}$) is 34.689.}
        \begin{tabular}{l|cc|cc|cc|cc|cc|cc}
            \thickhline
            \multicolumn{1}{c|}{Method} & \multicolumn{2}{c|}{n10m3} & \multicolumn{2}{c|}{n15m3} & \multicolumn{2}{c|}{n30m3} & \multicolumn{2}{c|}{n60m3} & \multicolumn{2}{c|}{n30m5} & \multicolumn{2}{c}{n60m5} \\\hline
            IBR & \textbf{34.638} & \textbf{0.017} & \textbf{39.345} & \textbf{0.010} & \underline{45.709} & \textbf{0.004} & \underline{50.042} & \textbf{0.000} & \textbf{74.522} & \textbf{0.003} & \textbf{82.767} & \textbf{0.000} \\\hline
            GRAF[$\sigma^{\text{ID}}$] & 33.758 & 0.549 & 38.954 & 0.443 & 45.304 & 0.230 & 49.841 & 0.114 & 73.512 & 0.626 & 82.331 & 0.285 \\
            GRAF[$\sigma^{\texttt{v\_max}}$] & 33.528 & 0.648 & 38.855 & 0.474 & 45.403 & 0.251 & 49.975 & 0.114 & \underline{73.954} & 0.577 & 82.458 & 0.292 \\
            GRAF[$\sigma^{\texttt{v\_sum}}$] & 33.528 & 0.648 & 38.855 & 0.474 & 45.403 & 0.251 & 49.975 & 0.114 & \underline{73.954} & 0.577 & 82.458 & 0.292 \\\hline
            GRAF[$\sigma^{\text{LLM}}$] (best OE) & \underline{34.095} & 0.549 & \underline{39.124} & \underline{0.182} & \textbf{45.713} & \underline{0.006} & \textbf{50.071} & 0.063 & 73.818 & \underline{0.414} & \underline{82.620} & \underline{0.122} \\
            GRAF[$\sigma^{\text{LLM}}$] (best regret) & 34.000 & \underline{0.409} & \underline{39.124} & \underline{0.182} & \textbf{45.713} & \underline{0.006} & \textbf{50.027} & \underline{0.004} & 73.818 & \underline{0.414} & \underline{82.620} & \underline{0.122} \\
            \thickhline
        \end{tabular}
        \label{subtab:special-k}
    \end{subtable}
    \label{tab:sstc-special}
\end{table*}

\paragraph{SSTC With Characterization.}
Table \ref{tab:sstc-special} shows results for SSTC 1 (top) and SSTC 2 (bottom).
Across nearly all settings, $\text{GRAF}[\sigma^{\text{LLM}}]$ outperforms all GRAF variants with handcrafted $\sigma$ in both OE and max regret, and
even surpasses IBR in OE in several larger settings (e.g., n30m3 and n60m3 in Table~\ref{subtab:special-k}).
While IBR undoubtedly yields the best results in SSTC 1, we notice the gaps in its performance with respect to GRAF-based methods are substantially reduced in SSTC 2, which could be attributed to the more complex game structures in the presence of multiple-contest selections.
For GRAF, interestingly, the more intuitive scores $\sigma^{\texttt{v\_max}},\sigma^{\texttt{v\_sum}}$ and the simplistic score $\sigma^{\text{ID}}$ yield almost identical results, underscoring the nontriviality of designing $\sigma$ in general SSTC games.
Incidentally, while the LLM-generated scoring algorithms were trained exclusively on small instances of SSTC~3 (n10m3), they transfer effectively to instances of SSTC~1 and SSTC~2 that differ in both problem structure and scale, which highlights the generalizability of LLMScore.

\paragraph{SSTC Without Characterization.}

Table \ref{tab:sstc-general} shows results for SSTC 3 (top) and SSTC 4 (bottom).
In these more challenging settings,
$\text{GRAF}[\sigma^{\text{LLM}}]$ outperforms \emph{all} baselines, including IBR, in both OE and max regret across all settings and instance sizes.
Given the heterogeneity of various contestant-specific parameters, all handcrafted scores beside $\sigma^{\texttt{f\_max}}$ and $\sigma^{\texttt{f\_sum}}$ yield identical results, and the improvement between $\sigma^{\texttt{f\_max}}$, $\sigma^{\texttt{f\_sum}}$ over $\sigma^{\text{ID}}$ in either OE or max regret is only observed in small instances (n10m3 and n15m3).
This demonstrates that intuitive designs of $\sigma$ are insufficient in the most general settings of SSTC.

\begin{table*}[h]
    \centering
    \small
    \caption{\textbf{SSTC without characterization.} The values in each cell respectively record median OE and median max regret. The best and second-best results are bolded and underlined, respectively. Methods with identical results are condensed.}
    \begin{subtable}[t]{0.97\textwidth}
        \centering
        \caption{SSTC 3.}
        \begin{tabular}{l|cc|cc|cc|cc|cc|cc}
            \thickhline
            \multicolumn{1}{c|}{Method} & \multicolumn{2}{c|}{\cellcolor{gray!35}n10m3} & \multicolumn{2}{c|}{n15m3} & \multicolumn{2}{c|}{n30m3} & \multicolumn{2}{c|}{n60m3} & \multicolumn{2}{c|}{n30m5} & \multicolumn{2}{c}{n60m5} \\\hline
            IBR & 28.533 & \underline{1.652} & 36.859 & 2.138 & 51.102 & 1.721 & 67.417 & 1.400 & \underline{83.857} & \underline{2.358} & 108.289 & \underline{1.947} \\\hline
            GRAF[$\sigma^{\text{ID}}$/$\sigma^{\texttt{v\_max}}$/$\sigma^{\texttt{v\_sum}}$/$\sigma^{\texttt{e\_max}}$/$\sigma^{\texttt{e\_sum}}$] & 24.966 & 1.876 & 34.470 & 1.960 & \underline{54.522} & 1.770 & \underline{72.497} & 1.783 & 78.442 & 2.932 & \underline{111.633} & 2.326 \\
            GRAF[$\sigma^{\texttt{f\_max}}$] & 26.881 & 2.792 & 35.436 & 2.731 & 46.569 & 2.734 & 60.500 & 2.273 & 73.821 & 4.012 & 93.985 & 3.707 \\
            GRAF[$\sigma^{\texttt{f\_sum}}$] & 26.871 & 2.575 & 37.457 & 2.984 & 48.481 & 2.643 & 63.211 & 2.016 & 73.985 & 4.392 & 97.885 & 3.648 \\\hline
            GRAF[$\sigma^{\text{LLM}}$] (best OE) & \textbf{33.708} & 2.226 & \textbf{42.064} & \underline{1.584} & \textbf{55.967} & \underline{0.642} & \textbf{73.445} & \underline{1.011} & \textbf{84.813} & 2.494 & \textbf{113.877} & \textbf{1.205} \\
            GRAF[$\sigma^{\text{LLM}}$] (best regret) & \underline{32.929} & \textbf{1.576} & \underline{41.167} & \textbf{1.097} & 51.970 & \textbf{0.484} & 72.324 & \textbf{0.716} & 82.511 & \textbf{1.565} & \textbf{113.877} & \textbf{1.205} \\
            \thickhline
        \end{tabular}
        \label{subtab:general-k1}
    \end{subtable}\\
    \medskip
    \begin{subtable}[t]{0.97\textwidth}
        \centering
        \caption{SSTC 4. n60m5 was omitted after IBR failed to complete all test instances within a week.}
        \begin{tabular}{l|cc|cc|cc|cc|cc}
            \thickhline
            \multicolumn{1}{c|}{Method} & \multicolumn{2}{c|}{n10m3} & \multicolumn{2}{c|}{n15m3} & \multicolumn{2}{c|}{n30m3} & \multicolumn{2}{c|}{n60m3} & \multicolumn{2}{c}{n30m5} \\\hline
            IBR & 31.265 & 1.404 & 42.193 & 1.706 & 53.652 & 0.769 & 64.925 & \underline{0.982} & 87.592 & 1.594 \\\hline
            GRAF[$\sigma^{\text{ID}}$/$\sigma^{\texttt{v\_max}}$/$\sigma^{\texttt{v\_sum}}$/$\sigma^{\texttt{e\_max}}$/$\sigma^{\texttt{e\_sum}}$] & 27.560 & 1.450 & 41.152 & 1.408 & 57.699 & 1.646 & \underline{73.916} & 1.651 & 86.627 & 2.054 \\
            GRAF[$\sigma^{\texttt{f\_max}}$] & 30.491 & 1.643 & 41.213 & 1.455 & 50.529 & 1.711 & 62.872 & 1.812 & 83.083 & 2.741 \\
            GRAF[$\sigma^{\texttt{f\_sum}}$] & 29.125 & \underline{0.887} & 42.022 & 1.834 & 50.529 & 1.373 & 63.264 & 1.539 & 82.404 & 3.172 \\\hline
            GRAF[$\sigma^{\text{LLM}}$] (best OE) & \textbf{35.749} & 1.166 & \textbf{48.622} & \underline{0.813} & \textbf{59.506} & \underline{0.624} & \textbf{73.975} & 1.085 & \textbf{90.387} & \underline{1.339} \\
            GRAF[$\sigma^{\text{LLM}}$] (best regret) & \underline{35.517} & \textbf{0.816} & \underline{45.128} & \textbf{0.762} & \underline{57.713} & \textbf{0.514} & 71.150 & \textbf{0.771} & \underline{89.700} & \textbf{1.249} \\
            \thickhline
        \end{tabular}
        \label{subtab:general-k}
    \end{subtable}
    \label{tab:sstc-general}
\end{table*}

Figures \ref{fig:box-ssstc_general}--\ref{fig:tradeoff-ssstc_general} further confirm our findings: The LLM-augmented algorithms dominate both IBR and $\text{GRAF}$ with handcrafted $\sigma$
across all settings of SSTC 3, with the top 10 (by fitness at the end of LLMScore training) consistently forming the approximate Pareto front in the OE-regret trade-off space.
(Analogous results for SSTC 4 in Figures \ref{fig:box-sstc_general} and \ref{fig:tradeoff-sstc_general}.)
Crucially, these impressive performances are achieved through scoring algorithms trained only on small instances of SSTC 3 (n10m3) and applied directly---without any retraining---to larger instances and structurally more complex settings (SSTC 4), once again underscoring the generalizability of LLMScore.

\paragraph{Analysis of LLM-Generated Scores.}
Given the interpretability of LLMScore via code, we analyze the scores $\sigma^{\text{LLM}}$ behind its good results.
Figure \ref{fig:learned-alg1} illustrates a learned scoring algorithm from ``$\text{GRAF}[\sigma^{\text{LLM}}]$ (best OE)'' in Tables \ref{tab:sstc-special} and \ref{tab:sstc-general}.
From the code block, the score for contestant $i\in N$ can be formally expressed as $\sigma_i=\phi_i\sum^m_{j=1}[\hat{v}_{ij}\ln(p_{ij}(e_{ij}))-\ln(c_{ij}(e_{ij}))]$, where ``adjusted valuation'' $\hat{v}_{ij}=v_{ij}\cdot\frac{n-i+1}{n}$ and ``rank bonus'' $\phi_i=\frac{n-i}{n}$. The effort $e_{ij}$ for defining $p_{ij}(e_{ij})=\frac{f_{ij}(e_{ij})}{\sum_{l\in N}f_{lj}(e_{lj})}$ and $c_{ij}(e_{ij})$ is $e_{ij}=\frac{n-i}{(n+1)^2}$ for any $j\in M$.
(Please refer to the caption of Figure \ref{fig:learned-alg3} for ``$\text{GRAF}[\sigma^{\text{LLM}}]$ (best regret)''.)
We see that the scores in both examples are substantially more sophisticated than the handcrafted ones from \nameref{par:baselines}.

\paragraph{Runtime Analysis.}

Figure \ref{fig:runtime} compares the median runtime of IBR against GRAF (with $\sigma^{\text{ID}}$ and $\sigma^{\text{LLM}}$) across instances of SSTC 4 (the most complex setting among the four) on a logarithmic scale.
We see that GRAF with either score is noticeably faster than IBR: A single run of IBR (i.e., one randomly-generated initialization of $\mathbf{a}$) takes at least $10\times$ the time of $\text{GRAF}[\sigma^{\text{ID}}]$ or $\text{GRAF}[\sigma^{\text{LLM}}]$ across all test instance sizes, and the gap widens with $n$.
Note that since the training of LLMScore is conducted offline and amortized over all future test instances, the per-instance cost of $\text{GRAF}[\sigma^{\text{LLM}}]$ at test time is almost identical to that of $\text{GRAF}[\sigma^{\text{ID}}]$.
Combined with its generalizability, LLMScore offers a scalable method in SSTC.

\begin{figure}[h]
    \centering
    \Description{Line plot with a logarithmic vertical axis comparing the median per-instance runtime in seconds of IBR, GRAF with the identity score, and GRAF with the LLMScore-designed score as the SSTC 4 instance size grows; both GRAF variants run at least ten times faster than IBR, and the gap widens as the number of contestants increases.}
    \includegraphics[width=0.4\textwidth]{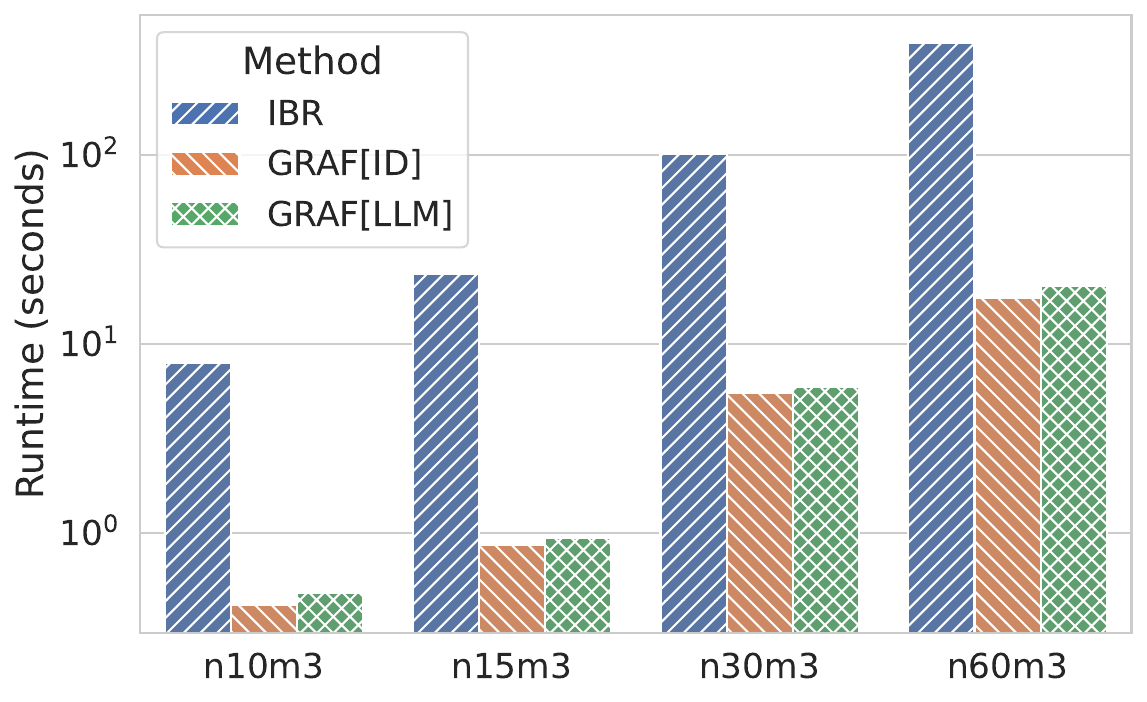}
    \caption{Median runtime (seconds in logarithmic scale) across instances of SSTC 4.
    }
    \label{fig:runtime}
\end{figure}

\section{Related Work}\label{sec:related}
\paragraph{Crowdsourcing and Worker Self-Selection.}
Crowdsourcing contests are a well-studied mechanism for eliciting effort from online workers~\citep{dipalantino2009crowdsourcing,lakhani2013prize,king2013using,cricelli2022crowdsourcing}, and a recurring theme is that workers \emph{self-select} which tasks to undertake rather than being assigned, which makes coordinating the crowd a central challenge. A large body of human-computation work therefore studies how to guide that self-selection through task recommendation and assignment~\citep{difallah2013pick} and incentive design~\citep{boudreau2011incentives}. LLMs, too, are increasingly used in crowdsourcing for laborious tasks such as data annotation~\citep{he2024annollm}, misinformation assessment~\citep{xu2024role}, and content creation~\citep{ma2025large}. Our contribution is complementary: Rather than designing the contests or rewards, or performing the tasks, we take the contests as given and study how a platform should \emph{recommend} where heterogeneous workers compete, optimizing platform value and worker satisfaction jointly.

\paragraph{Self-Selection in Tullock Contests.}
A Tullock contest models a rent-seeking situation in which contestants exert effort to win an indivisible reward, with the winner determined probabilistically~\citep{tullock1980efficient,anne1974political}. The PSNE characterization of a single contest (Theorem~\ref{thm:tullock}) is due to \citet{hillman1989politically,stein2002asymmetric}. \citet{he2024complexity} show that the effort-elasticity parameter $r_i$ governs hardness: With more than $O(\log n)$ contestants and $r_i\in(1,2]$, deciding PSNE existence is NP-complete, so no characterization exists---a regime our SSTC~3 and SSTC~4 instances ($r_{ij}\in[0,2]$) deliberately capture. Table~\ref{tab:models} compares our setting with the two existing SSTC works~\citep{bernergard2017self,cohen2026algorithmic}, both of which assume like-minded contestants ($v_j=v_{1j}=\cdots=v_{nj}$) and give partial PSNE characterizations only under restrictive assumptions (linear or identity production and cost functions, single-contest selection). To our knowledge, general SSTC with multiple-contest selection by heterogeneous contestants with arbitrary nonlinear production and costs remains uncharted.

\begin{table}[h]
    \centering
    \small
    \caption{SSTC model comparison. Parameters: $c,d$---effort cost parameters, $q$---production efficiency, $r$---effort elasticity, $v$---personal value, $k$---participation demand. (\textdagger)---Weak convexity assumption for $c_{ij}(e)$, hence $d_{ij}\ge1$; (*)---Weak concavity assumption for $f_{ij}(e)=q_{ij}\cdot e^{r_{ij}}$, hence $r_{ij}\le1$. \citet{bernergard2017self,cohen2026algorithmic} do not conduct empirical studies. In our experiments, $\mathcal{G}=2\times Beta(20,20)\in[0,2]$.}
    \begin{tabular}{l|c|c|c|c}
        & \citep{bernergard2017self} & \citep{cohen2026algorithmic} & Ours (special) & Ours (general) \\\thickhline
        $c,d$
        & $c_{ij}=d_{ij}=1$
        & $c_{ij};\;d_{ij}\ge1$ (\textdagger)
        & $c_{ij}=d_{ij}=1$
        & $c_{ij}\sim \mathcal{G};\;d_{ij}\sim \mathcal{G}$ \\\hline
        $q$
        & $q_{ij}=1$
        & $q_{ij}$
        & $q_{ij}=1$
        & $q_{ij}\sim \mathcal{G}$ \\
        $r$
        & $r_{ij}=1$
        & $r_{ij}\in(0,1]$ (*)
        & $r_{ij}=1$
        & $r_{ij}\sim \mathcal{G}$ \\\hline
        $v$
        & $v_j$
        & $v_j$
        & $v_i$
        & $v_{ij}\sim \mathcal{U}(5,20)$ \\\hline
        $k$
        & $k_i=1$
        & $k_i=1$
        & $k_i=1$
        & $1\le k_i<m$ \\
    \end{tabular}
    \label{tab:models}
\end{table}

\paragraph{Learning Nash Equilibria.}
A growing body of work computes Nash equilibria via reinforcement learning, e.g., Nash-DQN~\citep{casgrain2022nashdqn}, CCGnet~\citep{wu2023ccgnet}, and RENES~\citep{wang2024renes}. These target mixed-strategy equilibria in small games (at most five players) and, like most deep methods, lack interpretability.

\paragraph{Game Theory and LLMs.}
Work bridging game theory and LLMs largely applies game-theoretic concepts to evaluate LLMs' strategic behavior~\citep{akata2025playing}, interpret them~\citep{horovicz2024tokenshap}, align them with human preferences~\citep{conitzer2024position,swamy2024minimaximalist}, and characterize their societal impact~\citep{fish2023generative,yao2024human} (see~\citep{sun2025survey} for a survey). Others use LLMs to simulate human agents in behavioral economics~\citep{horton2023large,karten2025llm}. Applying LLMs to solve challenging games algorithmically, however, remains largely unexplored.

\paragraph{LLMs for Algorithmic Design.}
LLMs have been applied to mathematical reasoning~\citep{ahn2024large}, code generation~\citep{jiang2024survey}, scientific discovery~\citep{wang2023scientific}, and algorithm design~\citep{liu2024systematic}: FunSearch~\citep{romera2024mathematical} evolves mathematical functions, EoH~\citep{liu2024evolution} evolves heuristics, and ADAS~\citep{hu2024automated} designs agentic systems, with iterative LLM-in-the-loop search shown valuable for evaluation-heavy design tasks~\citep{zhang2024understanding}. Applying such methods to SSTC requires resolving the two challenges from our introduction: We address Challenge~I with a fitness-penalizing scheme combining OE and regret into one objective, and Challenge~II by training only on small instances of a single SSTC setting and reusing the learned algorithms elsewhere---whose demonstrated generalizability validates our approach.

\section{Conclusion}\label{sec:conclusion}
We study how a crowdsourcing platform can guide worker self-selection through recommendations. We introduce GRAF, a greedy framework that, for special cases of SSTC, provably yields outcomes with zero worker regret and platform optimality, and---when augmented by our LLMScore framework---generates high-quality or even near-optimal self-selection outcomes with low worker dissatisfaction in general settings. Because LLMScore outputs human-readable code, the recommendation logic it learns is transparent enough for a platform operator to inspect, audit, and adjust rather than treat as a black box. Together, these let a platform offer advice that benefits the platform and its workers alike.

\paragraph{Limitations and Future Work.}\label{par:limitations}
Our evaluation is synthetic: Workers are modeled as utility-maximizing agents with parameters drawn from chosen distributions rather than estimated from a live platform, so the modeled regret and overall effort are proxies for---not measurements of---real worker dissatisfaction and solution quality. We nonetheless regard regret as behaviorally meaningful: A worker who repeatedly enters contests they would not have chosen with better advice is at risk of disengaging, so driving regret toward zero aligns with worker retention. We thus position this work as the algorithmic foundation on which human-grounded studies can build---e.g., calibrating instances against real platform data (such as Topcoder participation, skill, and prize distributions~\citep{lakhani2013prize,boudreau2011incentives}) and validating recommendations with crowdworkers in the loop.
On the technical side, our effort solver (Brent's algorithm~\citep{brent2013algorithms}) may return local optima, and PSNE existence in fully general SSTC remains open.
Finally, the LLM-generated scoring algorithms may process the contestant index $i$ such that a relabeling of an otherwise identical instance can yield a different greedy order and hence different recommendations.
Future work could consider canonicalizing the raw index \citep{lynch1999canonicalization} to ensure permutation equivariance.

\begin{acks}
Hau Chan is supported by the National Science Foundation under grants IIS:RI \#2302999 and IIS:RI \#2414554. Nguyen Thach is supported by the National Science Foundation under grant IIS:RI \#2414554. Part of this work was conducted while Hau Chan was a Postdoctoral Fellow at the Laboratory for Innovation Science at Harvard. The content is solely the responsibility of the authors and does not necessarily represent the official views of the funding agencies.
\end{acks}

\section*{Disclosure of LLM Use}
The LLMScore framework (Section~\ref{sec:llmscore}) queries LLMs to generate and refine GRAF's scoring functions, a central contribution of this work.
Beyond that, we use LLMs to assist with copyediting and polishing the writing of this manuscript.
All technical content, results, and analyses are the authors' own creation; LLMs were not used elsewhere in the research.

\section*{Ethical Considerations}
This study is entirely computational: All experiments are run on synthetic SSTC instances, and no human subjects, personal data, or live platform interactions are involved. The work is motivated by improving outcomes for both crowdsourcing platforms and the workers on them, and we view low-regret recommendations as a way to reduce worker dissatisfaction rather than to extract more effort against workers' interests. We note that any deployment should respect worker autonomy---recommendations are advisory and constrained by each worker's own demand $k_i$---and should be validated against real worker behavior before being relied upon, a point we return to in our \nameref{par:limitations}.

\bibliographystyle{ACM-Reference-Format}
\bibliography{main}

\renewcommand{\thefigure}{S\arabic{figure}}
\renewcommand{\thetable}{S\arabic{table}}
\renewcommand{\theequation}{S\arabic{equation}}
\renewcommand{\thelisting}{S\arabic{listing}}

\appendix




\begin{figure*}[h]
    \centering
    \Description{Text box containing the natural-language task-description prompt given to the LLM, which explains the SSTC setting (contestants, contests, effort, and winning probability) and instructs the model to produce a scoring algorithm that balances overall effort against regret.}
    \begin{subfigure}[h]{0.95\textwidth}
        \centering
        \footnotesize
        \begin{promptbox}{\textwidth}{Task Description}
        Given N players and M contests (N > M), the task is to find an ex ante priority score for each player in order to greedily add them to one of the contests.

        \hphantom\\

        **Problem Context:**

        - Each player has a valuation for winning a specific contest and a set of parameters affecting winning probability and cost

        - Each player seeks to participate in one contest

        - All contests are empty initially, and players iteratively enter one of them based on their priority scores

        - Once entered, players in a contest exert costly effort depending on existing players

        - The utility of player i participating in contest j is (valuation for j × winning probability) - cost

        - Both winning probability and cost are proportional to the effort put in (see below)

        - Goal: Maximize social welfare, defined as the sum of efforts from all players
        \end{promptbox}
    \end{subfigure}
    \caption{Task description used in LLM prompts. ``Players'' here refer to contestants.}
    \label{fig:prompt-task}
    \vspace{30pt}
\end{figure*}


\begin{figure*}[h]
    \centering
    \Description{Text boxes showing the three prompt templates---initialization, exploration, and modification---used to make the LLM generate, diversify, and refine candidate scoring algorithms; the interchangeable prompt-strategy portions are highlighted in purple.}
    \begin{subfigure}[h]{0.8\textwidth}
        \centering
        \footnotesize
        \begin{promptbox}{\textwidth}{Prompt for Initialization}
        \textcolor{cyan}{[Task Description]}

        \hphantom\\

        First, describe your new algorithm and main steps in one sentence. The description must be inside a brace. Next, implement it in Python using the following template:

        \hphantom\\

        \textcolor{dkgreen}{[Code Template]}

        \hphantom\\
        
        Do not give additional explanations and do not use additional other packages.
        \end{promptbox}
    \end{subfigure}\\
    \begin{subfigure}[h]{0.45\textwidth}
        \centering
        \footnotesize
        \begin{promptbox}{\textwidth}{Prompt for Exploration (E)}
        \textcolor{cyan}{[Task Description]}

        \hphantom\\

        I have 2 existing algorithms with their codes as follows:\\
        No. 1 algorithm and the corresponding code are:\\
        \textcolor{blue}{[Algorithm 1 Description]}\\
        \textcolor{dkgreen}{[Code 1]}

        \hphantom\\

        No. 2 algorithm and the corresponding code are:\\
        \textcolor{blue}{[Algorithm 2 Description]}\\
        \textcolor{dkgreen}{[Code 2]}

        \hphantom\\

        \textcolor{mauve}{Please help me create a new algorithm that has a totally different form from the given ones.}\\
        First, describe your new algorithm and main steps in one sentence. The description must be inside a brace. Next, implement it in Python using the following template:

        \hphantom\\

        \textcolor{dkgreen}{[Code Template]}

        \hphantom\\
        
        Do not give additional explanations and do not use additional other packages.
        \end{promptbox}
    \end{subfigure}%
    ~
    \begin{subfigure}[h]{0.5\textwidth}
        \centering
        \footnotesize
        \begin{promptbox}{\textwidth}{Initial Prompt for Modification (M1)}
        \textcolor{cyan}{[Task Description]}

        \hphantom\\

        I have one algorithm with its code as follows:\\
        Algorithm description:  \textcolor{blue}{[Algorithm Description]}\\
        Code: \textcolor{dkgreen}{[Code]}\\
        Welfare: \textcolor{brown}{[Fitness Value]}

        \hphantom\\

        \textcolor{mauve}{If welfare is positive, please identify the main algorithm parameters and assist me in creating a new algorithm that has a different parameter settings of the score function provided. Otherwise, if welfare is negative, please help me revise the algorithm to improve its efficiency, which means that anytime a player i is added to a contest, no other players i-1 would gain in utility by deviating to another contest.}\\
        First, describe your new algorithm and main steps in one sentence. The description must be inside a brace. Next, implement it in Python using the following template:

        \hphantom\\

        \textcolor{dkgreen}{[Code Template]}

        \hphantom\\
        
        Do not give additional explanations and do not use additional other packages.
        \end{promptbox}
    \end{subfigure}
    \caption{Prompts used for initialization, exploration, and modification. The prompt strategies are marked in purple. See Figure \ref{fig:prompt-task} for ``Task Description'' and Listing \ref{lst:code-template} for the designed ``Code Template''. Note that by ``improving efficiency'', we mean reducing regret.}
    \label{fig:init_prompt-design}
\end{figure*}


\begin{figure*}[h]
    \centering
    \begin{subfigure}[h]{0.9\textwidth}
        \centering
        \footnotesize
        \begin{promptbox}{\textwidth}{Prompt for Evolving Modification Prompt Strategies}
        \textcolor{cyan}{[Task Description]}

        \hphantom\\

        I want to leverage the capabilities of LLMs to generate heuristic algorithms that can efficiently tackle this problem. I have already developed a set of initial prompts and observed the corresponding outputs. However, to improve the effectiveness of these algorithms, we need your assistance in carefully analyzing the existing prompts and their results. Based on this analysis, we ask you to generate new prompts that will help us achieve better results in solving the problem.

        \hphantom\\

        I have X existing prompts with average score (the higher the better) as follows:\\
        No. 1 prompt:\\
        Content: \textcolor{mauve}{[Prompt Strategy]}\\
        Score: \textcolor{orange}{[Prompt Fitness Value]}\\
        ...\\
        No. X prompt:\\
        Content: \textcolor{mauve}{[Prompt Strategy]}\\
        Score: \textcolor{orange}{[Prompt Fitness Value]}

        \hphantom\\

        Note that these prompts ask LLMs to refine an input strategy by modifying, adjusting parameters, and removing redundant parts.

        \hphantom\\

        Please help me create a new prompt that has a totally different form from the given ones but can be motivated from them. Describe your new prompt and main steps in one sentence. The description must be inside a brace. Do not give additional explanations.
        \end{promptbox}
    \end{subfigure}
    \Description{Text box showing the meta-prompt used to evolve the prompt strategies themselves: it takes the current set of prompt strategies as input and asks the LLM to propose new ones.}
    \caption{Prompt used for evolving prompt strategies, where X is the size of the current set of prompt strategies.}
    \label{fig:evol_prompt-design}
    \vspace{50pt}
\end{figure*}



\begin{listing*}[b]%
\caption{Code template}%
\label{lst:code-template}%
\begin{lstlisting}[language=Python]
import numpy as np

def priority(valuations, win_prob_params, cost_params):
    '''
    Args:
        valuations (np.ndarray): N-by-M matrix where entry at (i, j) denotes valuation of player i for winning contest j
        win_prob_params (tuple): tuple of two N-by-M matrices (Q, R) storing winning probability parameters
            - For player i participating in contest j with effort e, the production is f_ij = Q[i, j]*e^R[i, j]
            - Player i wins with probability f_ij / (f_ij + F_j), where F_j is the sum of production from other players who currently participate in contest j
        cost_params (tuple): tuple of two N-by-M matrices (C, D) storing cost parameters
            - For player i participating in contest j with effort e, the cost is C[i, j]*e^D[i, j]

    Returns: np.ndarray
        Array of priority scores for the players.
    '''

    # Placeholder (replace with your actual implementation)
    scores = ...

    return scores
\end{lstlisting}
\end{listing*}

\clearpage


\begin{figure*}[h]
    \footnotesize
    \centering
    \Description{Code box showing the Python source of the LLMScore-designed scoring algorithm that achieved the highest overall effort during testing, together with a plain-language summary of how it scores each contestant.}
    \begin{promptbox}{0.9\textwidth}{LLM-Generated Prompt Strategy `MN1'}
        Propose an algorithm that balances player incentives by adjusting the utility function of each player based on their rank within the contest, ensuring it promotes fairness and maximizes social welfare while maintaining efficiency.
    \end{promptbox}
    \medskip
    \begin{promptbox}{0.9\textwidth}{Algorithm Description (generated using the above `MN1' prompt strategy)}
        The new algorithm adjusts the utility function of each player based on their rank within the contest to improve fairness and maximize social welfare, balancing higher valuations with the players' relative positions.
    \end{promptbox}
    \medskip
\begin{lstlisting}[language=Python]
import numpy as np

def priority(valuations, win_prob_params, cost_params):
    '''
    Args:
        valuations (np.ndarray): N-by-M matrix where entry at (i, j) denotes valuation of player i for winning contest j
        win_prob_params (tuple): tuple of two N-by-M matrices (Q, R) storing winning probability parameters
            - For player i participating in contest j with effort e, the production is f_ij = Q[i, j]*e^R[i, j]
            - Player i wins with probability f_ij / (f_ij + F_j), where F_j is the sum of production from other players who currently participate in contest j
        cost_params (tuple): tuple of two N-by-M matrices (C, D) storing cost parameters
            - For player i participating in contest j with effort e, the cost is C[i, j]*e^D[i, j]

    Returns: np.ndarray
        Array of priority scores for the players.
    '''

    N, M = valuations.shape
    Q, R = win_prob_params
    C, D = cost_params
    scores = np.zeros(N)

    for j in range(M):
        for i in range(N):
            effort = (N - i) / (N + 1) ** 2
            f_ij = Q[i, j] * (effort ** R[i, j])
            F_j = sum(Q[k, j] * ((N - k) / (N + 1) ** 2) ** R[k, j] for k in range(N) if k != i)

            winning_prob = f_ij / (f_ij + F_j + 1e-6)
            cost = C[i, j] * (effort ** D[i, j])
            adjusted_valuation = valuations[i, j] * (N - i + 1) / N  # Adjust valuation based on player rank
            
            rank_bonus = (N - i) / N  # Introduce a rank bonus to promote fairness
            priority_score = ((adjusted_valuation * np.log(winning_prob + 1e-6)) - 
                              (np.log(cost + 1e-6))) * rank_bonus

            scores[i] += priority_score

    return scores
\end{lstlisting}
    \caption{High-level description of an LLM-generated scoring algorithm with overall highest OE during testing and its associated code. The score for contestant $i\in N$ can be formally expressed as $\sigma_i=\phi_i\sum^m_{j=1}[\hat{v}_{ij}\ln(p_{ij}(e_{ij}))-\ln(c_{ij}(e_{ij}))]$, where ``adjusted valuation'' $\hat{v}_{ij}=v_{ij}\cdot\frac{n-i+1}{n}$ and ``rank bonus'' $\phi_i=\frac{n-i}{n}$. The effort $e_{ij}$ for defining $p_{ij}(e_{ij})=\frac{f_{ij}(e_{ij})}{\sum_{l\in N}f_{lj}(e_{lj})}$ and $c_{ij}(e_{ij})$ is $e_{ij}=\frac{n-i}{(n+1)^2}$ for any $j\in M$.}
    \label{fig:learned-alg1}
\end{figure*}

\begin{figure*}[h]
    \footnotesize
    \centering
    \Description{Code box showing the Python source of the LLMScore-designed scoring algorithm that achieved the lowest maximum regret during testing, together with a plain-language summary of how it scores each contestant.}
    \begin{promptbox}{0.8\textwidth}{Algorithm Description (generated using the initial M1 prompt strategy)}
        The new algorithm simplifies the priority score calculation by considering only the maximum production among current participants in a contest and applying a logarithmic scaling to both winning probability and cost to reduce fluctuations in scores due to the number of participants.
    \end{promptbox}
    \medskip
\begin{lstlisting}[language=Python]
import numpy as np

def priority(valuations, win_prob_params, cost_params):
    '''
    Args:
        valuations (np.ndarray): N-by-M matrix where entry at (i, j) denotes valuation of player i for winning contest j
        win_prob_params (tuple): tuple of two N-by-M matrices (Q, R) storing winning probability parameters
            - For player i participating in contest j with effort e, the production is f_ij = Q[i, j]*e^R[i, j]
            - Player i wins with probability f_ij / (f_ij + F_j), where F_j is the sum of production from other players who currently participate in contest j
        cost_params (tuple): tuple of two N-by-M matrices (C, D) storing cost parameters
            - For player i participating in contest j with effort e, the cost is C[i, j]*e^D[i, j]

    Returns: np.ndarray
        Array of priority scores for the players.
    '''

    N, M = valuations.shape
    Q, R = win_prob_params
    C, D = cost_params
    scores = np.zeros(N)

    for i in range(N):
        for j in range(M):
            effort = (i + 1) / (N + 1) ** 2
            f_ij = Q[i, j] * (effort ** R[i, j])
            F_j_max = 0
            
            for k in range(N):
                if i != k:
                    f_kj = Q[k, j] * ((k + 1) / (N + 1) ** 2) ** R[k, j]
                    F_j_max = max(F_j_max, f_kj)  # Use max for production effect
            
            winning_prob = f_ij / (f_ij + F_j_max + 1e-6)
            cost = C[i, j] * (effort ** D[i, j])
            log_competition_effect = np.log(F_j_max + 1)  # Logarithmic scaling to competition effect

            priority_score = (valuations[i, j] * np.log(winning_prob + 1e-6)) - (np.log(cost + 1e-6)) - log_competition_effect

            scores[i] += priority_score

    return scores
\end{lstlisting}
    \caption{High-level description of an LLM-generated scoring algorithm with overall lowest max regret during testing and its associated code. The score for contestant $i\in N$ can be formally expressed as $\sigma_i=\sum^m_{j=1}[v_{ij}\ln(p_{ij}(e_{ij}))-\ln(c_{ij}(e_{ij}))-\gamma_{ij}]$, where ``log competition effort'' $\gamma_{ij}=\ln(\hat{F}_{ij}+1)$ and $\hat{F}_{ij}=\max_{l\in N:\,l\neq i}f_{lj}(e_{lj})$. The effort $e_{ij}$ for defining $p_{ij}(e_{ij})=\frac{f_{ij}(e_{ij})}{\sum_{l\in N}f_{lj}(e_{lj})}$, $c_{ij}(e_{ij})$, and $\hat{F}_{ij}$ is $e_{ij}=\frac{i+1}{(n+1)^2}$ for any $j\in M$.}
    \label{fig:learned-alg3}
\end{figure*}


\begin{figure*}[h]
    \centering
    \begin{subfigure}[h]{0.48\textwidth}
        \centering
        \footnotesize
        \Description{Six grouped bar charts, one per SSTC 3 instance size (n10m3, n15m3, n30m3, n60m3, n30m5, n60m5), each showing the percentage improvement in overall effort (solid bars) and the reduction in maximum regret (striped bars) achieved by GRAF with LLMScore-designed scores relative to the baselines.}
        \includegraphics[width=0.99\textwidth]{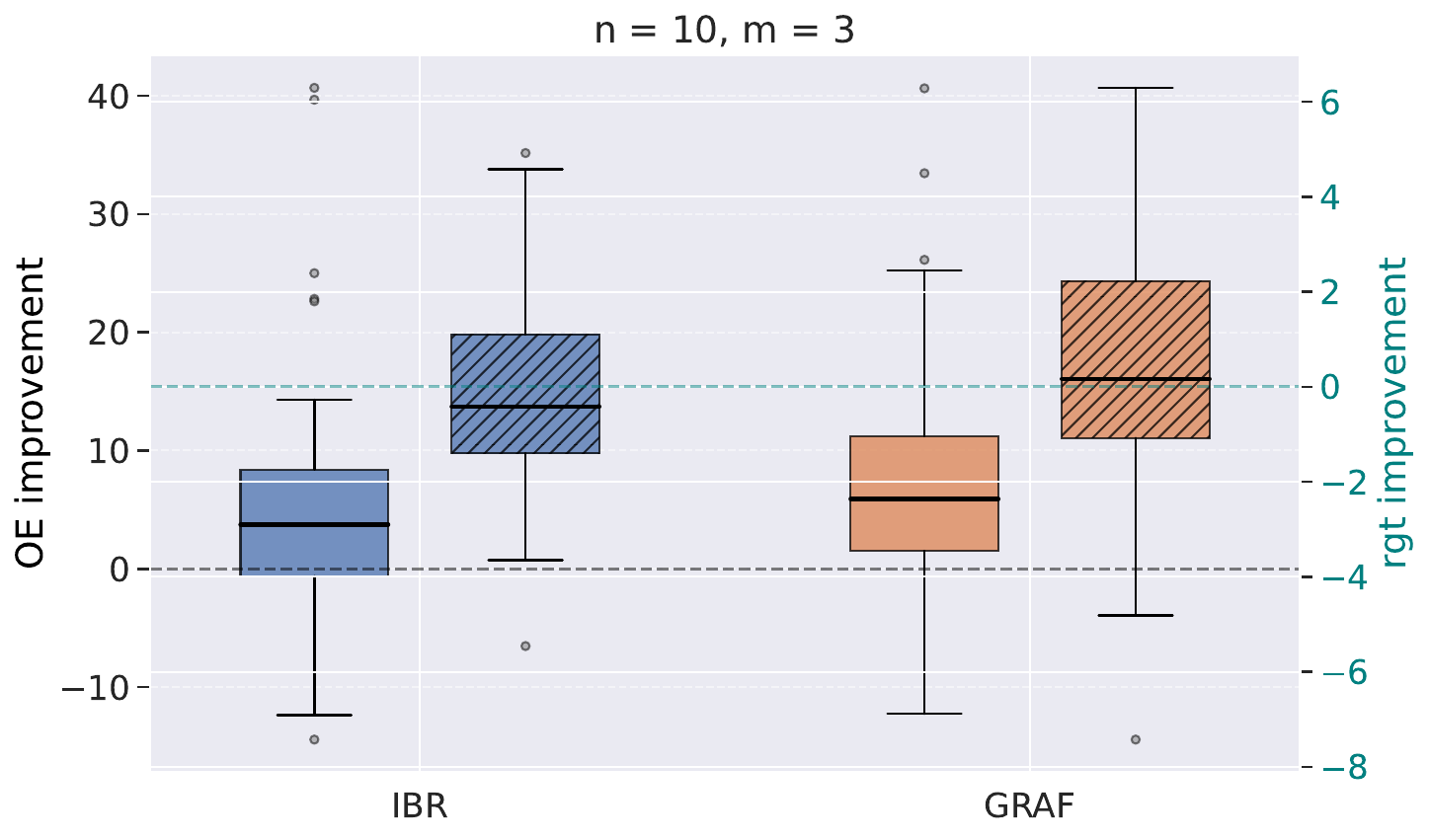}
    \end{subfigure}%
    \begin{subfigure}[h]{0.48\textwidth}
        \centering
        \footnotesize
        \includegraphics[width=0.99\textwidth]{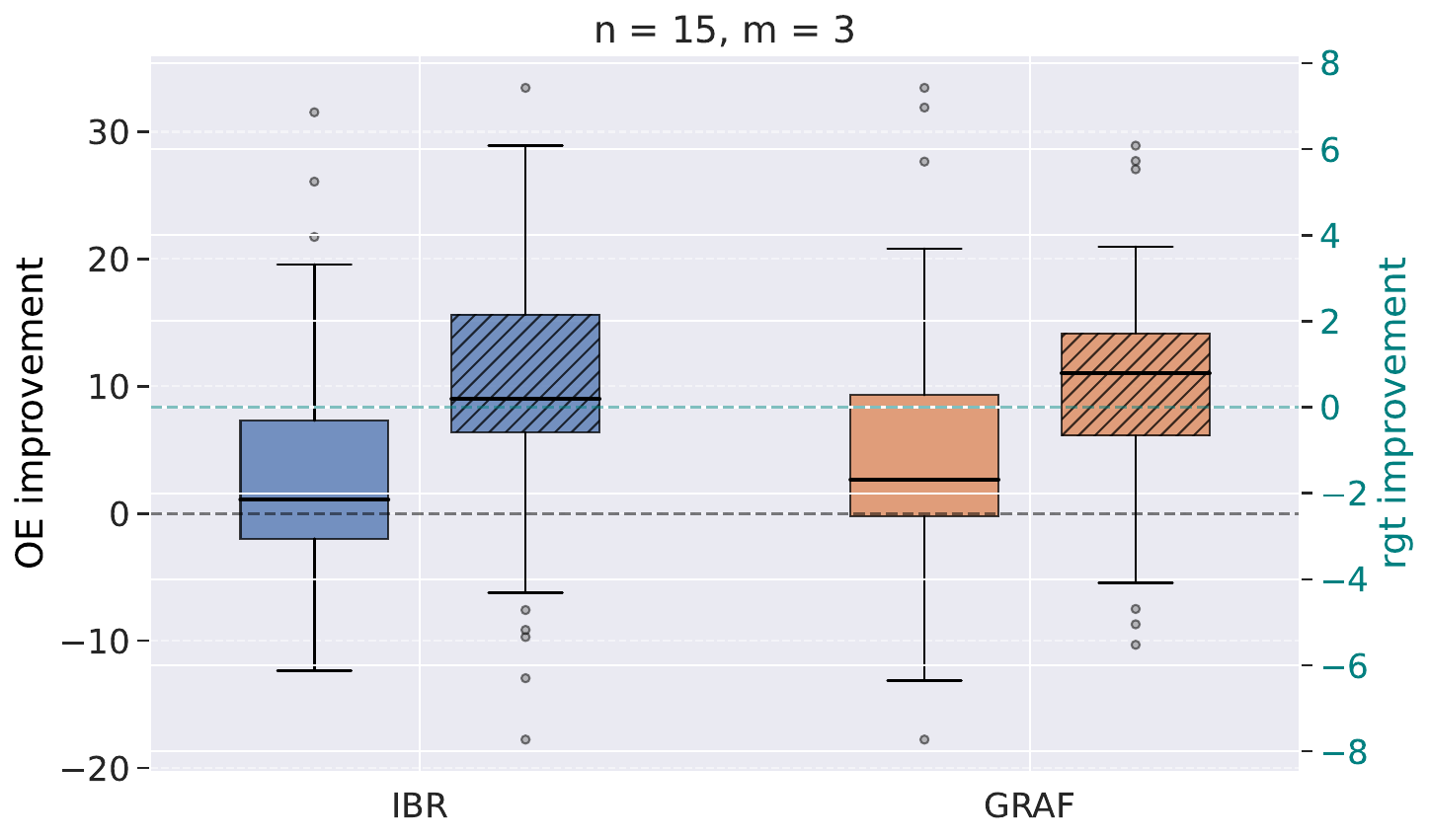}
    \end{subfigure}\\
    \begin{subfigure}[h]{0.48\textwidth}
        \centering
        \footnotesize
        \includegraphics[width=0.99\textwidth]{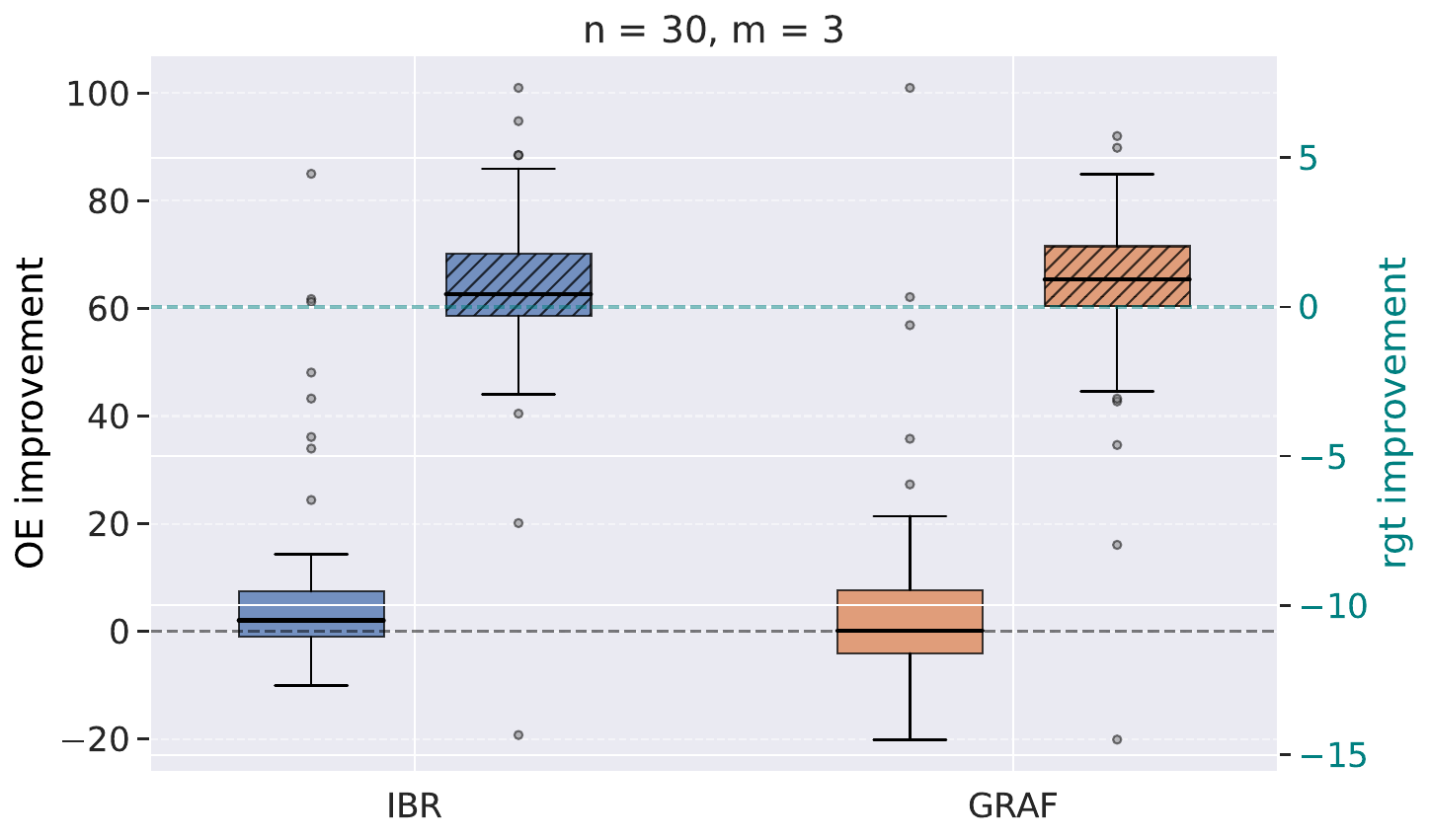}
    \end{subfigure}%
    \begin{subfigure}[h]{0.48\textwidth}
        \centering
        \footnotesize
        \includegraphics[width=0.99\textwidth]{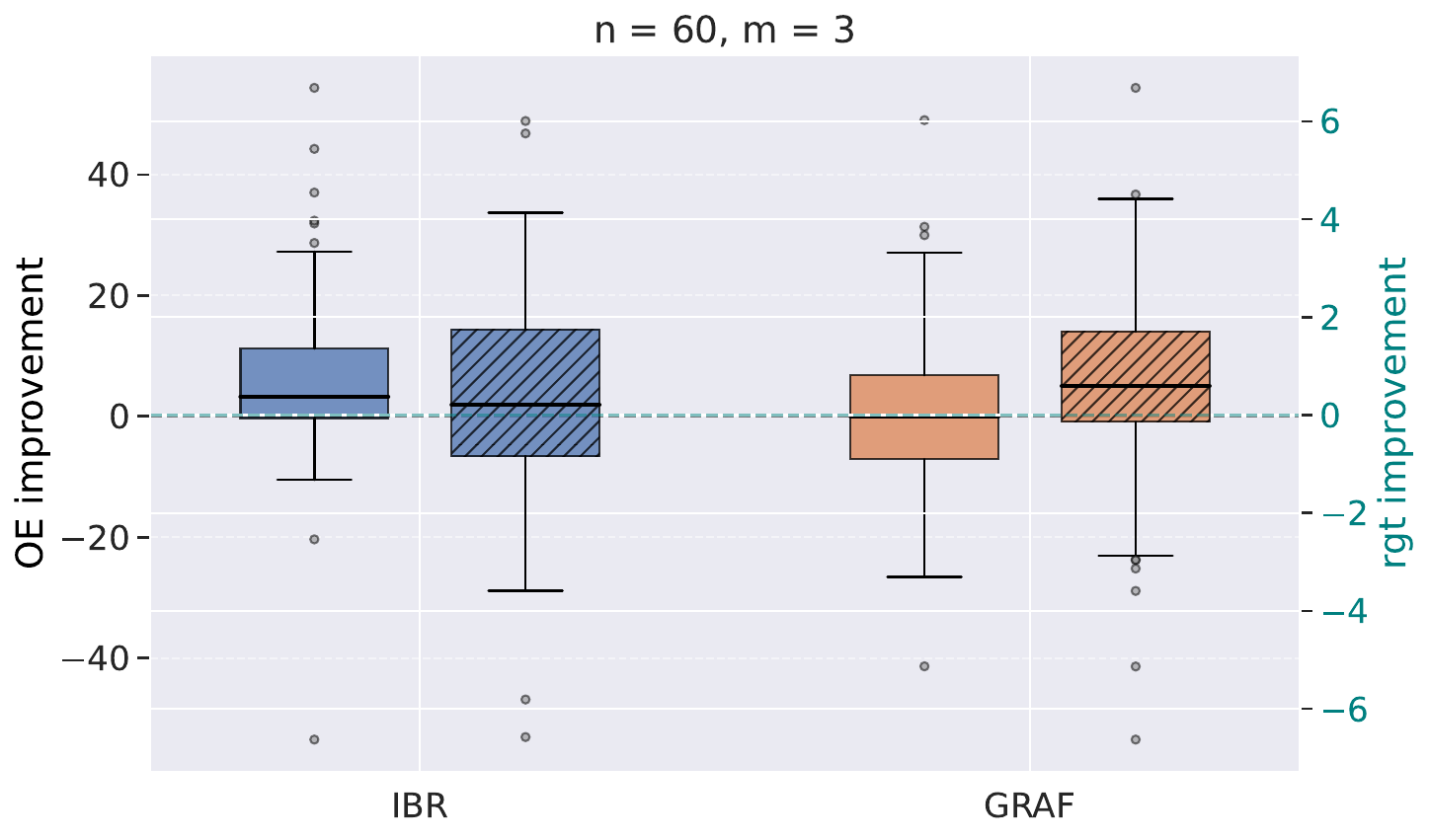}
    \end{subfigure}\\
    \begin{subfigure}[h]{0.48\textwidth}
        \centering
        \footnotesize
        \includegraphics[width=0.99\textwidth]{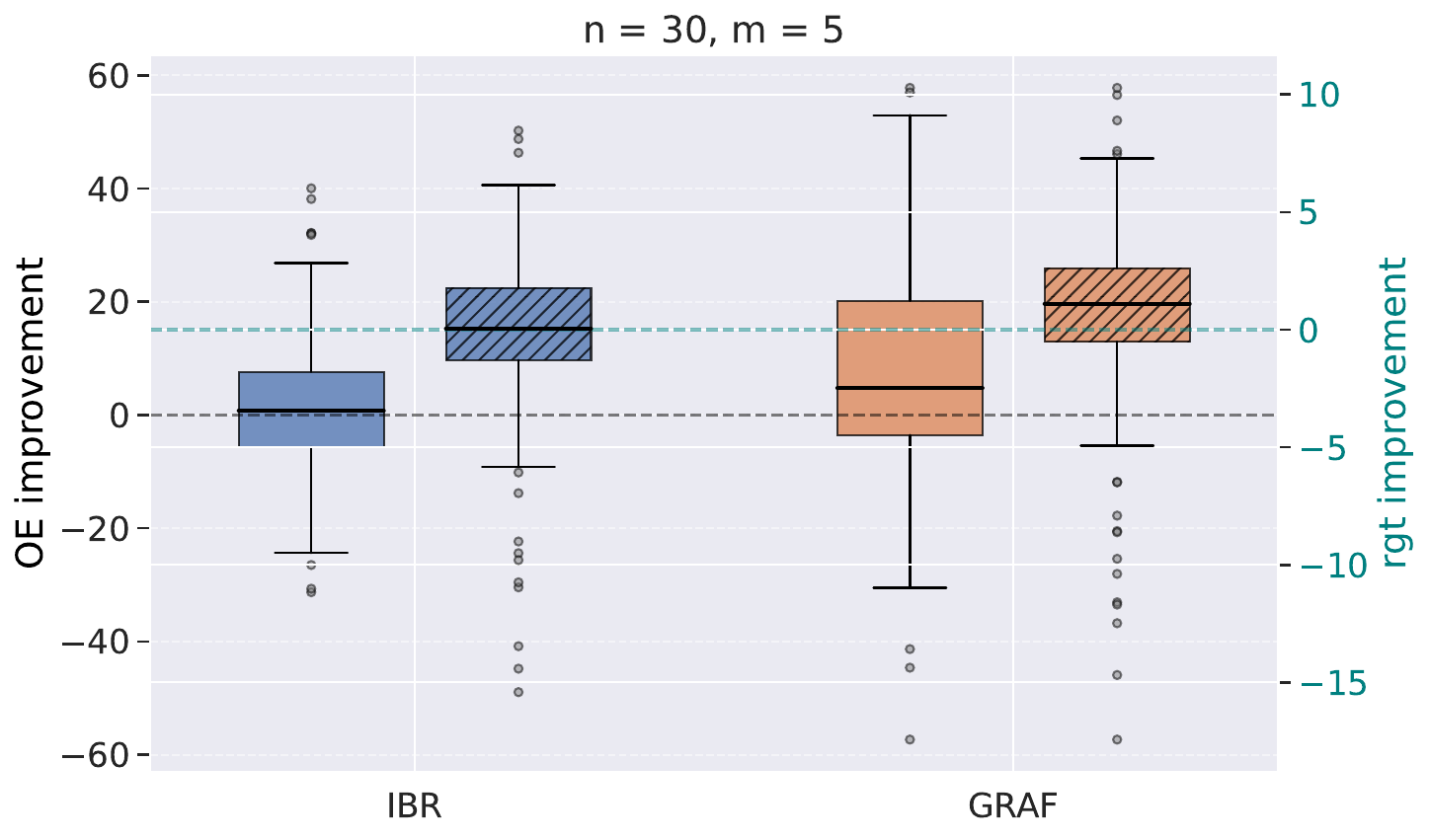}
    \end{subfigure}%
    \begin{subfigure}[h]{0.48\textwidth}
        \centering
        \footnotesize
        \includegraphics[width=0.99\textwidth]{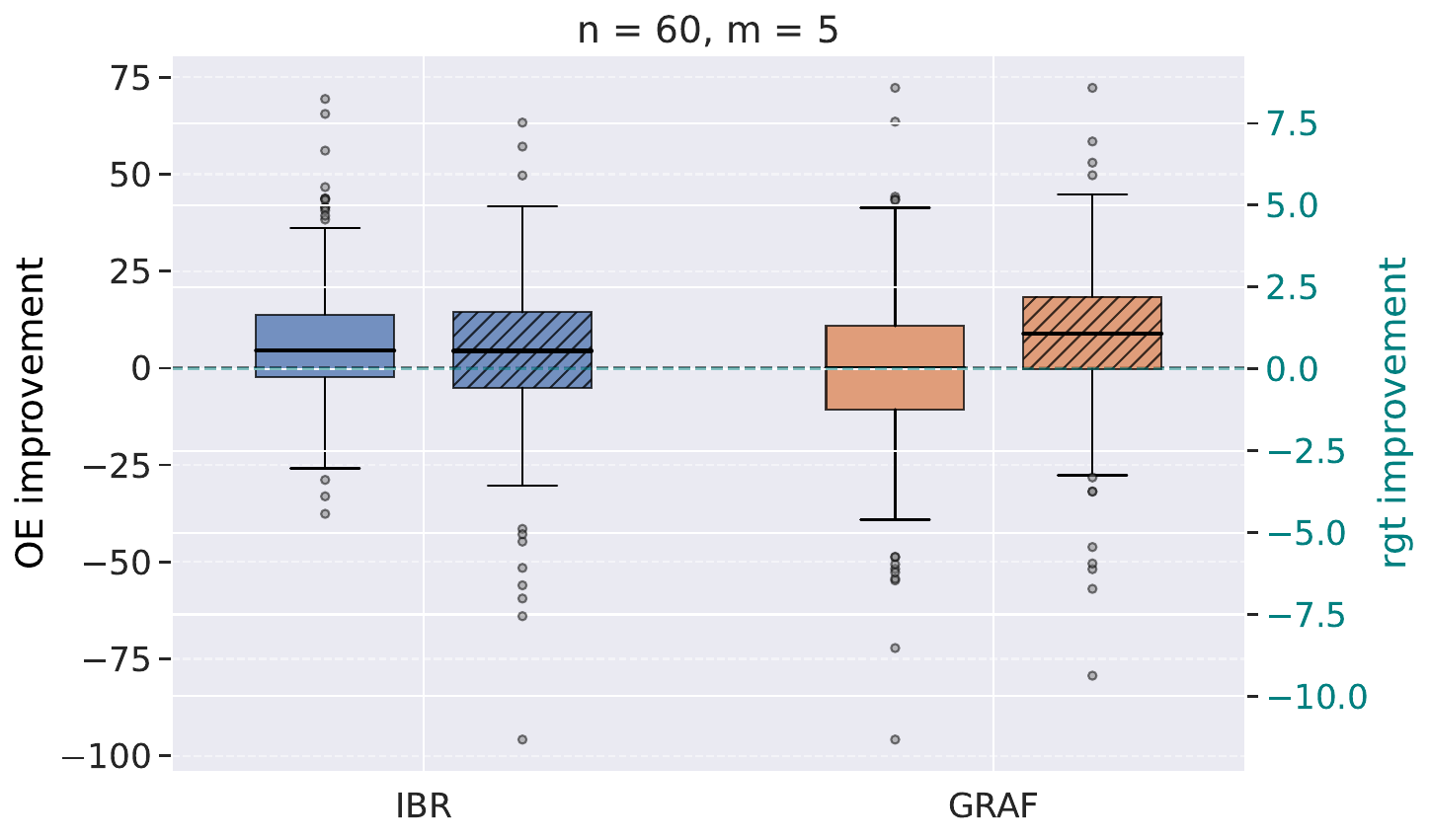}
    \end{subfigure}
    \caption{Improvement in OE (solid) and max regret (striped) in SSTC 3. (Similar results for SSTC 4 in Figure \ref{fig:box-sstc_general}.)}
    \label{fig:box-ssstc_general}
\end{figure*}

\begin{figure*}[t]
    \centering
    \begin{subfigure}[h]{0.32\textwidth}
        \centering
        \footnotesize
        \Description{Six scatter plots, one per SSTC 3 instance size, plotting median overall effort on the horizontal axis against negative median maximum regret on the vertical axis for IBR (squares), GRAF with handcrafted scores (triangles), and the top-ten GRAF with LLMScore scores (circles); the Pareto front is drawn in red, with the LLMScore points clustering toward the high-effort, low-regret upper-right region.}
        \includegraphics[width=0.99\textwidth]{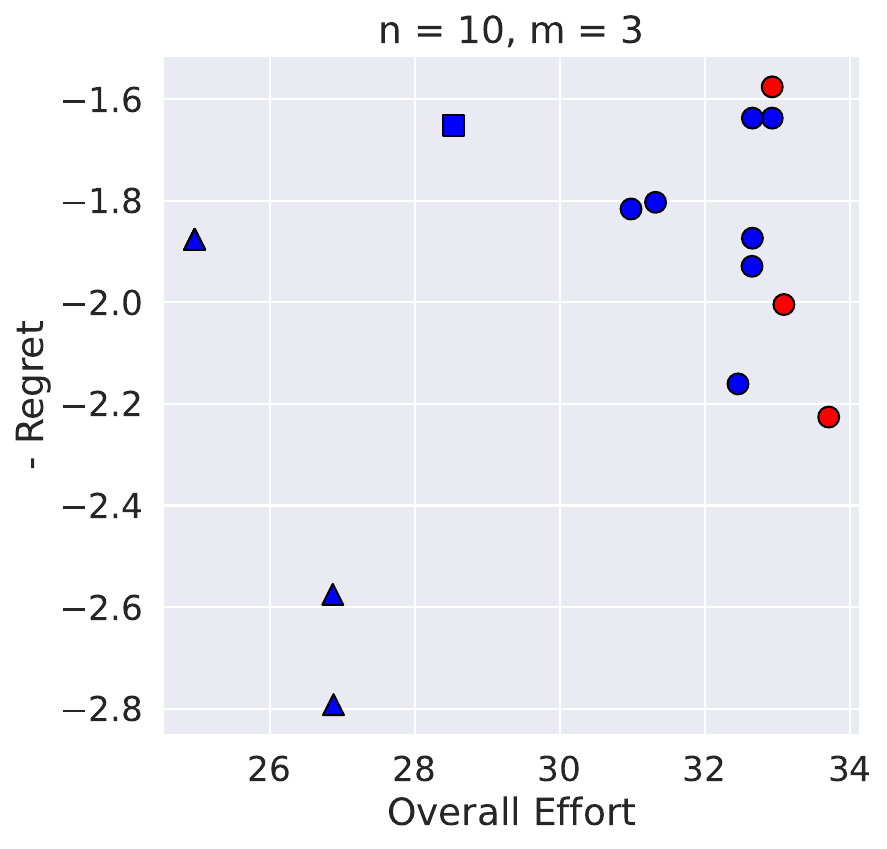}
    \end{subfigure}%
    \begin{subfigure}[h]{0.32\textwidth}
        \centering
        \footnotesize
        \includegraphics[width=0.99\textwidth]{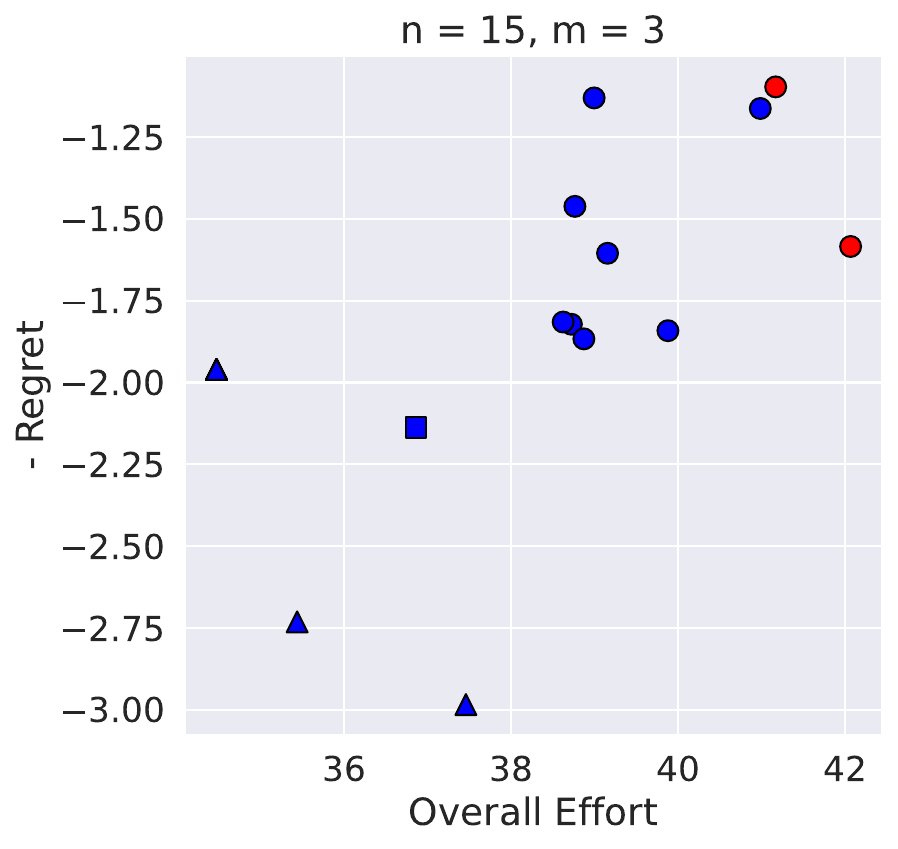}
    \end{subfigure}%
    \begin{subfigure}[h]{0.32\textwidth}
        \centering
        \footnotesize
        \includegraphics[width=0.99\textwidth]{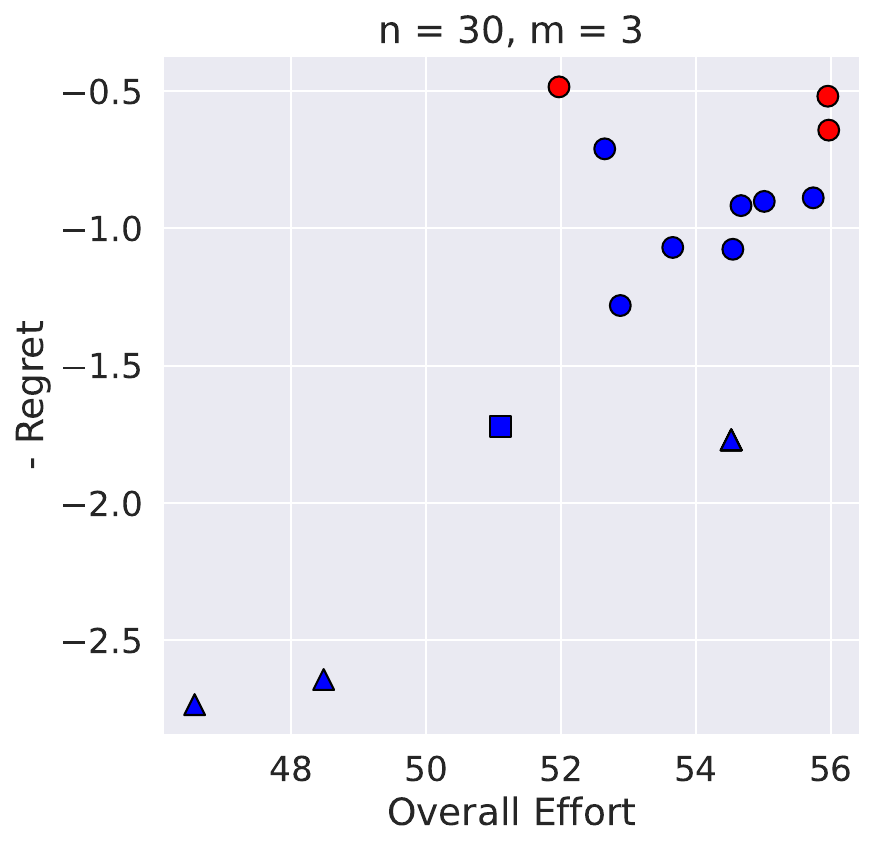}
    \end{subfigure}\\
    \begin{subfigure}[h]{0.32\textwidth}
        \centering
        \footnotesize
        \includegraphics[width=0.99\textwidth]{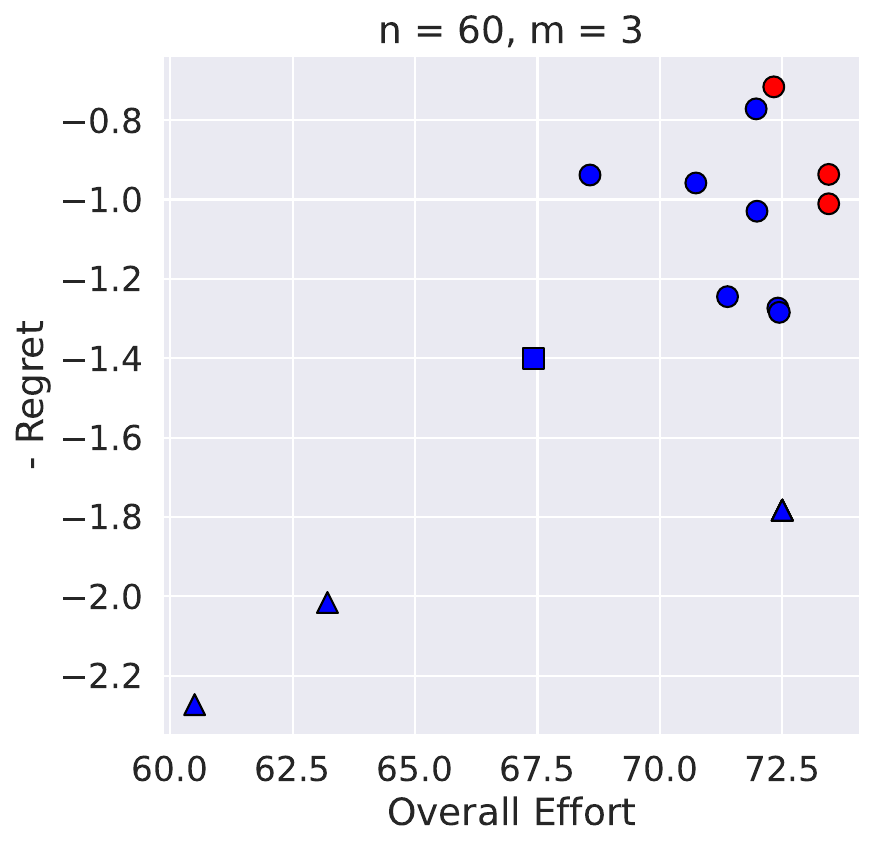}
    \end{subfigure}%
    \begin{subfigure}[h]{0.32\textwidth}
        \centering
        \footnotesize
        \includegraphics[width=0.99\textwidth]{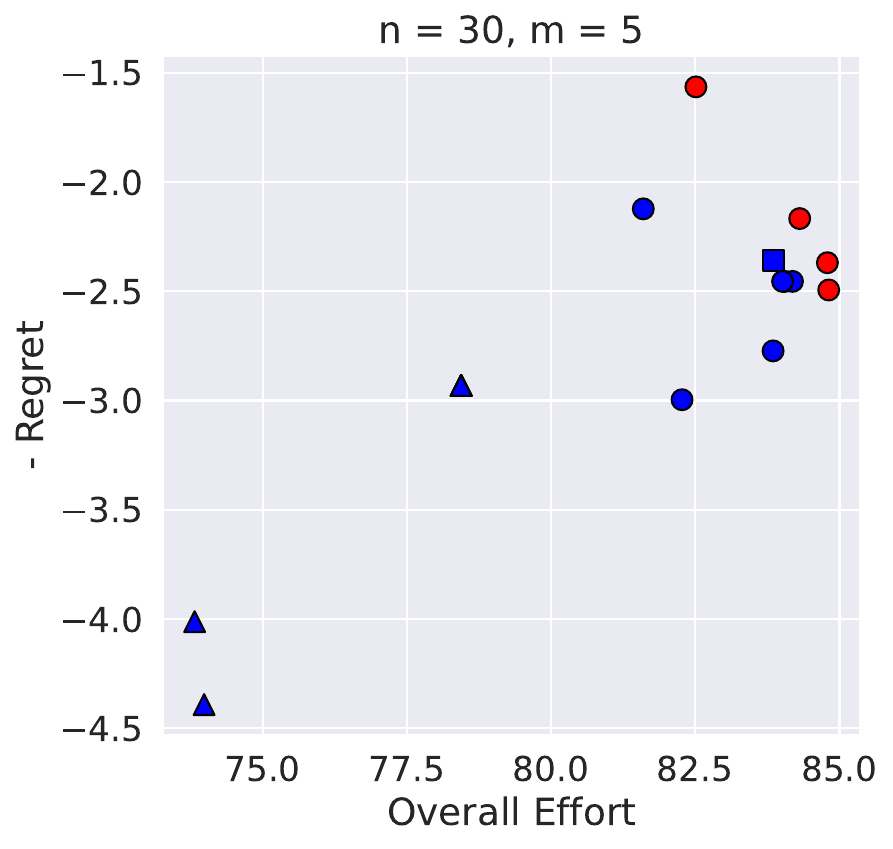}
    \end{subfigure}%
    \begin{subfigure}[h]{0.32\textwidth}
        \centering
        \footnotesize
        \includegraphics[width=0.99\textwidth]{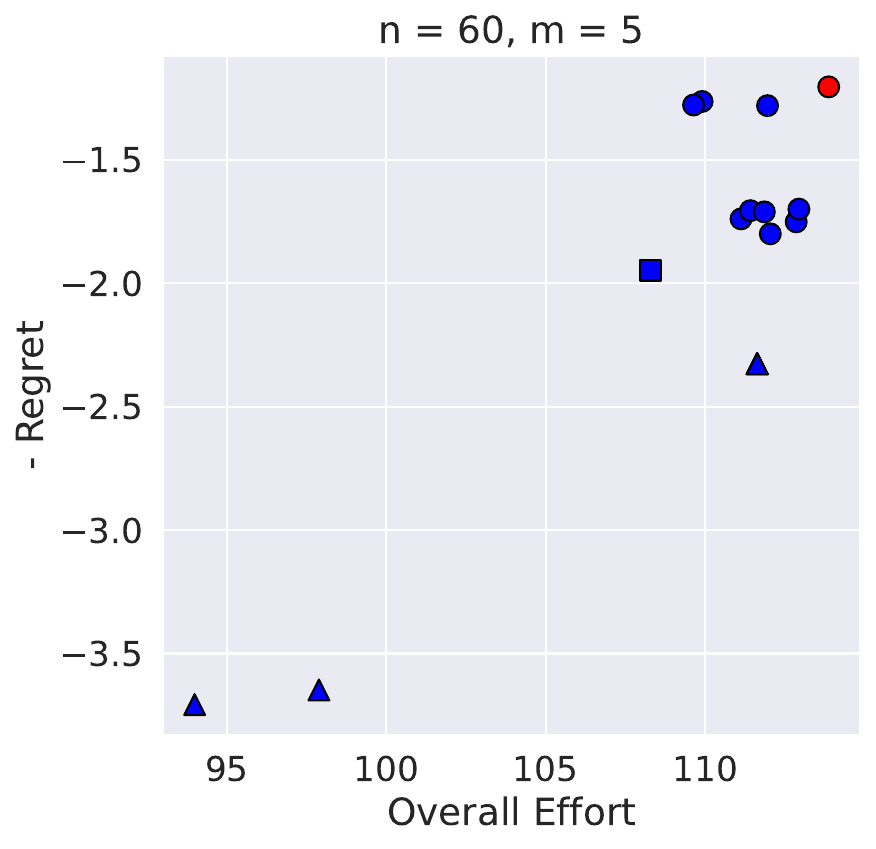}
    \end{subfigure}
    \caption{Median OE vs. (minus) median max regret in SSTC 3. The Pareto front is highlighted red. Legends: $\blacksquare$---IBR; $\blacktriangle$---GRAF[handcrafted]; \scalebox{1.5}{$\bullet$}---GRAF[LLMScore] (top 10). (Similar results for SSTC 4 in Figure \ref{fig:tradeoff-sstc_general}.)}
    \label{fig:tradeoff-ssstc_general}
\end{figure*}

\begin{figure*}[h]
    \centering
    \begin{subfigure}[h]{0.48\textwidth}
        \centering
        \footnotesize
        \Description{Five grouped bar charts, one per SSTC 4 instance size, each showing the percentage improvement in overall effort (solid bars) and the reduction in maximum regret (striped bars) achieved by GRAF with LLMScore-designed scores relative to the baselines.}
        \includegraphics[width=0.99\textwidth]{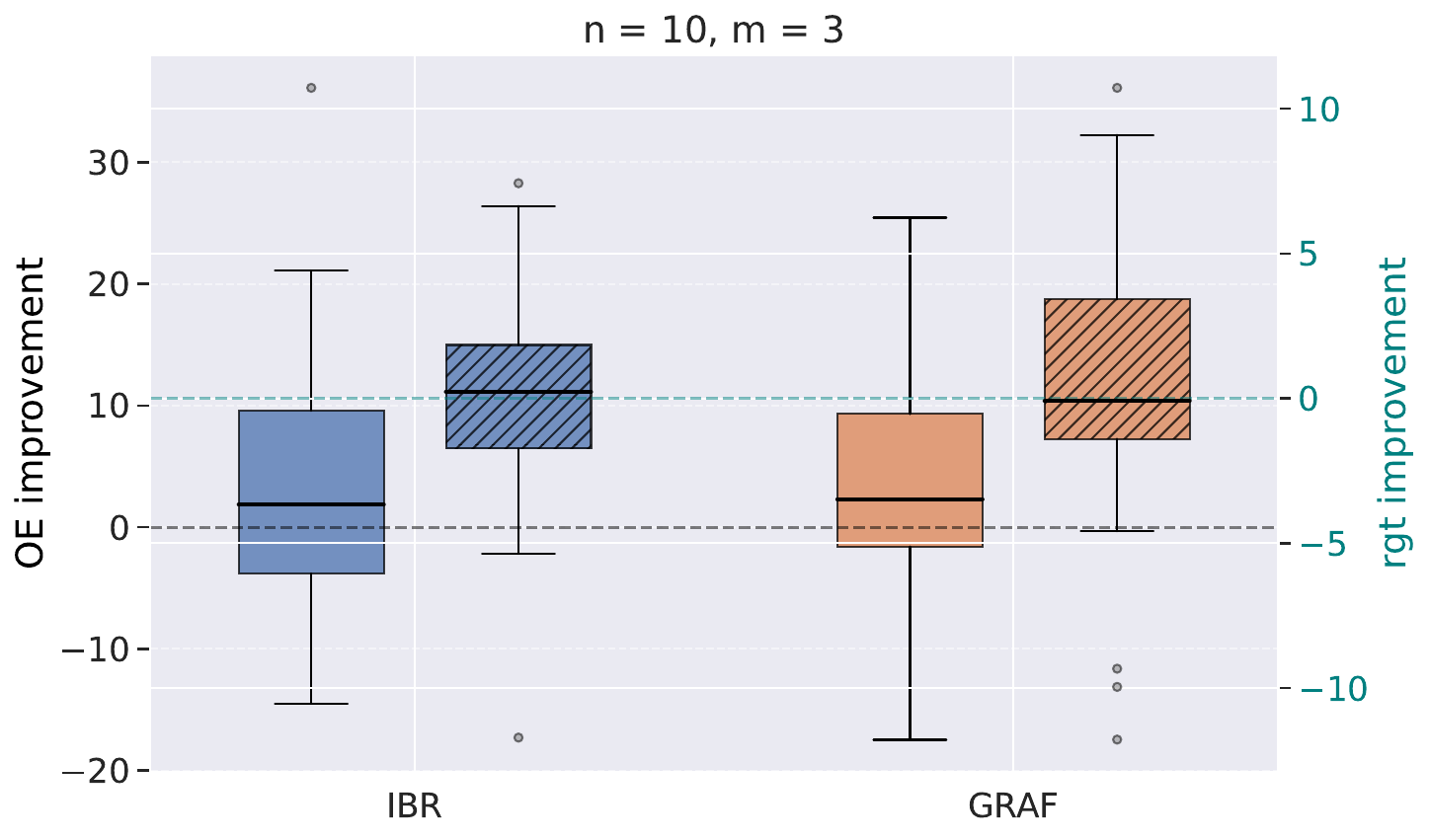}
    \end{subfigure}%
    \begin{subfigure}[h]{0.48\textwidth}
        \centering
        \footnotesize
        \includegraphics[width=0.99\textwidth]{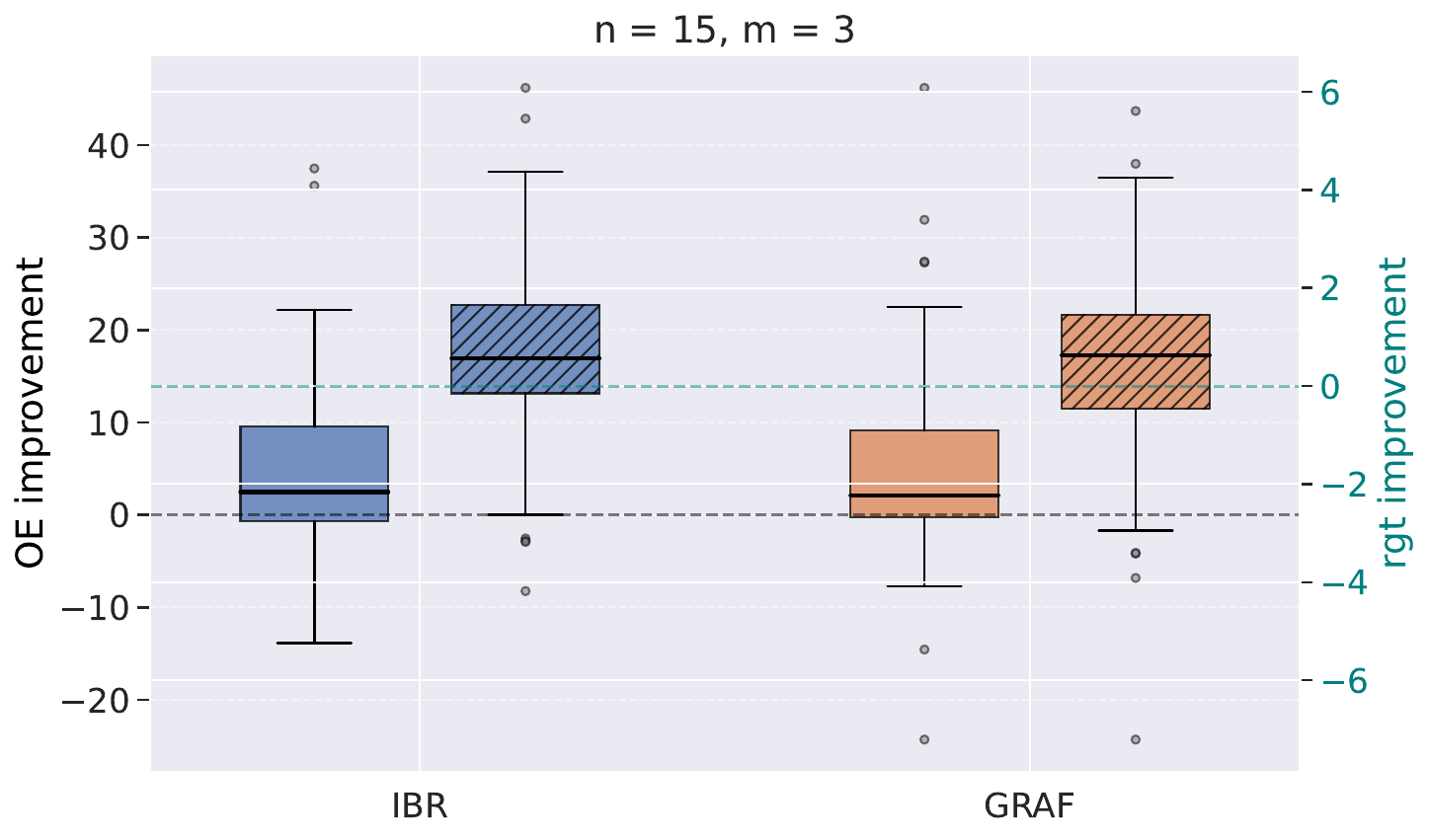}
    \end{subfigure}\\
    \begin{subfigure}[h]{0.48\textwidth}
        \centering
        \footnotesize
        \includegraphics[width=0.99\textwidth]{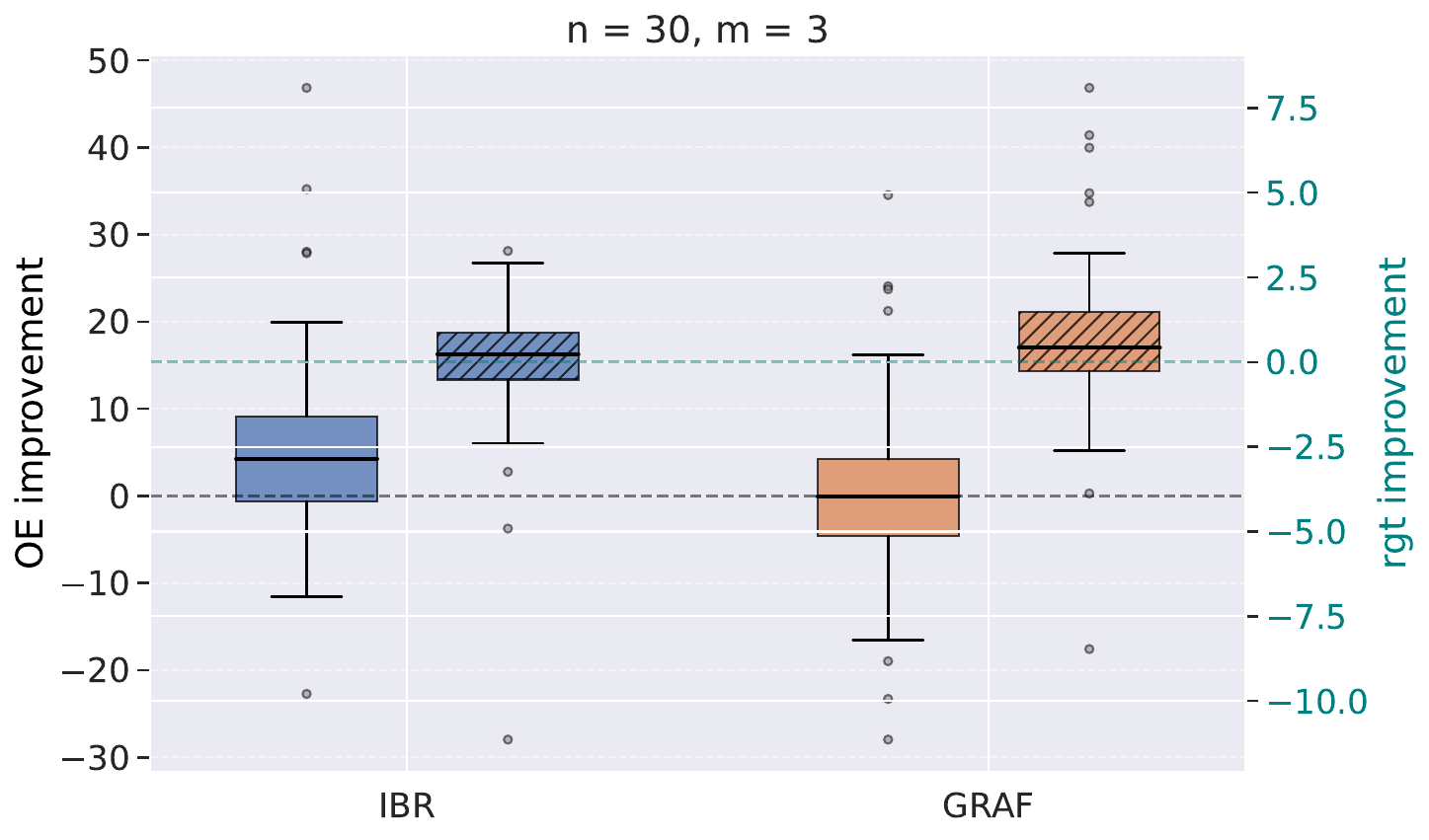}
    \end{subfigure}%
    \begin{subfigure}[h]{0.48\textwidth}
        \centering
        \footnotesize
        \includegraphics[width=0.99\textwidth]{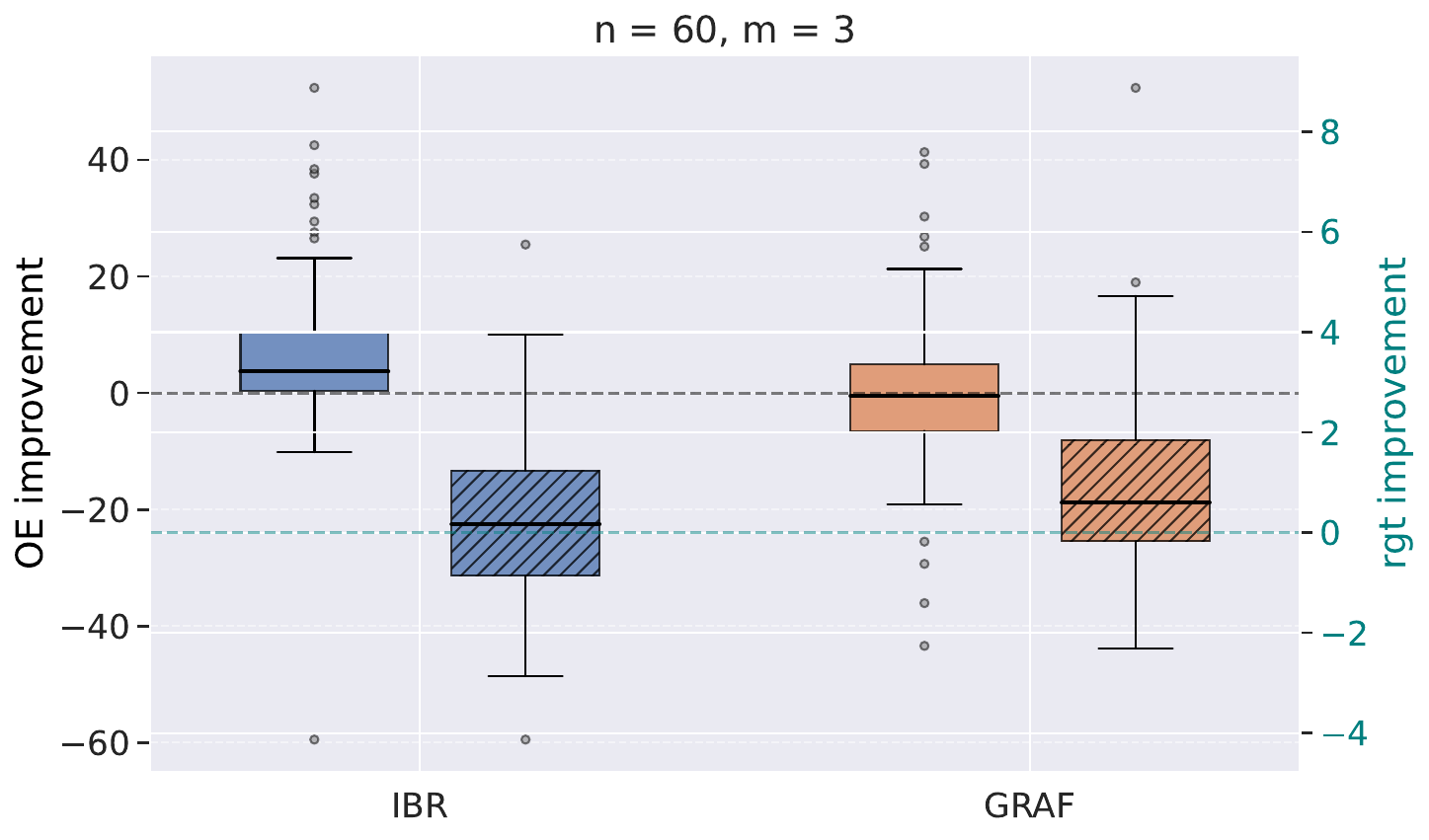}
    \end{subfigure}\\
    \begin{subfigure}[h]{0.48\textwidth}
        \centering
        \footnotesize
        \includegraphics[width=0.99\textwidth]{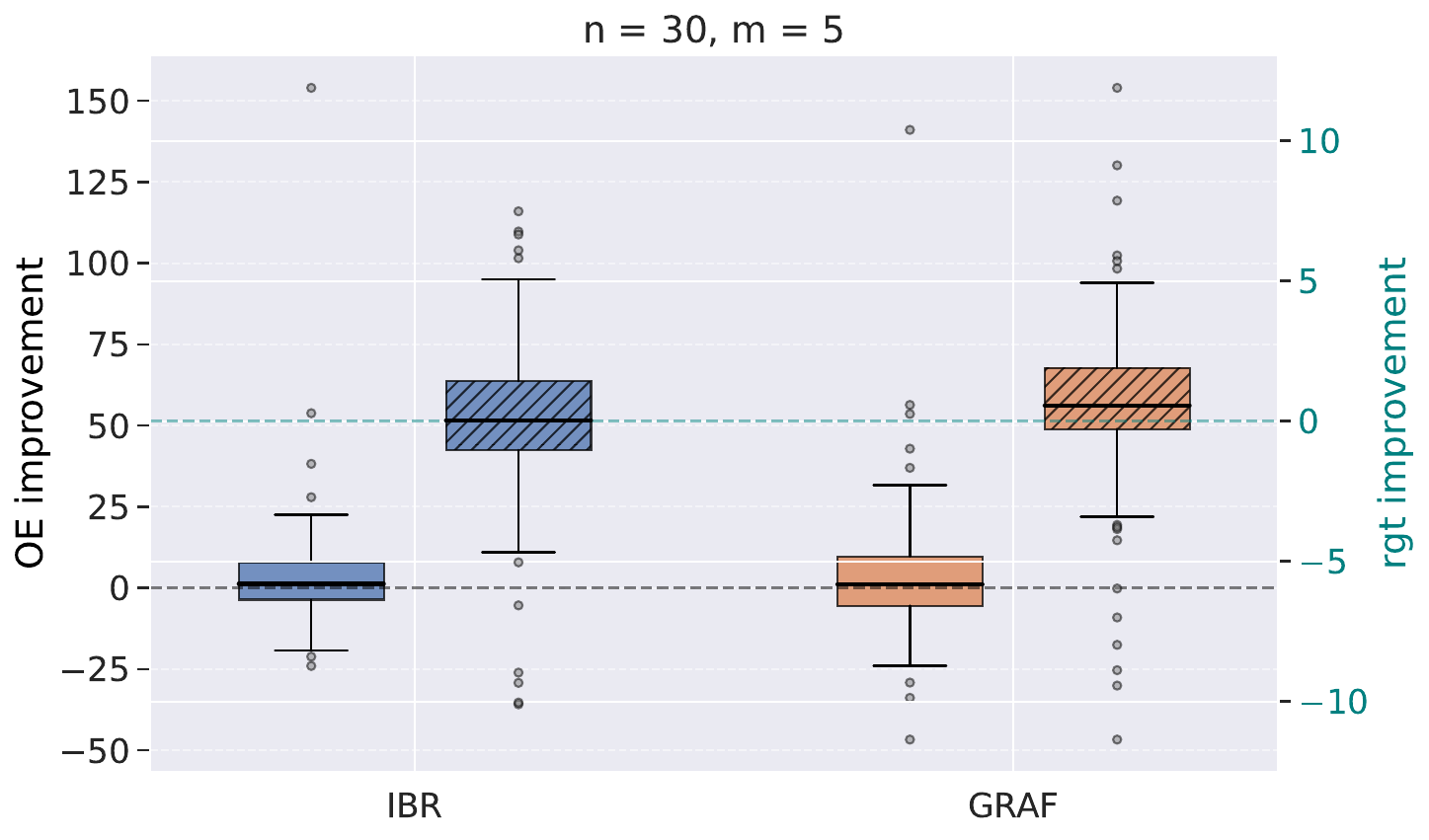}
    \end{subfigure}%
    \caption{Improvement in OE (solid) and max regret (striped) in SSTC 4.}
    \label{fig:box-sstc_general}
\end{figure*}

\begin{figure*}[h]
    \centering
    \begin{subfigure}[h]{0.32\textwidth}
        \centering
        \footnotesize
        \Description{Five scatter plots, one per SSTC 4 instance size, plotting median overall effort on the horizontal axis against negative median maximum regret on the vertical axis for IBR (squares), GRAF with handcrafted scores (triangles), and the top-ten GRAF with LLMScore scores (circles); the Pareto front is drawn in red.}
        \includegraphics[width=0.99\textwidth]{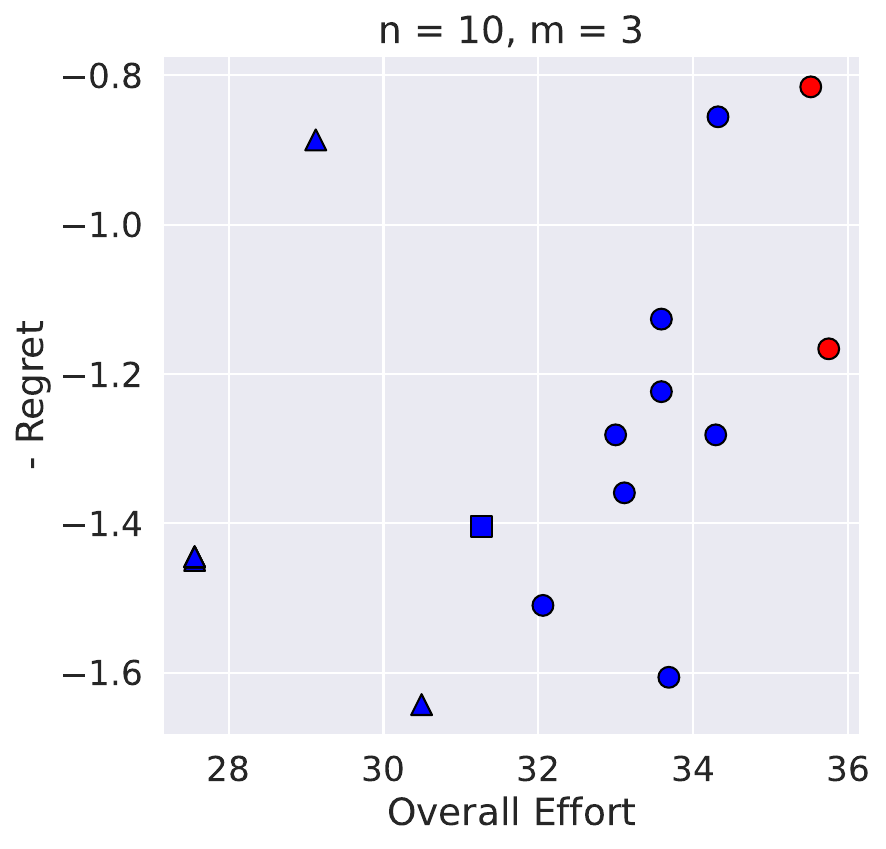}
    \end{subfigure}%
    \begin{subfigure}[h]{0.32\textwidth}
        \centering
        \footnotesize
        \includegraphics[width=0.99\textwidth]{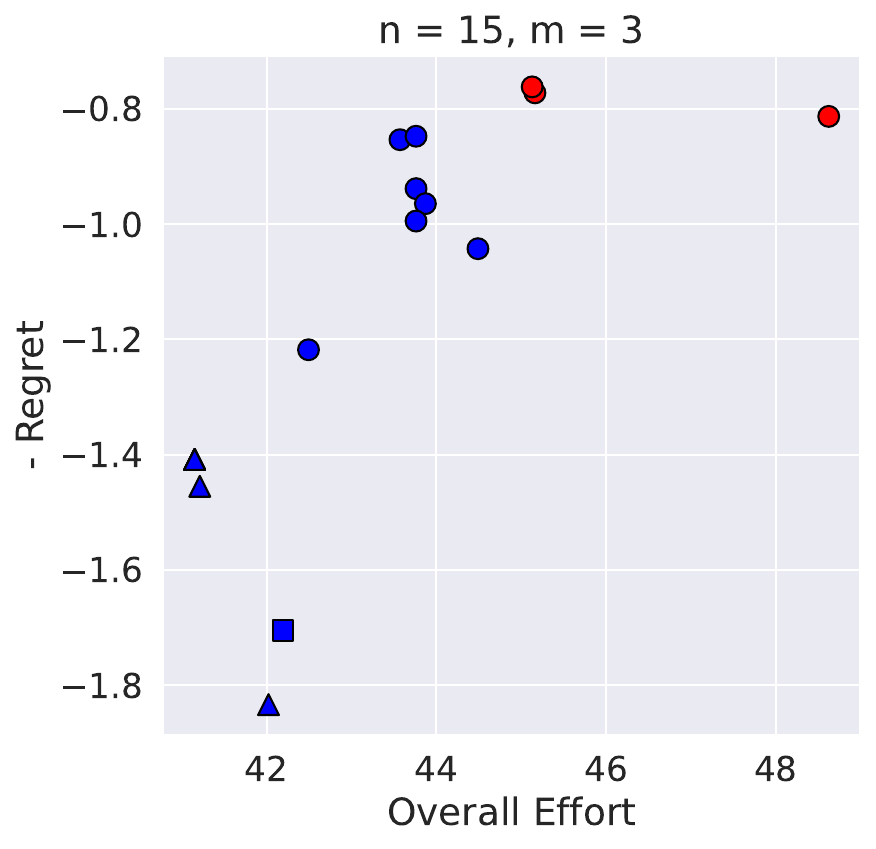}
    \end{subfigure}%
    \begin{subfigure}[h]{0.32\textwidth}
        \centering
        \footnotesize
        \includegraphics[width=0.99\textwidth]{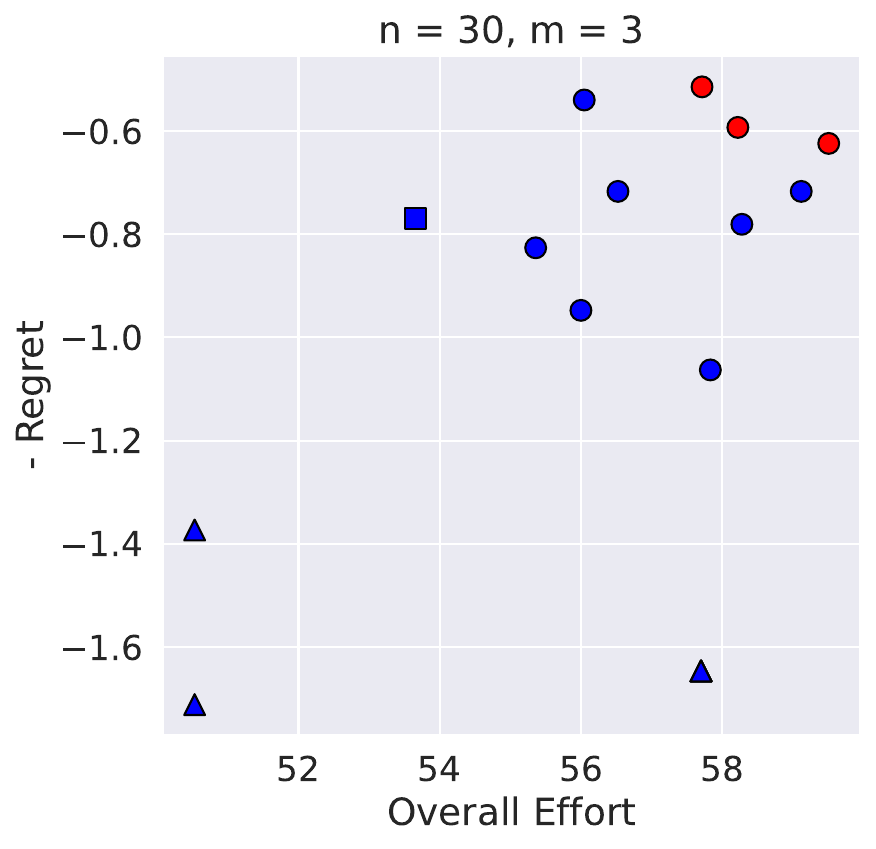}
    \end{subfigure}\\
    \begin{subfigure}[h]{0.32\textwidth}
        \centering
        \footnotesize
        \includegraphics[width=0.99\textwidth]{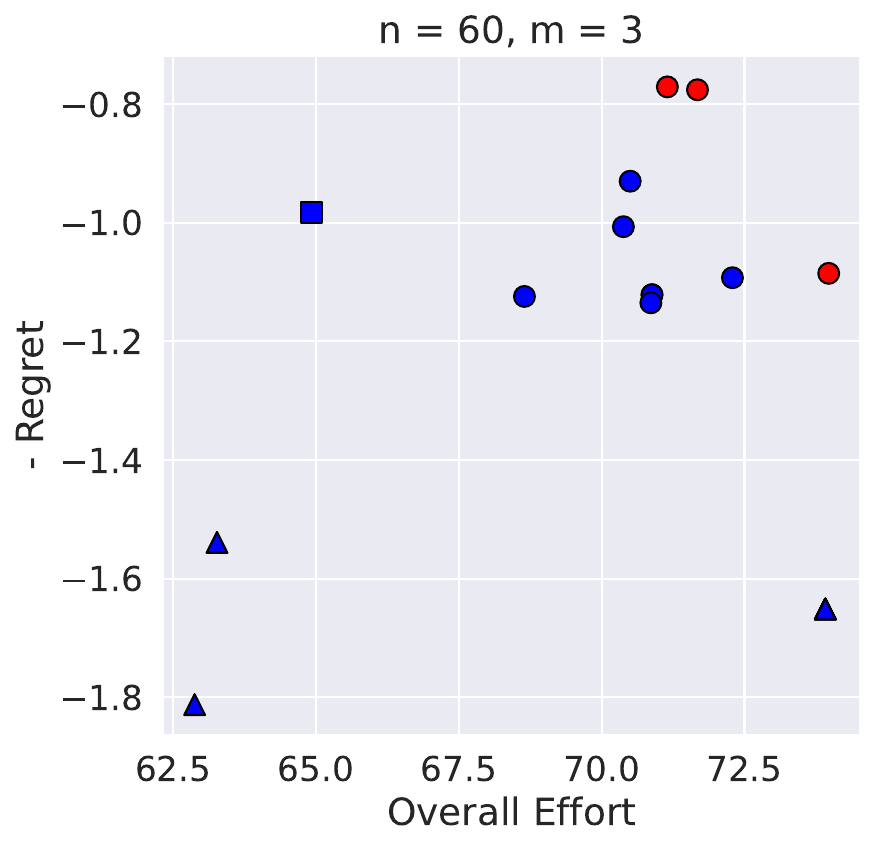}
    \end{subfigure}%
    \begin{subfigure}[h]{0.32\textwidth}
        \centering
        \footnotesize
        \includegraphics[width=0.99\textwidth]{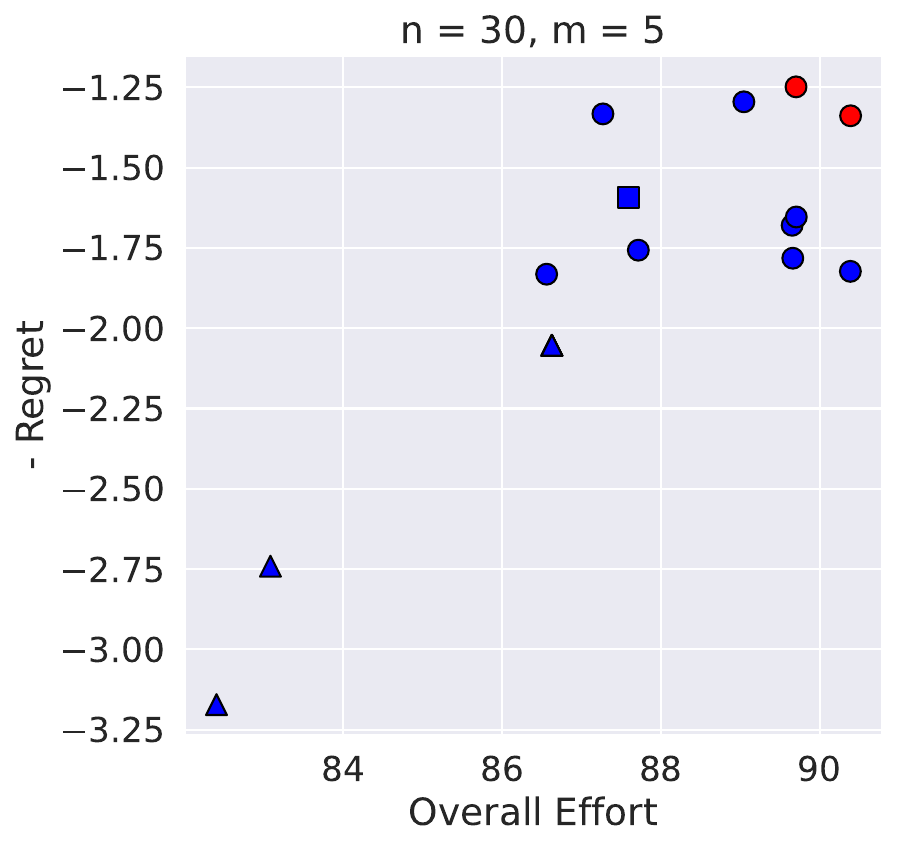}
    \end{subfigure}%
    \caption{Median OE vs. (minus) median max regret in SSTC 4. The Pareto front is highlighted red. Legends: $\blacksquare$---IBR; $\blacktriangle$---GRAF[handcrafted]; \scalebox{1.5}{$\bullet$}---GRAF[LLMScore] (top 10).}
    \label{fig:tradeoff-sstc_general}
\end{figure*}

\clearpage
\onecolumn

\section{Supplementary Material}

\subsection{Missing Proofs}\label{app:proofs}

\paragraph{Proof for Theorem 2.}
\emph{For any game of special SSTC, there is a PSNE. 
Moreover, Algorithm 2 outputs one in polynomial time.}

\begin{proof}
The key to showing that Algorithm 2 returns 
a PSNE is to argue that (1) the partial self-selection outcome $\mathbf{a}_{-i}$ is a PSNE for 
the game played between the contestants up until $i$ 
and (2) the self-selection outcome $(a_i,\mathbf{a}_{-i})$ is still a PSNE 
after setting a best-response of $i$ to be one.
We show this by induction. 

In the base step $i=1$, $i$ can select any contest with zero participants 
and obtain a utility of $v_1$, which is the highest $i$ can obtain, 
by exerting zero effort. Thus, $\mathbf{a}$ is a PSNE. 
This holds for $1 \le i \le m$. 

In the inductive step, we assume that $\mathbf{a}_{-i}$ 
is a PSNE for the game played between 
the contestants up until $i$. We want to show that $(a_i, \mathbf{a}_{-i})$ 
is also a PSNE where $a_i$ is constructed as in Algorithm 1. 
Let $j$ be the contest such that $a_{ij} = 1$ (Step 5). 
If $u_i(a_i, \mathbf{a}_{-i}) = 0$, then $i$ exerts zero effort 
at the PSNE in any contest, and we are done by the inductive assumption.
In the following, we consider the case where 
$u_i(a_i, \mathbf{a}_{-i}) > 0$ and all contestants are active. 
In this case, we need to show that (a) each contestant in $j$ would not deviate 
to a different contest (with the addition of $i$) and (b) the contestants that are 
not in $j$ would not deviate to contest $j$. 

\begin{claim}
No contestants in contest $j$ would deviate to a different contest 
after $i$ joining the contest $j$. 
\end{claim}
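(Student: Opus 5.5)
The plan is to reduce everything to a one-dimensional quantity per contest: the \emph{total equilibrium effort} $E(C)$ of a contest whose participant set is $C$. Under linear production and cost, Theorem~\ref{thm:tullock} gives $E(C)=\frac{p-1}{\Gamma_p}$ (restricted to the active participants of $C$). For an active participant $l$, $e_l^*=E(1-E/v_l)$ and $p_l=e_l^*/E=1-E/v_l$. Hence the equilibrium utility is
\[
u_l = v_l p_l - e_l^* = v_l\Big(1-\frac{E}{v_l}\Big)^2 = \frac{(v_l-E)^2}{v_l}.
\]
An inactive participant gets utility $0$. Since $l$ is active exactly when $v_l>E$, in every case $u_l=\max\{v_l-E,0\}^2/v_l$, which is nonincreasing in $E$. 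A singleton contest has $E=0$ and utility $v_l$, which fits the same formula. I would first establish this closed form as a lemma, checking the degenerate cases (single participant, and participants with $v_l$ exactly at the threshold).

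The second lemma is monotonicity of $E$: for a fixed set $C$ and a newcomer with value $v$, the quantity $E(C\cup\{v\})$ is nondecreasing in $v$. The intuition is that raising one value lowers $\Gamma_p$ and thus raises $\frac{p-1}{\Gamma_p}$. The care needed is when the active set changes as $v$ crosses a threshold. I would handle this via the fixed-point characterization: $E$ is the unique solution of $\sum_{l\in C}\max\{1-E/v_l,0\}=1$. The left side is decreasing in $E$ and increasing in each $v_l$, so the root $E$ increases with any $v_l$. The same argument shows that adding a participant weakly increases $E$. I expect this threshold/active-set bookkeeping to be the main technical obstacle, and the fixed-point formulation is what I would try first to avoid case analysis.

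With these lemmas the claim follows quickly. Let $l\in C(j)$ be any earlier contestant, so $v_l\ge v_i$ by the ordering, and let $j'\neq j$. Write $E_j=E(C(j)\cup\{i\})$; this is the total effort $l$ faces by staying, since $l$ is already in that set. Because $i$ chose $j$ as a best response, with $u_i>0$, the closed form gives
\[
\frac{(v_i-E_j)^2}{v_i}\ \ge\ \frac{\max\{v_i-E(C(j')\cup\{i\}),0\}^2}{v_i}.
\]
Since $v_i>E_j$, this yields $E_j\le E(C(j')\cup\{i\})$. By the monotonicity lemma and $v_l\ge v_i$, we get $E(C(j')\cup\{i\})\le E(C(j')\cup\{l\})$. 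Chaining, $E_j\le E(C(j')\cup\{l\})$, so $l$'s deviation utility $\max\{v_l-E(C(j')\cup\{l\}),0\}^2/v_l$ is at most its staying utility $\max\{v_l-E_j,0\}^2/v_l$. Hence no member of $j$ deviates.

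Two remaining points need checking:
\begin{itemize}
\item The chaining works because $l$ compares against exactly the sets $C(j')\cup\{l\}$, whose membership is unaffected by $i$'s entry into $j$.
\item The case where $l$ becomes inactive after $i$ joins: $l$'s utility drops to $0$, but then $E_j\ge v_l\ge v_i$, contradicting $u_i>0$. So under the case assumption all members of $j$ remain active, consistent with the proof's setup.
\end{itemize}
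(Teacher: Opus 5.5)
Your proof is correct and takes the same route as the paper's. Both arguments reduce each contestant's equilibrium utility to a decreasing function of the aggregate $E=\frac{p-1}{\Gamma_p}$, use $i$'s best response to get $E_j\le E(C(j')\cup\{i\})$, and then use $v_l\ge v_i$ (replacing $\frac{1}{v_i}$ by $\frac{1}{v_l}$) to raise this to $E(C(j')\cup\{l\})$. Your fixed-point characterization $\sum_{l\in C}\max\{1-E/v_l,0\}=1$ and the unified formula $u_l=\max\{v_l-E,0\}^2/v_l$ just make rigorous, without case analysis, the paper's cases (i)/(ii) about contestants leaving the active set and its separate treatment of contests where $i$ would be inactive.
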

\begin{proof}
We let $C(j) = \{ l \in \{1, \ldots, i\} \; | \; a_{lj} = 1 \}$
be the set of contestants that selected $j$.  We let $p_{j\cup\{l\}}$ and $\Gamma_{p_{j\cup\{l\}}}$ 
to be the equilibrium parameters in Theorem 1 
restricting to $C(j)\cup\{l\}$ for any contestant $l$. 
Since $j \in \mathcal{BR}_i(\mathbf{a}_{-i}) = \argmax_{a_i \in A_i} u_i(a_i,\mathbf{a}_{-i})$, 
\begin{align*}
u_i(a_i, \mathbf{a}_{-i}) &=  v_i\left(1- \frac{p_{j}-1}{v_i\Gamma_{p_{j}}}\right) - \frac{p_{j}-1}{\Gamma_{p_{j}}}\left(1- \frac{p_{j}-1}{v_i\Gamma_{p_{j}}}\right)\\
&=v_i\left(1- \frac{p_{j}-1}{v_i\Gamma_{p_{j}}}\right)^2 \ge  
v_i\left(1- \frac{p_{j'\cup\{i\}}-1}{v_i\Gamma_{p_{j'\cup\{i\}}}}\right)^2 = u_i(a_i', \mathbf{a}_{-i}), \nonumber
\end{align*}
for any $a_i'$ such that $a_{ij'}' = 1$ and $i$ is active in $j'$. 
From the above equilibrium condition, we have
\begin{align*}
v_i\left(1- \frac{p_{j}-1}{v_i\Gamma_{p_{j}}}\right)^2 &\ge  v_i\left(1- \frac{p_{j'\cup\{i\}}-1}{v_i\Gamma_{p_{j'\cup\{i\}}}}\right)^2 \\
\Longleftrightarrow\frac{p_{j}-1}{\Gamma_{p_{j}}} &\le \frac{p_{j'\cup\{i\}}-1}{\Gamma_{p_{j'\cup\{i\}}}} \\
\Longleftrightarrow \frac{p_{j}-1}{\sum_{w\in C(j)} \frac{1}{v_w}} &\le \frac{p_{j'\cup\{i\}}-1}{\sum_{w \in C(j')} \frac{1}{v_w} + \frac{1}{v_i}}. 
\end{align*}
Consider any $l \in C(j)$, we have  
$v_l \ge v_i$ and $\frac{1}{v_l} \le \frac{1}{v_i}$ and 
\begin{align*}
\frac{p_{j}-1}{\sum_{w\in C(j)} \frac{1}{v_w}} &\le \frac{p_{j'\cup\{i\}}-1}{\sum_{w \in C(j')} \frac{1}{v_w} + \frac{1}{v_i}} 
\le \frac{p_{j'\cup\{l\}}-1}{\sum_{w \in C(j')} \frac{1}{v_w} + \frac{1}{v_l}}, 
\end{align*}
where the second inequality holds through conditions 
(i) if all contestants in $j'$ are still active after $l$ joining the contests, 
then we can simply replace $\frac{1}{v_i}$ by $\frac{1}{v_l}$ and the second inequality holds
or (ii) if there is some contestant $w$ that is not active after $l$ joining the contest $j'$, 
then the last term (minus the inactive contestants) is no less than $v_w$ and $v_w$ is no less than the second term 
since $w$ is active after $i$ joining the contest. 
Thus, for any $l \in C(j)$, we have $u_l(a_i, \mathbf{a}_{-i})$
{\small
\begin{align*}
 &= v_l\left(1- \frac{p_{j}-1}{v_l\Gamma_{p_{j}}}\right)^2 \ge v_l\left(1- \frac{p_{j'\cup\{l\}}-1}{v_l\Gamma_{p_{j'\cup\{l\}}}}\right)^2 =  u_l(a_l', \mathbf{a}_{-l}), 
\end{align*}
}
where $a_{lj'}'=1$ for any $j'$ where $i$ is active. 
Thus, no $l$ would want to deviate to any active contest of $i$. 
For contest $j'$ where $i$ is not active, 
$$
\frac{p_{j}-1}{\sum_{w\in C(j)} \frac{1}{v_w}} \le v_i \le \frac{p_{j'}-1}{\sum_{w \in C(j')} \frac{1}{v_w}} \le \frac{p_{j'}}{\sum_{w \in C(j')} \frac{1}{v_w} + \frac{1}{v_l}},
$$
where the last inequality holds (i) if $l$ and all the contestants are active in $j$ 
or (ii) if there are some contestants that are inactive after $l$ joining $j'$. 
If $l$ is not active in $j'$, then the first and second inequality follows. 
Thus, no $l$ would deviate to $i$'s inactive contests. 
\end{proof}
\begin{claim}
No contestants outside of contest $j$ would deviate 
to contest $j$ after $i$ joining the contest $j$. 
\end{claim}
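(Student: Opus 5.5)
The plan is to reduce the claim to a comparison of the single quantity $\frac{p-1}{\Gamma_p}$ across contests. In a special Tullock contest, an active contestant's utility is $v\,(1-\frac{p-1}{v\Gamma_p})^2$, and an inactive one gets $0$. So by Theorem~\ref{thm:tullock}, a contestant $l$ prefers contest $X$ to contest $Y$ exactly when the threshold $T(X\cup\{l\})=\frac{p_{X\cup\{l\}}-1}{\Gamma_{p_{X\cup\{l\}}}}$ is smaller. Fix a contestant $l\in C(j')$ with $j'\neq j$ and $l<i$. Write $T_{\text{old}}$ for the threshold of contest $j$ before $i$ joined, i.e.\ restricted to $C(j)\setminus\{i\}$, and $T_{\text{new}}$ for the threshold after. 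The inductive hypothesis says $\mathbf{a}_{-i}$ is a PSNE, so
\[
T(C(j'))\le T\bigl((C(j)\setminus\{i\})\cup\{l\}\bigr)
\]
whenever $l$ is active in $j'$. It therefore suffices to show that adding $i$ to $j$ does not decrease the threshold $l$ would face on deviating, namely $T\bigl(C(j)\cup\{l\}\bigr)\ge T\bigl((C(j)\setminus\{i\})\cup\{l\}\bigr)$.

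The key step is a monotonicity lemma. Let $T(S)=\frac{p-1}{\Gamma_p}$ be the equilibrium threshold of a set $S$. Then $T(S\cup\{w\})\ge T(S)$ for any added contestant $w$. To prove it, I would argue via the characterization in Theorem~\ref{thm:tullock}(2). If $w$ ends up inactive, the active set is unchanged and the threshold stays the same. If $w$ is active, then $v_w>T(S)$, and the function $g(p,\Gamma)=\frac{p-1}{\Gamma}$ satisfies
\[
\frac{p}{\Gamma+1/v_w}\ge\frac{p-1}{\Gamma}\iff \frac{1}{v_w}\le\frac{\Gamma}{p-1}\iff v_w\ge T(S).
\]
This inequality holds. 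Some previously active members may drop out after $w$ enters; each such member $u$ has $v_u$ below the new threshold, and removing $u$ raises the ratio by the same computation run in reverse. So the threshold only increases. Applying the lemma with $S=(C(j)\setminus\{i\})\cup\{l\}$ and $w=i$ gives the required inequality. Chaining the comparisons, $l$'s utility in $j'$ is at least its utility from deviating to $j$.

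Next I would handle the inactive cases. If $l$ is inactive in $j'$ (utility $0$), then $v_l\le T(C(j'))$. By the inductive hypothesis, deviating to $j$ previously gave $l$ nothing, so $v_l\le T\bigl((C(j)\setminus\{i\})\cup\{l\}\bigr)$; the lemma keeps this true after $i$ joins, so the deviation still yields zero utility. The degenerate case where contest $j$ was empty or a singleton before $i$ arrived needs separate treatment. If $j$ was empty, then $i$ chose an empty contest, and the ordering $v_l\ge v_i$ together with the equilibrium comparison for $i$ shows that $l$ would not want to share with $i$. I would check this directly using the formula with two players.

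The main obstacle I expect is the active-set changes in the lemma. Adding $w$ can deactivate several contestants, and the new active set must be identified as the smallest valid $p$ in the sorted order. I would handle this by showing that $\max_{p'} \frac{p'-1}{\Gamma_{p'}}$ over prefixes of the sorted order is attained exactly at the equilibrium $p$, that is, $T(S)=\max_{p'}\frac{p'-1}{\Gamma_{p'}}$. Monotonicity under insertion then follows because the prefixes of $S\cup\{w\}$ include modified versions of every prefix of $S$, each with ratio at least as large.
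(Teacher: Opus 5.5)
Your proof is correct, and it takes a genuinely different and cleaner route than the paper's.

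\textbf{Your route.} You apply the inductive hypothesis directly to the would-be deviator $l$. The claim then reduces to one structural fact: the equilibrium total effort $T(S)=\frac{p-1}{\Gamma_p}$ of a contest never decreases when a contestant is added. Since $l$'s payoff from entering a contest with threshold $T$ is $v_l\max\{0,1-T/v_l\}^2$, which is nonincreasing in $T$, this single inequality covers the active and inactive cases. It also covers the case where $j$ was empty or a singleton before $i$, since there $T(\{l\})=0$, so your separate treatment of those cases is unnecessary.

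\textbf{Your max characterization.} It is right. We have $\frac{p}{\Gamma_p+1/v_{p+1}}\ge\frac{p-1}{\Gamma_p}$ iff $v_{p+1}\ge\frac{p-1}{\Gamma_p}$, and once this fails it keeps failing. So the ratio is unimodal along the sorted order and peaks at the equilibrium $p$. Because the top $p'$ values minimize $\Gamma$ among $p'$-element subsets, one even has $T(S)=\max_{\emptyset\neq S'\subseteq S}\frac{|S'|-1}{\sum_{w\in S'}1/v_w}$. This makes monotonicity immediate and lets you discard the sequential drop-out argument, whose inequality $v_w>T(S)$ should in any case be weak.

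\textbf{The paper's route.} The paper instead argues by contradiction with explicit reciprocal-sum inequalities. It assumes all contestants are active. It assumes without loss of generality that $i$ stays active after $l$ joins $j$, handing the other case to the inductive hypothesis, and that $l$ has the highest value in $j'$. It then combines the condition that $i$ remains active with the best-response condition of $\varepsilon$, the last contestant to enter $j'$, who preferred $j'$ to $j$ before $i$ arrived.

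\textbf{What each buys.} The paper's route stays entirely within the closed-form quantities of Theorem~\ref{thm:tullock} and the greedy history, at the price of several case reductions. Yours is shorter and covers the degenerate cases uniformly. It also makes visible that this claim does not depend on the processing order at all: the ordering $v_1\ge\cdots\ge v_n$ is needed only for the companion claim that members of $j$ do not leave.
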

\begin{proof}
First note that, for any contestant $l$ that is not in $C(j)$,
$l$ can potentially deviate if there is at least two contestants (including $l$) 
in $l$'s contest, otherwise, $l$ would achieve the highest utility $v_l$ 
and exert zero effort if $l$ is the only contestant. In addition, 
$l$ must be active since it was a best-response, otherwise 
$i$ is not active. 
Thus, below, we consider any contestant $l$ (not in $C(j)$) 
such that there are at least two contestants in $l$'s contest.
Moreover, we assume, without loss of generality, 
that contestant $i$ is still active in contest $j$ after $l$ joining $j$, 
otherwise, by the inductive assumption, $l$ would not deviate.
Also, without loss of generality, we consider $l$
to be the one with the highest value in $j'$ (i.e., if $l$ does not deviate 
then, other lower value contestants in $j'$ would not deviate). 
We denote $l$'s contest by $j'$.  
As such, we need to show that 
\begin{align}
v_l\left(1- \frac{p_{j'}-1}{v_l\Gamma_{p_{j'}}}\right)^2 &\ge  v_l\left(1- \frac{p_{j\cup\{l\}}-1}{v_l\Gamma_{p_{j\cup\{l\}}}}\right)^2 \nonumber \\
\Longleftrightarrow \frac{p_{j'}-1}{\sum_{w \in C(j') \setminus \{l\}} \frac{1}{v_w} + \frac{1}{v_l}} &\le \frac{p_{j\cup\{l\}}-1}{\sum_{w \in C(j)\setminus \{i\}} \frac{1}{v_w} + \frac{1}{v_l} + \frac{1}{v_i}}. \label{ineq:zero}
\end{align}
We present the following two inequalities to help us in our proof. 
Since we assume that $i$ is still active after $l$ joining $j$, by the equilibrium conditions, 
this implies that 
\begin{align}\label{ineq:one}
v_i \ge \frac{p_{j\cup\{l\}\setminus\{i\}} - 1}{\sum_{w \in C(j)\setminus\{i,l\}} \frac{1}{v_w} + \frac{1}{v_l}}, 
\end{align}
of the first $|C(j)| - 1$ contestants excluding $i$. 
Next, let $\varepsilon$ be the last contestant that joined contest $j'$, 
we have that
\begin{align} \label{ineq:two}
\frac{p_{j\cup\{\varepsilon\}\setminus\{i\} - 1}}{\sum_{w \in C(j) \setminus \{\varepsilon,i\} }\frac{1}{v_w} + \frac{1}{v_\varepsilon} } \ge \frac{p_{j'}-1}{\sum_{w \in C(j') \setminus \{l,\varepsilon\}} \frac{1}{v_w} + \frac{1}{v_l} + \frac{1}{v_\varepsilon}}
\end{align}
which is $\varepsilon$'s best response condition is to be in contest $j'$ instead of $j$ 
when $i$ has not joined contest $j$.

Given the above two inequalities, we are ready to show that Inequality \ref{ineq:zero} holds. 
For the sake of contradiction, we assume that Inequality \ref{ineq:zero} does not hold. 
We have that 
\begin{align*}
\frac{p_{j'}-1}{\sum_{w \in C(j') \setminus \{l\}} \frac{1}{v_w} + \frac{1}{v_l}} &> \frac{p_{j\cup\{l\}}-1}{\sum_{w \in C(j)\setminus \{i\}} \frac{1}{v_w} + \frac{1}{v_l} + \frac{1}{v_i}} \\
\Longleftrightarrow \frac{\sum_{w \in C(j') \setminus \{l\}} \frac{1}{v_w} + \frac{1}{v_l}}{p_{j'}-1} &< \frac{\sum_{w \in C(j)\setminus \{i\}} \frac{1}{v_w} + \frac{1}{v_l} + \frac{1}{v_i}}{p_{j\cup\{l\}}-1}.
\end{align*}
Applying Inequality \ref{ineq:two}, we have 
\begin{align*}
\frac{\sum_{w \in C(j') \setminus \{l\}} \frac{1}{v_w} + \frac{1}{v_l}}{p_{j'}-1} \ge \frac{\sum_{w \in C(j) \setminus \{\varepsilon,i\} }\frac{1}{v_w} + \frac{1}{v_\varepsilon} }{p_{j\cup\{\varepsilon\}\setminus\{i\}} - 1}. 
\end{align*}
Multiplying the above by  $p_{j\cup\{l\}}-1$, we get 
\begin{align*}
&(p_{j\cup\{l\}}-1)\frac{\sum_{w \in C(j) \setminus \{\varepsilon,i\} }\frac{1}{v_w} + \frac{1}{v_\varepsilon} }{p_{j\cup\{\varepsilon\}\setminus\{i\}} - 1} \\
&\ge \sum_{w \in C(j) \setminus \{\varepsilon,i\} }\frac{1}{v_w} + \frac{1}{v_\varepsilon} + \frac{\sum_{w \in C(j) \setminus \{\varepsilon,i\} }\frac{1}{v_w} + \frac{1}{v_\varepsilon} }{p_{j\cup\{\varepsilon\}\setminus\{i\}} - 1} \\
&\ge \sum_{w \in C(j) \setminus \{\varepsilon,i\} }\frac{1}{v_w} + \frac{1}{v_l} + \frac{1}{v_i}, 
\end{align*}
where the second inequality is due to the fact that $p_{j\cup\{l\}} = p_{j\cup\{\varepsilon\}\setminus\{i\}}$ $+ 1$,
where the third inequality is by first replacing $\frac{1}{v_\varepsilon}$ by $\frac{1}{v_l}$, applying Inequality \ref{ineq:one}, and
noting that $p_{j\cup\{\varepsilon\}\setminus\{i\}} = p_{j\cup\{l\}\setminus\{i\}}$. 
This is a contradiction, and Inequality \ref{ineq:zero} must hold. 
Hence, no contestant $l$ would want to deviate to $j$. 
\end{proof}
Along with the above two claims, we have shown 
that Algorithm 2 finds a PSNE in polynomial time.
\end{proof}

\paragraph{Proof for Theorem 3.}
\emph{In a uniform SSTC game with $n\ge2m$ contestants, every PSNE $\mathbf{a}^*$ achieves optimal OE.} (We also provide additional results for the case of $n<2m$.)
\begin{proof}
We first characterize the overall effort in the PSNE of uniform SSTC.
\begin{lemma}
In any PSNE $\mathbf{a}^* \in A^n$, 
$OE(\mathbf{a}^*) = 0$ when $n \le m$, and 
$OE(\mathbf{a}^*) = v [ (m- (n \bmod m))\left(\frac{\left\lfloor\frac{n}{m}\right\rfloor - 1}{\left\lfloor\frac{n}{m}\right\rfloor}\right)
      + (n \bmod m)\left(\frac{\left\lfloor\frac{n}{m}\right\rfloor}{\left\lfloor\frac{n}{m}\right\rfloor+1}\right)]$
when $n > m$. 
\end{lemma}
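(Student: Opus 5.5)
The plan is to first compute the equilibrium effort inside a single contest, then show that every PSNE of uniform SSTC splits the $n$ contestants across the $m$ contests as evenly as possible, and finally add up the efforts.

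\textbf{Step 1: effort in one contest.} Consider a contest with $c$ contestants, all of value $v$.
\begin{itemize}
\item If $c=1$, the contestant wins with probability one at zero effort, so the effort is $0$ and the utility is $v$.
\item If $c=0$, the contest contributes nothing.
\item If $c\ge 2$, apply Theorem~\ref{thm:tullock} with identical values. All $c$ contestants are active, since the threshold $v<(p-1)/\Gamma_p = v(p-1)/p$ never holds for $p<c$. Here $\Gamma_c=c/v$, so each contestant exerts
\[
e^*=\frac{(c-1)v}{c}\cdot\frac{1}{c}.
\]
The total effort in the contest is therefore $v(c-1)/c$. Each contestant's utility is $v/c^2$, using the formula $v(1-(p-1)/(v\Gamma_p))^2$ from the proof of Theorem~\ref{thm:psne}.
\end{itemize}
So a contestant's utility depends only on the occupancy $c$ of its contest, as $v/c^2$, and this is strictly decreasing in $c$.

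\textbf{Step 2: PSNE are balanced.} Take a contestant $l$ in a contest of size $c$. It can move to a contest of size $c'$ and then faces $c'+1$ contestants, earning $v/(c'+1)^2$. In a PSNE we need $v/c^2\ge v/(c'+1)^2$ for every other contest, that is, $c\le c'+1$.
\begin{itemize}
\item The condition $c\le c'+1$ for all occupied contests and all other contests means occupancies differ by at most one.
\item It also covers empty contests: taking $c'=0$ forces $c\le 1$.
\end{itemize}
Consequences:
\begin{itemize}
\item If $n\le m$, every contest has occupancy at most $1$, so all efforts are zero and $OE(\mathbf{a}^*)=0$.
\item If $n>m$, some contest has at least $2$ contestants, which forces every contest to have at least $1$. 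Occupancies then lie in $\{\lfloor n/m\rfloor,\lceil n/m\rceil\}$. Counting gives exactly $n \bmod m$ contests of size $\lfloor n/m\rfloor+1$ and $m-(n\bmod m)$ contests of size $\lfloor n/m\rfloor$.
\end{itemize}

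\textbf{Step 3: sum the efforts.} Adding $v(c-1)/c$ over the contests gives the stated formula. One edge case needs checking: if $\lfloor n/m\rfloor=1$, the contests of size $1$ contribute $v\cdot 0/1=0$, which matches the $c=1$ effort from Step 1, so the formula holds there too.

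The main obstacle is Step 2, specifically stating the deviation utility correctly: the target contest's occupancy becomes $c'+1$, and the empty-contest and singleton cases need separate care. The rest is direct substitution into Theorem~\ref{thm:tullock}.
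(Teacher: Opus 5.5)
Your proposal is correct and takes essentially the same route as the paper. Both arguments use the equilibrium utility $v/c^2$ to show that in any PSNE contest occupancies differ by at most one (the paper argues by contradiction from a pair with $c_j - c_{j'} \ge 2$), and then count the contests of sizes $\lfloor n/m\rfloor$ and $\lfloor n/m\rfloor+1$. Your version is more explicit than the paper's in three places: it derives the per-contest effort $v(c-1)/c$ from Theorem~\ref{thm:tullock}, it handles the $n\le m$ case and empty contests through the single condition $c\le c'+1$, and it carries out the final summation, which the paper leaves implicit.
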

\begin{proof}

When $n \le m$, it is optimal for each contestant  
to select a different contest and obtains 
a value $v$, which is the highest utility value. 
The total effort is zero 
as each contest has at most one contestant. 

In the case of $n > m$, we first claim that 
in every PSNE, the 
difference of the numbers of 
contestants between any two contests is no more than one. 
For the sake of contradiction, suppose 
there is a PSNE $\mathbf{a}^* \in A^n$ 
such that there exist contests $j,j', $ 
$c_j(a_i^*,\mathbf{a}_{-i}^*) - c_{j'}(a_i^*,\mathbf{a}_{-i}^*) \ge 2$. 
Consider a contestant $i$ who selects contest $j$ (i.e., $a_{ij} = 1$). 
The utility of $i$ is 
$u_i(a_i^*, a_{-i}^*) = v\frac{1}{c_j(a_i^*,\mathbf{a}_{-i}^*)^2} 
<  v\frac{1}{c_{j'}(a_i',\mathbf{a}_{-i}^*)^2} = u_i(a_i',\mathbf{a}_{-i}^*)$, 
which is larger if $i$ deviates to $j'$ with $a_{ij'}' = 1$.
Thus, in any PSNE, the difference of the numbers of contestants 
between two contests is no more than one. 
As a result, it is not hard to see that 
there are $m -  (n\mod m)$ contests 
that have $\left\lfloor\frac{n}{m}\right\rfloor$ contestants 
while the remaining $(n \mod m)$ contests have
$\left\lfloor\frac{n}{m}\right\rfloor+1$ contestants. 
\end{proof}

Next, we compute the overall effort in optimal solutions.  
\begin{lemma}
In any $\mathbf{a}^{\text{opt}} \in A^n$,
$OE(\mathbf{a}^{\text{opt}}) = v[\frac{(n - 3(n \bmod 2))}{2}\frac{1}{2} + ({n \bmod 2})\frac{2}{3}]$ when $2 \le n \le 2m$
and $OE(\mathbf{a}^{\text{opt}}) = v [ (m- (n \bmod m))\left(\frac{\left\lfloor\frac{n}{m}\right\rfloor - 1}{\left\lfloor\frac{n}{m}\right\rfloor}\right) 
+ (n \bmod m)\left(\frac{\left\lfloor\frac{n}{m}\right\rfloor}{\left\lfloor\frac{n}{m}\right\rfloor+1}\right)]$ 
when $n > 2m$. 
\end{lemma}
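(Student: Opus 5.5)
The plan is to reduce the optimization in \eqref{eq:obj-special} to an integer partition problem. Then I solve that problem by exchange arguments, treating the regimes $2\le n\le 2m$ and $n>2m$ separately.

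\textbf{Step 1 (per-contest effort).} In a uniform SSTC game, consider a contest whose participant set has size $c\ge 2$. All participants have the same value $v$, so Theorem~\ref{thm:tullock} gives $\Gamma_p=p/v$. The activity condition $v_{p+1}<(p-1)/\Gamma_p=v(p-1)/p$ fails for every $p<c$, so $p=c$ and every participant is active. Each participant then exerts $e^*=v\frac{c-1}{c}\bigl(1-\frac{c-1}{c}\bigr)=v\frac{c-1}{c^2}$, and the contest contributes
\[
g(c)=v\Bigl(1-\frac{1}{c}\Bigr)
\]
to OE. For a lone contestant the model sets effort to zero, and an empty contest contributes nothing, so $g(1)=g(0)=0$, consistent with the formula at $c=1$. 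Each contestant's utility is $v/c^2\ge 0$, so the participation constraints in \eqref{eq:obj-special} never bind. Since $k_i=1$, an outcome is just a vector of contest sizes $(c_1,\dots,c_m)$ with $\sum_j c_j=n$. Hence $OE(\mathbf{a}^{\text{opt}})=\max\sum_j g(c_j)$ over such vectors.

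\textbf{Step 2 (case $2\le n\le 2m$).} Here the efficiency per contestant, $g(c)/c=v(c-1)/c^2$, peaks at $c=2$, so the optimum should use pairs plus at most one triple. I will run a sequence of improving moves, each of which strictly increases $\sum_j g(c_j)$:
\begin{enumerate}[label=(\roman*)]
\item Two singletons merge into a pair, gaining $v/2$.
\item A singleton joins a pair to form a triple, gaining $v(2/3-1/2)$.
\item A contest of size $c\ge 4$ gives two of its members to an empty contest. The identity $1-\frac1c< \frac12+1-\frac1{c-2}$ holds for $c\ge4$, so this gains.
\item Two triples become three pairs using an empty contest, since $3\cdot\frac12>2\cdot\frac23$.
\end{enumerate}
The constraint $n\le 2m$ guarantees by counting that an empty contest (or a singleton, which is then handled by (i) or (ii)) exists whenever move (iii) or (iv) is needed.

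When no move applies, the sizes lie in $\{0,2,3\}$, with at most one triple and no singletons. If $n$ is odd there is exactly one triple and $(n-3)/2$ pairs; if $n$ is even there are $n/2$ pairs. Evaluating $\sum_j g(c_j)$ on this configuration gives the claimed expression $v\bigl[\frac{n-3(n\bmod 2)}{2}\cdot\frac12+(n\bmod 2)\frac23\bigr]$.

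\textbf{Step 3 (case $n>2m$).} I first show that every contest holds at least two contestants in an optimum. If some contest has size at most $1$, counting shows another has size $c\ge3$. Moving members from the large contest so that the small one reaches size $2$ does not decrease the total: for example, with an empty contest and $c\ge4$ this is move (iii), and with a singleton one transfer gains $v/2-v/(c(c-1))>0$. With all sizes at least $2$, the objective equals $v(m-\sum_j 1/c_j)$. Because $1/c$ is strictly convex, any pair of sizes with $c_j-c_{j'}\ge 2$ is improved by moving one contestant from $j$ to $j'$. So the optimum is the balanced partition: $m-(n\bmod m)$ contests of size $\lfloor n/m\rfloor$ and $(n\bmod m)$ contests of size $\lfloor n/m\rfloor+1$. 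Substituting gives the stated formula, which coincides with the PSNE value in the preceding lemma.

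\textbf{Main obstacle.} I expect the delicate part to be the boundary bookkeeping around sizes $0$ and $1$. There $g$ fails to be concave, because $g(0)=g(1)=0$. I must check in each case that a suitable empty or singleton contest actually exists for the exchange moves, and handle the small instances $n=2m-1$ and $n=2m$ where the two regimes meet. I would handle this by a potential argument: each move strictly increases $\sum_j g(c_j)$ and there are finitely many configurations, so the process terminates at a configuration where no move applies. It then remains to characterize those terminal configurations explicitly, as done in Steps 2 and 3.
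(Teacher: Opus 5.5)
Your plan follows the same route as the paper's proof. Both compute the per-contest effort $v(1-1/c)$, note that singletons and empty contests contribute nothing, and use diminishing returns to get pairs plus at most one triple when $2\le n\le 2m$ and the balanced partition when $n>2m$. The paper only asserts these placements, and its ``fill the least-populated contest'' rule is not literally valid at sizes $0,1$. Your exchange argument is the more careful version, and you locate the difficulty correctly. However, the moves you list do not actually handle that boundary. In Step 2, moves (i)--(iv) cannot remove a lone singleton when no pair exists. For $n=4$, the configuration $\{3,1,0,\dots,0\}$ (e.g.\ $(3,1)$ with $m=2$, or $(3,1,0)$ with $m=3$) admits none of them, yet its value is $2v/3<v$. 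So the parenthetical ``a singleton, which is then handled by (i) or (ii)'' is false, and so is your terminal characterization. The fix is to add the transfer $(c,1)\to(c-1,2)$ for $c\ge 3$, which gains $v/2-v/(c(c-1))>0$ and which you already use in Step 3. With it, terminal configurations have no singletons (since $n\ge 2$), and your counting argument then supplies the empty contest needed for (iii) and (iv).

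In Step 3, the sentence ``moving members from the large contest so that the small one reaches size $2$ does not decrease the total'' is false when the small contest is empty and the large one has size exactly $3$. The move $(3,0)\to(1,2)$ lowers the value from $2v/3$ to $v/2$, and this case does occur, e.g.\ $(3,3,3,0)$ with $n=9$, $m=4$. There you need move (iv). Once singletons are gone and no contest has size $\ge 4$, the sizes lie in $\{0,2,3\}$, and $n\ge 2m+1$ together with an empty contest forces at least three triples. Alternatively, you can avoid move bookkeeping altogether:
\begin{itemize}
\item For $2\le n\le 2m$: for $c\ge 1$, write $g(c)=v\bigl[\frac c4-\frac{(c-2)^2}{4c}\bigr]$. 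This bounds $\sum_j g(c_j)$ by $vn/4$, reduced by $v/12$ when $n$ is odd, because some size is then odd and the deficit $(c-2)^2/(4c)$ is at least $1/12$ for odd $c$. This matches the claimed value, which is attained since $n\le 2m$.
\item For $n>2m$: set $b=\lfloor n/m\rfloor\ge 2$. The secant $L(c)=g(b)+\frac{v(c-b)}{b(b+1)}$ dominates $g$ at every integer $c\ge 0$, by concavity of $1-1/c$ on $c\ge 1$ and because $L(0)=v\bigl(1-\frac1b-\frac1{b+1}\bigr)>0$. Hence $\sum_j g(c_j)\le\sum_j L(c_j)$, which is exactly the balanced value.
\end{itemize}
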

\begin{proof}
We consider two cases where
the number of contestants is less than twice of 
the contests ($2 \le n \le 2m$) and 
the number of contestants is more than twice of the contests ($n > 2m$). 

The first thing to note is that the 
overall effort generated from each 
contest has a diminishing return property or concave 
(i.e., $v \frac{ c_j(a_i,\mathbf{a}_{-i}) - 1}{c_j(a_i,\mathbf{a}_{-i}) }$ )
when they are more than one contestant. 
As a result, it is optimal for the platform to place contestants 
into the contests with the least numbers of contestants. 

In the case of $2 \le n \le 2m$, 
it is never optimal for the platform to put 
only one contestant in a contest because 
a contest with only one contestant would yield zero effort.  
Thus, each contest should have at least two contestants. 
In an optimal solution, it is clear that when there is an odd number of contestants, 
one contest should have three contestants 
while $\frac{n - 3(n \bmod 2)}{2}$ contests have two contestants due 
to the diminishing return property. 
Thus, the overall effort is $v[\frac{(n - 3(n \bmod 2))}{2}\frac{1}{2} + ({n \bmod 2})\frac{2}{3}]$.

In the case of $n > 2m$, 
each contest has at least two contestants. 
Moreover, it is optimal to place contestants 
into the contests with the least numbers of contestants. 
Similar to the above lemma, 
there are $m -  (n\mod m)$ contests 
that have $\left\lfloor\frac{n}{m}\right\rfloor$ contestants 
while the remaining $(n \mod m)$ contests have
$\left\lfloor\frac{n}{m}\right\rfloor+1$ contestants. 
\end{proof}

Combining the above lemmas, we derive the ratio $OE(\mathbf{a}^{\text{opt}})/OE(\mathbf{a}^*)=$
\begin{equation*}
\begin{cases*}
      \frac{[\frac{(n - 3(n \bmod 2))}{2}\frac{1}{2} + ({n \bmod 2})\frac{2}{3}]}
       {(m- (n \bmod m))\left(\frac{\left\lfloor\frac{n}{m}\right\rfloor - 1}{\left\lfloor\frac{n}{m}\right\rfloor}\right) + 
       (n \bmod m)\left(\frac{\left\lfloor\frac{n}{m}\right\rfloor}{\left\lfloor\frac{n}{m}\right\rfloor+1}\right)} & $m < n \le 2m$ \\
      1 & $n > 2m$ 
\end{cases*},
\end{equation*}
where the top expression simplifies to 1 when $n=2m$.
\end{proof}

\paragraph{Upper Bound of Individual Effort in a Tullock Contest.}

Consider a Tullock contest with $n$ contestants as formalized in Section 2.1 and let $c_i(e_i)=c_{i}\cdot e_{i}^{d_{i}}$ with $c_i,d_i>0$.
Notice that $i$'s utility, $u_i(e_i, \mathbf{e}_{-i}) = v_i p_i(\mathbf{e}) - c_i(e_i)$, is maximized at $v_{i}-c_{i}(e_{i})$ when $i$ is the sole active contestant.
Furthermore, since the utility is nonnegative (otherwise $i$ would choose $e_{ij}=0$ which yields a utility of 0), it follows that
\begin{align*}
    0 \le v_{i}-c_{i}(e_{i}) &= v_{i}-c_{i}\cdot e_{i}^{d_{i}}\\
    \Longleftrightarrow c_{i}\cdot e_{i}^{d_{i}} &\le v_{i}\\
    \Longleftrightarrow e_{i} &\le \left(\frac{v_{i}}{c_{i}}\right)^{1/d_{i}}.
\end{align*}
Thus, $i$'s maximum effort is $e_{i}^{\max}=(v_{i}/c_{i})^{1/d_{i}}$.

\subsection{Complete Implementation Details}\label{app:imp}

All experiments were conducted under Ubuntu 20.04 on a Linux virtual machine equipped with NVIDIA GeForce RTX 3050 Ti GPU and 12th Gen Intel(R) Core(TM) i7-12700H CPU @2.3GHz.
We chose GPT-4o mini as our pretrained LLM.
The full Python 3.10 implementation accompanies this submission as supplementary material (see Section~5 for our release commitment).

We provide the remaining LLMScore hyperparameters here; the problem settings SSTC~1--4 and our training-on-SSTC~3 protocol are described in Section~5.
We conduct three runs of LLMScore and report the average performance in median OE and median max regret across 1,000 instances.
The LLM temperature is fixed at 1.
The evaluation resource $t$ is defined in terms of the maximum number of populations, for which we set to 20. 
The population size $\eta$ is 20, and the maximum running time for each instance is 180 seconds.
We set the threshold $\tau$ for penalizing $g(h)$ to 0.3.
For the proposed prompt-refinement module (Section~4), we set the size of top individuals in the population, $\lambda$, to 3; the number of maximum consecutive populations without improvement (for detecting stagnation), $t$, to 3; the size of the top individuals produced by a prompt strategy (for computing its score), $\delta$, to 3; and the maximum size of the set of prompt strategies to 5.
For quality control of the LLM-generated prompt strategies, we only accept one if at least one of its produced algorithms is valid, i.e., not experiencing timeout or coding errors.

All the above choices were made mainly to ensure reasonable runtime (typically a few hours) without hampering the generation of high-quality algorithms.

\paragraph{Effort Computation.}\label{par:compute-e}
For SSTC 3 and SSTC 4, we compute $\tilde{e}_{ij}(a_i,\mathbf{a}_{-i})$ for contestants $i\in C(j)$ in each contest $j\in M$ by running best-response dynamics \citep{fudenberg1998theory} detailed in Algorithm \ref{alg:ibr-e}, where we set \texttt{max\_iter} to 100, \texttt{tol} to $10^{-6}$, $\zeta$ to 1, and initialize $\mathbf{e}$ to a vector of all 1's\footnote{Given the generation of SSTC instances described in Section~5, our initialization conforms to the derived $e_{i}^{\max}$ (see ``\ref{app:proofs} Missing Proofs'').}.
We use \href{https://docs.scipy.org/doc/scipy/reference/generated/scipy.optimize.minimize_scalar.html}{Brent's algorithm} \citep{brent2013algorithms} for efficient nonconvex optimization to approximate $\hat{e}_i$ in Step \ref{step:nonconvex-opt}. 

\addtocounter{algorithm}{2}
\begin{algorithm}
    \caption{Best Response Dynamics in Tullock Contests Without Equilibrium Characterization}
    \label{alg:ibr-e}
    \begin{algorithmic}[1]
        \REQUIRE A Tullock contest with $n$ contestants; $\texttt{max\_iter}\in\mathbb{N}^+$; $\texttt{tol}\in\mathbb{R}^+$; damping coefficient $\zeta\in[0,1]$
        \ENSURE Approximate equilibrium effort vector $\tilde{\mathbf{e}}\in\mathbb{R}_0^{+\,n}$
        \STATE Initialize $\mathbf{e}=(e_1,\ldots,e_n)$
        \FOR{$l = 1,\ldots,\texttt{max\_iter}$}
        \STATE Let $\mathbf{e}'=\mathbf{e}$
        \FOR{$i = 1, \ldots, n$}
        \STATE\label{step:nonconvex-opt} Compute $\hat{e}_i=\argmax_{e_i\in[0,e_i^{\max}]}u_i(e_i,\mathbf{e}_{-i})$ via nonconvex optimization \hfill \doc{\# $u_i(e_i,\mathbf{e}_{-i})$ previously defined in Section 2.1}
        \STATE Update $e_i=\zeta\cdot \hat{e}_i+(1-\zeta)\cdot e_i$ 
        \ENDFOR
        \IF{$\lVert \mathbf{e}'-\mathbf{e}\rVert<\texttt{tol}$}
        \STATE Stop program and return $\mathbf{e}$
        \ENDIF
        \ENDFOR
    \end{algorithmic}
\end{algorithm}

\paragraph{Iterated Best Response.}
Algorithm \ref{alg:ibr} details IBR.
Due to its dependence on the initial (randomly generated) self-selection outcome $\mathbf{a}$, for each instance, we run 10 trials of IBR with different starting $\mathbf{a}$ and report the average performance (OE and max regret) across runs.
If cycle occurs, i.e., $\mathbf{a}$ at iteration $l$ exactly matches $\mathbf{a}$ at any prior iteration $0<l'<l-1$, we stop the program early and average the performance of the actions from $(\mathbf{a}^{l'},\ldots,\mathbf{a}^{l-1})$.

\begin{algorithm}
    \caption{Iterated Best Response}
    \label{alg:ibr}
    \begin{algorithmic}[1]
        \REQUIRE An SSTC game with $n$ contestants; $\texttt{max\_iter}\in\mathbb{N}^+$
        \ENSURE A self-selection outcome $\mathbf{a}$ (or a list thereof when cycle occurs)
        \STATE Initialize $\mathbf{a}\in A^n$ randomly satisfying $k_i$ for every $i\in N$
        \FOR{$l = 1,\ldots,\texttt{max\_iter}$}
        \FOR{$i = 1, \ldots, n$}
        \STATE Let $a'_i=\mathbf{0}$
        \STATE Let $\mathcal{BR}_i(\mathbf{a}_{-i}) = \argmax_{a_i \in A_i} u_i(a_i, \mathbf{a}_{-i})$ \hfill \doc{\# Recall that $\sum_{j\in M}a_{ij}=k_i$}
        \STATE $a'_{ij}=1$ for $j\in M'$, where $M'\in \mathcal{BR}_i(\mathbf{a}_{-i})$ \hfill \doc{\# $|M'|=k_i$}
        \STATE Update $a_i=a'_i$
        \ENDFOR
        \IF{cycle detected for $\mathbf{a}$}
        \STATE Stop program
        \ENDIF
        \ENDFOR
    \end{algorithmic}
\end{algorithm}

\end{document}